\def\eqref#1{equation~\ref{#1}}
\def\Eqref#1{Equation~\ref{#1}}
\def\1{\bm{1}}
\DeclareMathAlphabet{\mathsfit}{\encodingdefault}{\sfdefault}{m}{sl}
\SetMathAlphabet{\mathsfit}{bold}{\encodingdefault}{\sfdefault}{bx}{n}
\newcommand{\KL}{D_{\mathrm{KL}}}
\DeclareMathOperator*{\argmax}{arg\,max}
\DeclareMathOperator*{\argmin}{arg\,min}
\newcommand{\vertiii}[1]{{\left\vert\kern-0.25ex\left\vert\kern-0.25ex\left\vert #1 
    \right\vert\kern-0.25ex\right\vert\kern-0.25ex\right\vert}}
\numberwithin{equation}{section}
\newcommand{\ee}{{\rm e}\hspace{1pt}}
\newcommand{\veps}{\varepsilon}
\newtheorem{thm}{Theorem}
\newtheorem{cor}[thm]{Corollary}
\newtheorem{defn}[thm]{Definition}
\newtheorem{prop}[thm]{\textit{Property}}
\title{Differentially private partitioned variational inference}
\author{\name Mikko A. Heikkil\"a \email mikkoaaro.heikkila@telefonica.com\\
      \addr Telefónica Research
      \AND
      \name Matthew Ashman \email mca39@cam.ac.uk\\
      \addr Department of Engineering\\
      \addr University of Cambridge
      \AND
      \name Siddharth Swaroop \email siddharth@seas.harvard.edu \\
      \addr School of Engineering and Applied Sciences\\
      Harvard University
      \AND
      \name Richard E. Turner \email ret26@cam.ac.uk\\
      \addr Department of Engineering\\
      University of Cambridge
      \AND
      \name Antti Honkela \email antti.honkela@helsinki.fi\\
      \addr Department of Computer Science\\
      University of Helsinki
      }
\begin{document}

\maketitle

\begin{abstract}
Learning a privacy-preserving model from sensitive data which are distributed across multiple devices is an increasingly important problem. 
The problem is often formulated in the federated learning context, with the aim of learning a single global model while keeping the data distributed. 
Moreover, Bayesian learning is a popular approach for modelling, since it naturally supports reliable uncertainty estimates. However, Bayesian learning is generally intractable even with centralised non-private data and so approximation techniques such as variational inference are a necessity. Variational inference has recently been extended to the non-private federated learning setting via the partitioned variational inference algorithm. For privacy protection, the current gold standard 
is called differential privacy. Differential privacy guarantees privacy in a strong, mathematically clearly defined sense.

In this paper, we present differentially private partitioned variational inference, the first general framework for learning a variational approximation to a Bayesian posterior distribution in the federated learning setting while minimising the number of communication rounds and providing differential privacy guarantees for data subjects.

We propose three alternative implementations in the general framework, one based on perturbing local optimisation runs done by individual parties, and two based on perturbing updates to the global model (one using a version of federated averaging, the second one adding virtual parties to the protocol), and compare their properties both theoretically and empirically. 
We show that perturbing the local optimisation works well with simple and complex models as long as each party has enough local data. However, the privacy is always guaranteed independently by each party. 
In contrast, perturbing the global updates works best with relatively simple models. Given access to suitable secure primitives, such as secure aggregation or secure shuffling, the performance can be improved by all parties guaranteeing privacy jointly.

\end{abstract}

\section{Introduction}
\label{sec:introduction}

Communication-efficient distributed methods that protect user privacy are a basic requirement for many machine learning tasks, where the performance depends on having access to sensitive personal data. 
Federated learning \citep{McMahan2016-pa, kairouz2019} is a common approach for increasing communication efficiency with distributed data by pushing computations to the parties holding the data, thereby leaving the data where they are. 
While it has been convincingly demonstrated that federated learning by itself does not guarantee any kind of privacy \citep{zhu_2019}, it can be combined with differential privacy (DP, \citealt{dwork_et_al_2006,DworkRoth}), which does provide formal privacy guarantees.

In settings which require uncertainty quantification as well as high prediction accuracy, Bayesian methods are a natural approach. However, Bayesian posterior distributions are often intractable and need to be approximated. Variational inference (VI, \citealt{Jordan1999-nq, Wainwright2008-oz}) is a well-known and widely used approximation method based on solving a related optimisation problem; the most common formulation minimises the Kullback-Leibler divergence between the approximation and the true posterior.

In this paper, we focus on privacy-preserving federated VI in the cross-silo setting \citep{kairouz2019}. We consider a common and important setup, where the parties (or `clients') have sensitive horizontally partitioned data, i.e., local data with shared features, and the aim is to learn a single model on all the data. Such a setting arises, for example, when several hospitals want to train a joint model on common features without sharing their actual patient data with any other party. 
Our main problem is to learn a posterior approximation from the partitioned data under DP, while trying to minimise the amount of server-client communications.

We propose a general framework to solve the problem based on partitioned variational inference (PVI, \citealt{Ashman_2022}). 
On a conceptual level, the main steps of PVI are the following: i) server sends current global model to clients, ii) clients perform local optimisation using their own data, iii) clients send updates to server, iv) server updates the global approximation. 
In our solution, called differentially private partitioned variational inference (DP-PVI), the clients enforce DP either independently or, given access to some suitable secure primitive, jointly with the other clients. We consider three different implementations of DP-PVI, one based on perturbing the local optimisation at step ii) of PVI (called DP optimisation), and two on perturbing the model updates at step iii) of PVI (called local averaging and virtual PVI clients).

Crucially, we empirically demonstrate that with our approaches the number of communication rounds between the server and the clients can be kept significantly lower than in the existing DP VI baseline solution, that requires communicating gradients for each regular VI optimisation step, while achieving nearly identical performance in terms of accuracy and held-out likelihood. 
Additionally, while the baseline requires communicating the gradients using some suitable secure primitive to achieve good utility, our approaches do not require any such primitives, 
although they can be easily combined with two of our approaches (local averaging and virtual PVI clients).%
\footnote{In this paper, 
instead of considering any specific secure primitive implementation, such as a secure aggregator or a secure shuffler, we assume access to a black-box trusted aggregator in the comparisons.}
Finally, compared to the communication minimising 
baselines given by DP Bayesian committee machines, that require a single communication round between the server and the clients, our solutions provide clearly better prediction accuracies and held-out likelihoods in a variety of settings.

\paragraph{Our contribution}

Our main contributions are the following:
\begin{itemize}
    \item We introduce DP-PVI, a communication-efficient general approach for DP VI on horizontally partitioned data.
    \item Within the general framework, we propose three differing  implementations of DP-PVI, 
    one based on perturbing local optimisation, termed DP optimisation, and two based on perturbing model updates, that we call local averaging and virtual PVI clients.
    \item Compared to the baseline of standard (global) DP VI, we experimentally show that our proposed implementations need orders of magnitude fewer server-client communication rounds, and can be trained without any secure primitives, while achieving comparable model utility under various settings. 
    Compared to the DP Bayesian committee machine baselines, all our implementations can significantly improve on the resulting approximation quality under various models and datasets.
    \item We compare the relative advantages and disadvantages of our 
    methods, both theoretically and experimentally, and make recommendations on how to choose the best method based on the task at hand:
    \begin{itemize}
    \item We demonstrate that no single implementation outperforms the others in all settings.
    \item We show that DP optimisation works well when there is enough local data available, on both simple and complex models. However, it not benefit from access to secure primitives. 
    It can therefore lag behind the other methods when there are many clients without much local data, but with access to a trusted aggregator.
    \item In contrast, local averaging and virtual PVI clients work best with relatively simple models, but can struggle with more complex ones. Since they can directly benefit from access to a trusted aggregator, they can outperform DP optimisation in a setting with many clients, little local data on each, and a trusted aggregator available. We find that using virtual PVI clients tends to be more stable than local averaging.
    \end{itemize}
\end{itemize}

\section{Related work}
\label{sec:related-work}

In the non-Bayesian setting, federated learning, with and without privacy concerns, has seen a lot of recent research \citep{kairouz2019}. 

VI without privacy has been considered in a wide variety of configurations, a considerable proportion of which can be interpreted as specific implementations of PVI. With just a single client, the local free-energy is equivalent to the global free-energy and global VI is recovered \citep{hinton+vancamp:1993}. PVI with multiple clients unifies many existing local VI methods, in which each factor involves a subset of datapoints and may also include only a subset of variables over which the posterior is defined. This includes variational message passing \citep{winn+bishop:2005} and its extensions \citep{knowles+minka:2011, archambeau+ermis:2015, wand:2014}, and is related to conjugate-computation VI \citep{khan+li:2018}. When only a single pass is made through the clients, PVI recovers online VI \citep{ghahramani:2000}, streaming VI \citep{broderick+al:2013} and variational continual learning \citep{nguyen+al:2018}. See \cite{Ashman_2022} for a more detailed overview of the relationships between PVI and these methods.

There is a rich literature on Bayesian learning with DP guarantees in various settings. Perturbing sufficient statistics in exponential family models has been applied both with centralised data \citep{Dwork_Smith_2010,foulds16theory,zhang16differential,Honkela_2018} as well as with distributed data \citep{heikkila_2017}. In the centralised setting, \cite{Dimitrakakis2014} showed that under some conditions, drawing samples from the true posterior satisfies DP. Posterior sampling under DP has been extended \citep{zhang16differential,Geumlek_2017,Dimitrakakis2017}, and generalised also based on, e.g., Langevin and Hamiltonian dynamics \citep{wang15,li_2019,raisa_2021} as well as on general Markov chain Monte Carlo \citep{heikkila_2019,yildrim_2019}. 
As an orthogonal direction for combining Bayesian learning with DP, \cite{Bernstein2018-tm} proposed a method for taking the DP noise into account when estimating the posterior with exponential family models to avoid model overconfidence.

DP-VI has been previously considered in the centralised setting. \cite{Jalko2017} first introduced DP-VI for non-conjugate models based on DP-SGD, while \cite{Park2016} proposed a related variational Bayesian expectation maximization approach based on sufficient statistics perturbation for conjugate exponential family models.

Also in the centralised setting, \cite{vinaroz_22} recently proposed DP stochastic expectation propagation, an alternative approximation method to VI that also has some close ties to the PVI framework (see e.g.~\citealt{Ashman_2022}), based on natural parameter perturbation. 
While there are several technical differences in how DP is guaranteed, 
and \cite{vinaroz_22} do not discuss the distributed setting or propose a federated algorithm, we would expect that there is no fundamental reason why 
their approach could not be made to work 
in the federated setting as well, given enough changes to the centralised algorithm. 
With reasonable privacy parameters, we would expect comparisons to reflect the general properties of the underlying non-DP approaches (see e.g.~\citealt{minka2005divergence,Li2015-sep,Ashman_2022} and the references therein for a discussion on the properties of non-DP variational and EP-style algorithms). The application of such methods to the private federated setting would be an interesting direction for future work.

In the non-Bayesian DP literature, the basic idea in our local averaging and virtual client approaches is close to the subsample and aggregate approach proposed by \cite{Nissim2007-db}. 
In the same vein as our local averaging, \cite{Wei2020-ti} proposed training a separate model for each single data point and combining the models by averaging the parameters in the context of learning neural network weights under DP. They do not consider other possible partitions or the trade-off between DP noise and the estimator variance.

\section{Background}
\label{sec:background}

In this section we give a short overview of the most important background knowledge, starting with PVI in Section~\ref{sec:PVI_background} and continuing with DP in Section~\ref{sec:DP-background}.

\subsection{Partitioned variational inference (PVI)}
\label{sec:PVI_background}

In Bayesian learning, we are interested in the posterior distribution 
$$p(\theta | x) \propto p(x | \theta) p(\theta) ,$$
where $p(\theta)$ is a prior distribution, $p(x|\theta)$ is a likelihood, $x \in \mathcal X^n$ is some dataset of size $n$, and $\theta$ are the model parameters. Note that these are all different distributions, but we overload the notation in a standard way 
and identify the distributions by their arguments to keep the writing less cumbersome. For example, instead of $p_{\theta}(\theta)$ we simply write $p(\theta)$ for the prior. In this paper, we typically have $\theta \in \mathbb R^{d}$ for some $d$, and each element $x_i \in \mathbb R^{d'}, i=1,\dots,n$ for some $d'$.

When the posterior is in the exponential family of distributions, it is always tractable (see, e.g., \citealt{Bernardo1994-fb}):
\begin{defn}
A distribution over $x \in \mathcal X^n$, 
indexed by a parameter vector $\theta \in \Theta \subset \mathbb R^d$ is an exponential family distribution, if it can be written as 
\begin{equation}
    p(x | \theta) =  h(x) \exp \left( T(x) \cdot \eta(\theta) - A(\eta(\theta)) \right)
\end{equation}
for some functions $h: \mathcal X^n \rightarrow \mathbb R, T: \mathcal X^n \rightarrow \mathbb R^d, A: \Theta \rightarrow \mathbb R$. When $\eta(\theta) = \eta$, the parameters $\eta$ are called natural parameters, $T$ are sufficient statistics, $A$ is the log-partition function, and $h$ is a base measure.
\end{defn}

When the posterior is not in the exponential family, however, we need to resort to approximations. 
VI is a method for 
approximating the true posterior by solving an optimization problem over some tractable family of distributions 
(see e.g. \citealt{Jordan1999-nq} for an introduction to VI and \citealt{zhang_2019_advances_in_vi} for a survey of more recent developments).

Writing $q(\theta|\lambda)$ for the approximating distribution parameterised with variational parameters $\lambda \in \mathbb R^{d_{VI}}$ for some $d_{VI}$, 
the main idea in VI is to find optimal parameters $\lambda_{VI}^*$ that minimise some notion of distance between the approximation and the true posterior, with the most common choice being  Kullback-Leibler divergence:
\begin{equation}
\label{eq:KL_q_posterior}
\lambda_{VI}^* = \argmin_{q \in \mathcal Q} \big[ \KL(q(\theta | \lambda) \| p(\theta|x)) \big],
\end{equation}
where $\mathcal Q$ is some tractable family of distributions. However, since the optimisation problem in \Eqref{eq:KL_q_posterior} is usually still not easy-enough for solving directly, the actual optimisation is typically done by maximising the so-called evidence lower bound (ELBO, also called negative variational free-energy): 
\begin{equation}
\label{eq:ELBO}
\lambda^*_{VI} = \argmax_{q \in \mathcal Q} \big[ \mathbb E_{q}[\log p(x|\theta)] - \KL(q(\theta | \lambda) \| p(\theta)) \big].
\end{equation}
It can be shown that 
the optimal solution $\lambda_{VI}^*$ that solves \Eqref{eq:ELBO} also solves the original minimization problem in \Eqref{eq:KL_q_posterior}.

In the setting we consider, 
there are $M$ clients with shared features, and client $j$ holds $n_j$ samples. 
In the PVI framework \citep{Ashman_2022}, this federated learning problem is solved iteratively. We start by defining the following variational approximation: 
\begin{equation}
\label{eq:PVI-approximation}
    q(\theta | \lambda) = \frac{1}{Z_q} p(\theta) \prod_{j=1}^M t_j (\theta | \lambda_j) \simeq \frac{1}{Z} p(\theta) \prod_{j=1}^M p(x_j |\theta) = p(\theta | x), 
\end{equation}
where $\lambda, \lambda_j$ are variational parameters, 
$Z_q,Z$ are normalizing constants, $x_j$ is the $j$th data shard, and $t_j$ are client-specific factors refined over the algorithm run. 
The basic structure of the general PVI algorithm is given in Algorithm~\ref{alg:PVI}.

Note that in Algorithm~\ref{alg:PVI}, during the local optimisation (\Eqref{eq:pvi_ELBO}) the cavity distribution (\Eqref{eq:pvi-cavity}) works as an effective prior: the variational parameters for the factors $t_j, j\not=m$ are kept fixed to their previous values.

\begin{algorithm}
\begin{algorithmic}[1]
\caption{\label{alg:PVI} Non-private PVI \citep{Ashman_2022}}
\Require Number of global updates $S$, prior $p(\theta)$, initial client-specific factors $t^{(0)}_j, j=1,\dots,M$.
\For{$s=1$ to $S$}
    \State Server chooses a subset $b^{(s)} \subseteq \{1,\dots,M\}$ of clients according to an update schedule and sends the current global model parameters $\lambda^{(s-1)}$.
    \State Each chosen client $m \in b^{(s)}$ finds a new set of parameters by optimising the local ELBO:
    \begin{equation}
    \label{eq:pvi_ELBO}
    \lambda^* = \argmax_{q \in \mathcal Q} \big[ \mathbb E_{q} [ \log p(x_{m} | \theta )] -  \KL(q(\theta | \lambda) \| p^{(s-1)}_{\setminus m} (\theta) ) \big],
    \end{equation}
    where $p^{(s-1)}_{\setminus m}$ is the so-called cavity distribution:
    \begin{equation}
    \label{eq:pvi-cavity}
    p^{(s-1)}_{\setminus m} (\theta) \propto p(\theta) \prod_{j \not = m}^M t_j (\theta | \lambda_j^{(s-1)}) .
    \end{equation}
    \State Each chosen client sends an update $\Delta t_m^{(s)} (\theta)$, given by 
    \begin{equation}
    \label{eq:PVI-client-update}
    \Delta t_m^{(s)}(\theta) \propto \frac{t_m (\theta | \lambda_m^*)}{t_m (\theta | \lambda_m^{(s-1)})} \propto \frac{q(\theta|\lambda^*)}{q(\theta |\lambda^{(s-1)})} ,
    \end{equation}
    to the server.
    \State Server updates the global model by incorporating the updated local factors: $$q(\theta | \lambda^{(s)}) \propto q(\theta | \lambda^{(s-1)}) \prod_{m \in b^{(s)}} \Delta t_m^{(s)} (\theta) .$$
\EndFor
\State \Return Final variational approximation $q(\theta | \lambda^{(S)})$.
\end{algorithmic}
\end{algorithm}

As a high-level overview, the PVI learning loop consists of the server sending current model to clients, the clients finding new local parameters via local optimisation, the clients sending an update to the server, and the server updating the global approximation.

Depending on the update schedule different variants of PVI are possible. In this paper, we use \emph{sequential} PVI, where each client is visited in turn, and \emph{synchronous} PVI, where all clients update in parallel (see \citealt{Ashman_2022} for more discussion on the PVI variants). 
The main idea in PVI is that the information from other sites is transmitted to client $m$ via the other $t$-factors, while client $m$ runs optimisation with purely local data. This reduces the number of server-client communications by pushing more computation to the clients.

\cite{Ashman_2022} show that PVI has several desirable properties that connect it to the standard non-distributed (global) VI. Most importantly, optimising the local ELBO as in \Eqref{eq:pvi_ELBO} can be shown to be equivalent to a variational KL optimisation, and a fixed point of PVI is guaranteed to be a fixed point of the global VI.

\subsection{Differential privacy (DP)}
\label{sec:DP-background}

DP is essentially a robustness guarantee for stochastic algorithms (see e.g. \citealt{DworkRoth} for an introduction to DP and discussion on the definition of privacy). Formally we have the following:
\begin{defn}[\citealt{dwork_et_al_2006,dwork2006our}]
    \label{def:dp}
	Let $\varepsilon > 0$ and $\delta \in [0,1]$. 
	A randomised algorithm $\mathcal{A} \, : \, \mathcal{X}^n \rightarrow \mathcal{O}$ is  $(\veps, \delta)$-DP 
	if for every neighbouring $x,x' \in \mathcal X^n$ and 
	every measurable set $E \subset \mathcal{O}$, 
	\begin{equation*}
		\begin{aligned}
			&\mathrm{Pr}( \mathcal A (x) \in E ) \leq \ee^\varepsilon \mathrm{Pr} (\mathcal A (x') \in E ) + \delta.
		\end{aligned}
	\end{equation*}
\end{defn}
The basic idea in the definition is that any single individual should only have a limited effect on the output. When this is guaranteed, the privacy of any given individual is protected, since the result would have been nearly the same even if that individual's data had been replaced by an arbitrary sample. Definition~\ref{def:dp} formalises this idea by requiring that the probability of seeing any given output is nearly the same with 
any closely-related input dataset (the neighbouring datasets $x, x'$). 
The actual level of protection depends on the privacy parameters $\varepsilon, \delta$: larger values mean less privacy.

The type and granularity of the privacy guarantee can be tuned by choosing an appropriate neighbourhood definition. Typical examples include sample-level  ($x,x'$ differ by a single sample) and user-level ($x,x'$ differ by a single user's data) neighbourhoods. 
In this work, we use the bounded neighbourhood definition, which is also known as substitution neighbourhood, and assume that each individual has a single sample in the full combined training data, i.e., datasets $x,x'$ are neighbours, if $|x|=|x'|$, and they differ by a single sample. With these definitions, individual privacy guarantees correspond to sample-level DP.

DP has several nice properties as a privacy guarantee, but the most important ones for our purposes are composability (repeated use of the same sensitive data erodes the privacy guarantees in a controllable manner), and immunity to post-processing (if the output of a stochastic algorithm is DP, then any stochastic or deterministic post-processing results in the same or stronger DP guarantees).

We use the well-known Gaussian mechanism, that is, adding i.i.d. Gaussian noise with equal variance to each component of a query vector, as a basic privacy mechanism:
\begin{defn}[Gaussian mechanism, \citealt{dwork2006our}]
    \label{def:Gaussian-mehanism}
    Let $f: \mathcal X^n \rightarrow \mathcal \mathbb R^d$ be a function s.t. 
    for neighbouring $x,x' \in \mathcal X^n$, there exists a constant $C > 0$ satisfying 
    $$\sup_{x,x'} \|f(x) - f(x') \|_2 \leq C .$$
    A randomised algorithm $\mathcal G: \mathcal G(x) = f(x) + \xi$, where 
    $\xi \sim \mathcal N(0, \sigma I_d)$ is called the Gaussian mechanism.
\end{defn}

When a privacy mechanism, e.g., the Gaussian mechanism, is run by first subsampling a minibatch of the full data, and running the mechanism using only the minibatch instead of the full data, the mechanism is referred to as a \emph{subsampled mechanism}. For subsampling, we use sampling without replacement:
\begin{defn}[Sampling without replacement]
    \label{def:sampling-wor-def}
    A randomised function $WOR_b: \mathcal X^n \rightarrow \mathcal X^b$ is a \emph{sampling without replacement} subsampling function, if it maps a given dataset into a uniformly random subset of size $b$ of the input data.
\end{defn}

The main benefit for privacy when using data subsampling is the effect of privacy amplification, i.e., the additional randomisation due to the subsampling enhances the privacy guarantees depending on the subsampling method and the \emph{subsampling fraction} given by $q_{sample} = \frac{b}{n}$, where $b$ is the minibatch size and $n$ is the total data size in Definition~\ref{def:sampling-wor-def}. Given a base mechanism $\mathcal A_{\sigma}$ and a minibatch size $b$, the subsampled mechanism using sampling without replacement is the combined mechanism $\mathcal A_{\sigma} \circ WOR_b$.

To quantify the total privacy resulting from (iteratively) running (subsampled) DP algorithms, we use the following privacy accounting oracle:
\begin{defn}[Accounting Oracle]
    \label{def:oracle}
    An \emph{accounting oracle} is a function $\mathbb O$ that
    evaluates $(\epsilon, \delta)$-DP privacy bounds for compositions
    of (subsampled) mechanisms. Specifically, given $\delta$,
    a sub-sampling ratio $q_{sample} \in (0,1]$, the number of iterations $T \geq 1$ and a base
    mechanism $\mathcal A_{\sigma}$, the oracle gives an $\epsilon$, such
    that a $T$-fold composition of $\mathcal A_{\sigma}$ using sub-sampling
    with ratio $q_{sample}$ is $(\epsilon, \delta)$-DP, i.e.,
    \[\mathbb{O}:(\delta, q_{sample}, T, \mathcal A_{\sigma}) \mapsto \epsilon.\]
  \end{defn}

In the experiments, we use the Fourier accountant \citep{koskela2020} as an accounting oracle to keep track of the privacy parameters, since it can numerically establish an upper bound for the total privacy loss with a given precision level.

\section{Differentially private partitioned variational inference}
\label{sec:DP-PVI}

In the setting we consider, there are $M$ parties or clients connected to a central server, with client $j$ holding some amount $n_j$ of data (we assume there is exactly one sample per individual protected by DP in the full joint data) with common features (horizontal data partitioning). The clients do not want to share their data directly but agree to train a model given DP guarantees. 
The DP guarantees are enforced on a sample-level, that is, we assume that any given individual we want to protect has a single data sample that is held by exactly one client. 
The central server aims to learn a single model from the clients' data, while minimising the number of communication rounds between the server and the clients. 

As discussed in Section~\ref{sec:PVI_background}, the PVI framework allows for effectively reducing the number of global communication rounds by pushing more computation to the clients. This also enables several options for guaranteeing DP on the client side, either by each client alone or jointly with the other clients via secure primitives.

We consider two general approaches the clients can use for enforcing DP in PVI learning:
\begin{enumerate}
    \item Perturbing the local optimisation (step 3 in Algorithm~\ref{alg:PVI}),
    \item Perturbing the model parameter updates (step 4 in Algorithm~\ref{alg:PVI}).
\end{enumerate}

The first option, which we term \emph{DP optimisation}, relies on the fact that at each optimisation step in Algorithm~\ref{alg:PVI}, the local ELBO in \Eqref{eq:pvi_ELBO} only depends on the local data at the given client. To guarantee DP independently of others, each client can therefore perturb the local optimisation with a suitable DP mechanism. In practice, this approach can be implemented, e.g., using DP-stochastic gradient descent (DP-SGD) as we show in Section~\ref{sec:dp-via-optimisation}. 

For the second option, since a given client only affects the global model through the parameter updates at step 4 in Algorithm~\ref{alg:PVI}, each client can enforce DP by perturbing the update, either independently or jointly with the other clients. Besides the naive \emph{parameter perturbation}, we propose two improved alternatives in Section~\ref{sec:dp-via-parameters}. We call these approaches \emph{local averaging} and adding \emph{virtual PVI clients}.

In this paper, instead of considering any particular secure primitive like secure aggregation (see e.g. \citealt{Shamir_secret_sharing_1979,Rastogi2010-rs}) 
or secure shuffling (see e.g. \citealt{Chaum_onion_routing_1981,cheu2019distributed}), we assume a black-box trusted aggregator capable of summing reals, where necessary. In these cases we also assume that the clients themselves are honest, i.e., they follow the protocol and do not try to gain additional information during the protocol run 
(or honest but curious, that is, they follow the protocol but will try to gain information such as actual noise values used for DP randomisation during the protocol run, with minor modifications to the relevant bounds). Any actual implementation would need to handle problems arising, for example, from finite precision (see e.g. \citealt{Agarwal2021-gt,Chen2022-rb} and references therein for a discussion on implementing distributed DP). These considerations apply equally to all variants, and hence do not affect their comparisons or our main conclusions. We leave these issues to future work.

Table~\ref{table:pvi-approaches} highlights the most 
important DP noise properties of our proposed solutions: whether the DP noise level can be affected by the local data size (intuitively, we could hope that guaranteeing DP with plenty of local data gives better utility), and whether the 
approach can benefit from access to a trusted aggregator 
(this enables the clients to guarantee DP jointly, so the total noise level can be less than when every client enforces DP independently).

\begin{table}[H]
\centering
\begin{tabular}{|| >{\centering\arraybackslash}p{4cm} | >{\centering\arraybackslash}p{2.7cm} | >{\centering\arraybackslash}p{2.9cm} ||}
\hline
 & noise scale affected by local data size & benefit from a trusted aggregator \\
 \hline\hline
DP optimisation & {\textcolor{Green} \checkmark} & {\textcolor{red} x} \\
\hline
parameter perturbation & { \textcolor{red} x} & {\textcolor{Green} \checkmark} \\
\hline
local averaging & {\textcolor{Green} \checkmark} & {\textcolor{Green} \checkmark} \\
\hline
virtual PVI clients & {\textcolor{Green} \checkmark} & {\textcolor{Green} \checkmark} \\
\hline
\end{tabular}
\caption{\label{table:pvi-approaches} Properties of DP-PVI approaches}
\end{table}

In the rest of this section we state the formal DP guarantees for each approach and discuss their properties. 
For ease of reading, since the proofs are fairly straight-forward, all proofs as well as the properties of non-DP local averaging can be found in Appendix~\ref{sec:appendix-proofs}.

\subsection{Privacy via local optimisation: DP optimisation}
\label{sec:dp-via-optimisation}

To guarantee DP during local optimisation, one option is to use differentially private stochastic gradient descent (DP-SGD) \citep{Song2013,Bassily2014,Abadi2016}: 
for every local optimisation step, we clip each per-example gradient and then add Gaussian noise with covariance $\sigma^2 I$ to the sum. 
The formal privacy guarantees are stated in  Theorem~\ref{thm:dp-sgd-privacy}.

\begin{thm}
\label{thm:dp-sgd-privacy}
Running DP-SGD for client-level optimisation in Algorithm~\ref{alg:PVI}, using subsampling fraction $q_{sample} \in (0,1]$ on the local data level for $T$ local optimisation steps in total, with 
$S$ global updates interleaved with the local steps, the resulting model 
is $(\varepsilon,\delta)$-DP, with $\delta \in (0,1)$ s.t. $\varepsilon = \mathbb O (\delta, q_{sample}, T, \mathcal G_{\sigma})$.

\end{thm}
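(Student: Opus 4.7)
The plan is to reduce the analysis of the full federated protocol to a single-client, single-individual analysis and then invoke composition of the subsampled Gaussian mechanism. Since we are working with sample-level DP under the substitution neighbourhood, any pair of neighbouring datasets $x,x'$ differs in exactly one sample, and by assumption this sample lives on exactly one of the $M$ clients; call it client $m$. All other clients hold identical data under $x$ and $x'$, so any randomness internal to them, any gradients they compute, and any updates they send to the server have exactly the same distribution in both worlds. Hence the only source of privacy loss is client $m$'s computation.

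First I would describe the view of the adversary as the transcript of all messages exchanged: the global parameters $\lambda^{(s)}$ sent downward, the client updates $\Delta t_j^{(s)}$ sent upward, and any internal state of the server or of other clients. Everything that does not touch the differing sample can be conditioned on; what remains to analyse is the sequence of DP-SGD steps performed by client $m$ across all $S$ global rounds, which by assumption total $T$ local steps. Each such step is an application of $\mathcal G_\sigma$ (Gaussian noise on a clipped per-example gradient sum) to a minibatch drawn via $\mathrm{WOR}_b$ with subsampling fraction $q_{\text{sample}}$ on client $m$'s local dataset, so it is exactly one invocation of the subsampled Gaussian mechanism.

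The subtlety, and the only place that needs care, is that consecutive local DP-SGD steps performed by client $m$ are \emph{adaptive}: the $s$-th local optimisation is initialised from and regularised toward the current cavity $p^{(s-1)}_{\setminus m}$, which in turn depends on all earlier updates sent by client $m$ (via the server's aggregation). I would handle this by appealing to adaptive composition for the privacy accountant: the accounting oracle $\mathbb O$ bounds the $\epsilon$ of any adaptive $T$-fold composition of $\mathcal G_\sigma$ subsampled at rate $q_{\text{sample}}$, so the dependence of later mechanism inputs on earlier outputs is allowed. The server-side aggregation in step~5 of Algorithm~\ref{alg:PVI} and the client-side update formation in \Eqref{eq:PVI-client-update} are deterministic (or data-independent random) functions of the already-released noisy quantities, so they are pure post-processing and add no privacy cost.

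Putting these pieces together, the transcript restricted to client $m$'s messages is an adaptive $T$-fold composition of the subsampled Gaussian mechanism with rate $q_{\text{sample}}$, yielding $(\varepsilon,\delta)$-DP with $\varepsilon = \mathbb O(\delta, q_{\text{sample}}, T, \mathcal G_\sigma)$; the full transcript is then a post-processing of this together with data-independent other-client randomness, so the whole algorithm satisfies the same bound. I expect the main obstacle to be making the adaptive-composition step airtight, specifically arguing that the cavity used to initialise each local optimisation is a measurable function of prior released quantities and of the (independent) randomness of the other clients, so that the standard adaptive composition theorem underlying $\mathbb O$ genuinely applies; once that is in place, the bound is immediate and the number of global rounds $S$ plays no role beyond determining how the $T$ local steps are scheduled.
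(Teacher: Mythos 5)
Your proposal is correct and follows essentially the same route as the paper's own proof: reduce to the single client $m$ holding the differing sample, treat each local DP-SGD step as one invocation of the subsampled Gaussian mechanism, dismiss the server-side aggregation as post-processing, and invoke the accounting oracle for a $T$-fold composition independent of $S$ (with parallel composition across clients handled in the paper's follow-up remark). Your treatment is in fact more careful than the paper's, which simply cites ``standard DP-SGD theory''---in particular your explicit handling of adaptivity through the cavity distribution and the transcript-conditioning argument for the non-differing clients are details the paper leaves implicit.
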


\begin{proof}
See proof \ref{thm:dp-sgd-privacy-app} in the Appendix.
\end{proof}

Although DP-SGD in general is not guaranteed to converge, there are some known utility guarantees in the empirical risk minimization (ERM) framework, e.g., for convex and strongly convex loss functions \citep{Bassily2014}. It has also been empirically shown to work well on a number of problems with non-convex losses, such as learning neural network weights \citep{Abadi2016}.

In our setting, DP-SGD can directly benefit from increasing local data size on a given client via the sub-sampling amplification property: adding more local data while keeping the batch size fixed results in a smaller sampling fraction $q_{sample}$ and hence gives better privacy.

In contrast, when using DP-SGD with a limited communication budget, it is non-trivial to derive direct privacy benefits from adding more clients to the setting. This is the case even when we assume access to a trusted aggregator, since the gradients of the local ELBO in \Eqref{eq:pvi_ELBO} only depend on a single client's data.

\subsection{Privacy via model updates}
\label{sec:dp-via-parameters}

To guarantee DP when communicating an update from client $m$ to the server at global update $s$, we can clip and perturb the change in model parameters corresponding to $\Delta t_m^{(s)}$ at step 4 in Algorithm~\ref{alg:PVI} 
directly. 
This naive \emph{parameter perturbation} approach often results in having to add unpractical amounts of noise to each query, which severely degrades the model utility. 
The problem arises because the local data size in this case will typically have no direct effect either on the DP noise level or on the query sensitivity.

To improve the results by allowing the local data size to have a direct effect on the noise addition, we propose two possible approaches that generalise the naive parameter perturbation: i) \emph{local averaging} and ii) adding \emph{virtual PVI clients}. Both are based on partitioning the local data into non-overlapping shards 
and optimising a separate local model on each, but they differ on the objective functions and on how the local results are combined after training for a global model update. 
Additionally, 
virtual PVI clients with DP requires all virtual factors to be in a common exponential family.%
\footnote{Non-DP PVI with synchronous updates has a similar restriction: all factors need to be in the same exponential family as the prior due to issues with proper normalization, see \citealt{Ashman_2022}. Therefore, when using synchronous PVI server all our approaches, including DP optimisation, inherit this assumption as well.}
As a limiting case, when using a single local data partition both methods are equivalent to the naive parameter perturbation.

Assuming a trusted aggregator, with both of our proposed methods we can scale the noise level with the total number of clients in the protocol using $\mathcal O (MS)$ server-client communications, where $M$ is the number of clients and $S$ the total number of global updates, the same number as running non-DP PVI with synchronous updates.

Next, we present the methods and show that they guarantee DP, 
starting with local averaging in Section~\ref{sec:local-avg} and continuing with virtual PVI clients in Section~\ref{sec:virtual-clients}.

\subsubsection{Local averaging}
\label{sec:local-avg}

Algorithm~\ref{alg:local_avg} describes the main steps needed for running (non-private) PVI with local averaging.

\begin{algorithm}
\begin{algorithmic}[1]
\caption{\label{alg:local_avg} PVI with local averaging}
\State Each client $m=1,\dots,M$ partitions its local data into $N_m$ non-overlapping shards.
\For{$s=1$ to $S$}
\State Server chooses a subset $b^{(s)} \subseteq \{1,\dots,M\}$ of clients according to an update schedule and sends the current global model parameters $\lambda^{(s-1)}$. 
\State Each chosen client $m \in b^{(s)}$ finds $N_m$ sets of new parameters by optimising the local objectives all starting from a common initial value (the previous global model parameters $\lambda^{(s-1)}$):
    \begin{equation}
    \label{eq:locAvg-ELBO}
    \lambda_{m_k}^* = \argmax_{q \in \mathcal Q} \bigg[ \mathbb E_{q} [ \log p(x_{m,k} | \theta )] - \frac{1}{N_m} \KL(q(\theta|\lambda) \| p^{(s-1)}_{\setminus m} (\theta) ) \bigg], \quad k=1,\dots,N_m, 
    \end{equation}
    where $p^{(s-1)}_{\setminus m}$ is the cavity distribution as in \Eqref{eq:pvi-cavity}.
    The new parameters used for calculating an update for client $m$ in PVI are given by the local average:
    \begin{equation}
    \label{eq:locAvg}
    \lambda^* = \frac{1}{N_m} \sum_{k=1}^{N_m} \lambda_{m_k}^*.
    \end{equation}
\State Each chosen client sends the update $\Delta t_m^{(s)}(\theta)$, defined as 
$$\Delta t_m^{(s)}(\theta) \propto \frac{t_m(\theta | \lambda_m^*)}{t_m(\theta | \lambda_m^{(s-1)})} \propto \frac{q (\theta|\lambda^*)}{q(\theta|\lambda^{(s-1)})} ,$$ 
to the server.
\State Server updates the global model by incorporating the updated local factors: $$q(\theta|\lambda^{(s)}) \propto q(\theta|\lambda^{(s-1)}) \prod_{m \in b^{(s)}} \Delta t_m^{(s)} (\theta) .$$
\EndFor
\State \Return Final variational approximation $q(\theta|\lambda^{(S)})$.
\end{algorithmic}
\end{algorithm}

Note that the objective in \Eqref{eq:locAvg-ELBO} is the regular PVI local ELBO where the KL-term is re-weighted to reflect the local partitioning. This is equivalent to using the PVI objective with a tempered (cold) likelihood $p(x_{m,k}|\theta)^{N_m}$. 
In Appendix~\ref{sec:appendix-proofs}, we show that PVI with local averaging has the same fundamental properties, with minor modifications, as regular PVI. For example, with local averaging the local ELBO optimisation is equivalent to a variational KL optimisation, and 
a (local) optimum for local averaging is also an optimum for global VI.

\paragraph{DP with local averaging} 
Assuming $t_j, j=1,\dots,M$ are exponential family factors, in the client update at step $5$ in Algorithm~\ref{alg:local_avg}, we can write 
\begin{align}
\Delta t_m^{(s)}(\theta) &= \Delta \lambda_m^* \\
    &=  \lambda^* - \lambda^{(s-1)} \\
    &= \frac{1}{N_m} \sum_{k=1}^{N_m}\lambda_{m_k}^* - \lambda^{(s-1)} \\
    &= \frac{1}{N_m} \sum_{k=1}^{N_m} \big(\lambda_{m_k}^* - \lambda^{(s-1)} \big) .
\end{align}

We then have the following for guaranteeing DP with local averaging:
\begin{thm}
\label{thm:localAvg-privacy-single}
Assume the change in the model parameters $\| \lambda_{m_k}^* - \lambda^{(s-1)} \|_2 \leq C, k=1,\dots,N_m$ for some known constant $C$, where $\lambda_{m_k}^*$ is a proposed solution to \Eqref{eq:locAvg-ELBO}, and $\lambda^{(s-1)}$ is the vector of common initial values. 
Then releasing $\Delta \hat \lambda_m^*$ 
is $(\varepsilon,\delta)$-DP, with $\delta \in (0,1)$ s.t.~$\varepsilon = \mathbb O (\delta, q_{sample}=1, 1, \mathcal G_{\sigma})$, 
when 
\begin{align}
\label{eq:lfa_noisify_nat_params_change}
\Delta \hat \lambda_m^* &= 
\frac{1}{N_m} \big[ \sum_{k=1}^{N_m} \big( \lambda_{m_k}^* - \lambda^{(s-1)} \big) + \xi \big]  ,
\end{align}
where $\xi \sim \mathcal N (0, \sigma^2 \cdot I)$.

\end{thm}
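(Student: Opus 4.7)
The plan is to view the (un-normalised) sum $f(x_m) := \sum_{k=1}^{N_m} (\lambda_{m_k}^* - \lambda^{(s-1)})$ as a vector-valued query of client $m$'s data $x_m$, compute its $L_2$ sensitivity under the bounded (substitution) neighbourhood, apply the Gaussian mechanism, and finally observe that the division by $N_m$ (as well as the subtraction of the data-independent constant $\lambda^{(s-1)}$) is post-processing. The accounting oracle with $q_{sample}=1$, $T=1$, and base mechanism $\mathcal{G}_\sigma$ then gives the stated $(\varepsilon,\delta)$ bound.

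First I would argue about the sensitivity. Under the bounded neighbourhood assumed in Section~\ref{sec:DP-background}, a neighbouring dataset $x_m'$ differs from $x_m$ in exactly one sample, and since the shards are non-overlapping, this sample belongs to exactly one shard, say shard $k^*$. For $k\neq k^*$ the local optimisation in \Eqref{eq:locAvg-ELBO} sees identical data, so $\lambda_{m_k}^*$ is unchanged. Therefore
\begin{equation*}
\| f(x_m) - f(x_m') \|_2 = \| (\lambda_{m_{k^*}}^* - \lambda^{(s-1)}) - ((\lambda_{m_{k^*}}^*)' - \lambda^{(s-1)}) \|_2 \leq 2C,
\end{equation*}
by the triangle inequality applied to the assumed per-shard bound $\|\lambda_{m_k}^* - \lambda^{(s-1)}\|_2 \leq C$.

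Next I would add Gaussian noise $\xi \sim \mathcal{N}(0, \sigma^2 I)$ to $f(x_m)$, which is exactly the Gaussian mechanism $\mathcal{G}_\sigma$ from Definition~\ref{def:Gaussian-mehanism} applied with sensitivity $2C$. The quantity actually released, $\Delta\hat\lambda_m^* = \tfrac{1}{N_m}(f(x_m)+\xi)$, is obtained from $\mathcal{G}_\sigma(f(x_m))$ by a fixed, data-independent linear transformation, and therefore inherits the same DP guarantee by the post-processing property of DP. Since the release uses all $N_m$ shards in one shot, this corresponds to $q_{sample}=1$ and a single composition ($T=1$), so the oracle returns precisely $\varepsilon = \mathbb{O}(\delta, 1, 1, \mathcal{G}_\sigma)$.

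The only subtlety — and the step I would spell out most carefully — is the reduction to a single changed shard: one needs that (i) the shard partitioning is fixed before seeing the data (or at least does not depend on the differing sample in a way that breaks neighbourhood matching across $x_m$ and $x_m'$), and (ii) the per-shard optimisation is a deterministic function of its own shard given the cavity, so that the other $N_m-1$ contributions cancel exactly between the two neighbouring releases. With these points noted, the sensitivity bound and the appeal to the Gaussian mechanism plus post-processing complete the proof.
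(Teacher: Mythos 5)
Your proposal is correct and takes essentially the same route as the paper's proof: bound the $\ell_2$-sensitivity of the sum query by $2C$ via the triangle inequality (a neighbouring dataset under the substitution relation changes exactly one shard, hence exactly one $\lambda_{m_k}^*$), apply the Gaussian mechanism $\mathcal{G}_\sigma$ once with $q_{sample}=1$, and treat the division by $N_m$ as data-independent post-processing. Your closing remarks about the data-independent partitioning and the exact cancellation of the unchanged shards simply make explicit what the paper's ``w.l.o.g.\ only $\lambda_{m_1}^*$ is affected'' step leaves implicit.
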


\begin{proof}
See \ref{thm:localAvg-privacy-single-app} in the Appendix.
\end{proof}

For quantifying the total privacy for $S$ global updates using local averaging, 
we immediately have the following:
\begin{cor}
\label{cor:localAvg-privacy}
A composition of $S$ global updates with local averaging using a norm bound $C$ for clipping 
is $(\varepsilon,\delta)$-DP, with $\delta \in (0,1)$ s.t. $\varepsilon = \mathbb O (\delta, q_{sample}=1, S, \mathcal G_{\sigma})$.
\end{cor}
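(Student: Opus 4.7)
The plan is to derive the corollary as a direct composition result, building on Theorem~\ref{thm:localAvg-privacy-single}, the post-processing immunity of DP, and the definition of the accounting oracle $\mathbb O$ in Definition~\ref{def:oracle}. Conceptually there are two things to track: the parallel behaviour across clients within one global update, and the sequential composition across $S$ global updates.

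First I would fix a pair of neighbouring datasets $x, x' \in \mathcal X^n$ that differ in one sample. Under the sample-level bounded neighbourhood definition in use, this single differing sample is held by exactly one client, say $m^\star$. All other clients' local data, and hence their updates $\Delta \hat{\lambda}_j^{(s)}$ for $j \neq m^\star$, are distributed identically under $x$ and $x'$ at every round $s$. Therefore the privacy analysis reduces to bounding the distinguishability of the sequence of releases produced by client $m^\star$ over the $S$ global updates; everything the server subsequently does with these releases (forming the cavity, broadcasting $\lambda^{(s-1)}$, aggregating $\Delta \hat{\lambda}_m^{(s)}$'s, etc.) is post-processing of already-released randomised outputs and cannot degrade the guarantee.

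Next I would observe that each time client $m^\star$ participates in a round $s \in \{1,\dots,S\}$, it releases a single Gaussian-mechanism output $\Delta \hat{\lambda}_{m^\star}^{(s)}$ with sensitivity bounded by $C$ (by the clipping assumption inherited from Theorem~\ref{thm:localAvg-privacy-single}) and noise scale $\sigma$, using all of its local data (i.e.\ subsampling ratio $q_{sample}=1$). Since the schedule $\{b^{(s)}\}$ selects $m^\star$ in at most $S$ rounds, the view an adversary obtains of $m^\star$'s contributions is at worst a sequence of $S$ independent invocations of $\mathcal G_\sigma$ on neighbouring inputs. By the definition of the accounting oracle, this $S$-fold composition of $\mathcal G_\sigma$ with subsampling ratio $1$ is $(\varepsilon,\delta)$-DP for $\varepsilon = \mathbb O(\delta, q_{sample}=1, S, \mathcal G_\sigma)$.

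The main subtlety, rather than an obstacle, is justifying why the analysis can be done for a single ``affected'' client in isolation despite the cross-round coupling introduced by the server broadcasting the current global $\lambda^{(s-1)}$: at every round the broadcast depends only on previously released (already DP) quantities, so it is part of the adversary's view that we condition on, and the new release by $m^\star$ remains a fresh Gaussian mechanism on its private data with sensitivity still bounded by $C$. Combining this with post-processing immunity for the server's aggregation yields the claimed $(\varepsilon,\delta)$-DP guarantee, giving the statement of Corollary~\ref{cor:localAvg-privacy}.
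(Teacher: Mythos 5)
Your proof is correct and follows essentially the same route as the paper's one-line argument: invoke Theorem~\ref{thm:localAvg-privacy-single} for the per-update guarantee and compose the $S$ invocations of $\mathcal G_{\sigma}$ (with $q_{sample}=1$) via the accounting oracle, with your explicit treatment of the single affected client and of the adaptivity introduced by broadcasting $\lambda^{(s-1)}$ merely spelling out what the paper leaves implicit. One minor nit: under the substitution neighbourhood the sum query's sensitivity is $2C$ rather than $C$ (by the triangle inequality over the one changed shard), but since you delegate the per-round guarantee to the cited theorem this does not affect your argument.
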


\begin{proof}
See \ref{cor:localAvg-privacy-app} in the Appendix.
\end{proof}

As is clear from Corollary~\ref{cor:localAvg-privacy}, with local averaging we pay a privacy cost for each global update, while the local optimisation steps are free. This the opposite of the DP optimisation result in Theorem~\ref{thm:dp-sgd-privacy}. 
As mentioned in Corollary~\ref{cor:localAvg-privacy}, 
in practice we generally need to guarantee the norm bound 
in Theorem~\ref{thm:localAvg-privacy-single} by clipping the change in the model parameters.\footnote{We could also enforce DP (including without exponential family factors) by clipping and adding noise directly to the parameters instead of privatising the change in parameters; the clipping would then enforce the parameters to an $\ell_2-$norm ball of radius $C$ around the origin.}

Considering how increasing the local data size affects the DP noise level, we have the following:
\begin{thm}
\label{thm:local-avg-local-noise}
With local averaging, the DP noise standard deviation can be scaled as $\mathcal O (\frac{1}{N_m})$, where $N_m$ is the number of local partitions. Therefore, 
the effect of DP noise will vanish on the local factor level when the local dataset size and the number of local partitions grow.
\end{thm}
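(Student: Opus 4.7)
The plan is to read the noise scaling directly off the released update in Equation~\ref{eq:lfa_noisify_nat_params_change}, arguing that only the normalising factor $1/N_m$ depends on the number of shards while the Gaussian noise scale $\sigma$ needed to satisfy $(\varepsilon,\delta)$-DP does not.

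First, I would unpack the privatised quantity from Theorem~\ref{thm:localAvg-privacy-single}. We can rewrite
\begin{equation*}
\Delta \hat\lambda_m^* \;=\; \frac{1}{N_m}\sum_{k=1}^{N_m}\bigl(\lambda_{m_k}^* - \lambda^{(s-1)}\bigr) \;+\; \frac{1}{N_m}\xi,
\qquad \xi \sim \mathcal{N}(0,\sigma^2 I).
\end{equation*}
The effective additive noise is therefore $\tilde\xi := \xi/N_m \sim \mathcal{N}(0,(\sigma/N_m)^2 I)$, whose coordinate-wise standard deviation is $\sigma/N_m$. Hence as soon as $\sigma$ is shown to be independent of $N_m$, the claimed $\mathcal{O}(1/N_m)$ scaling follows immediately.

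Second, I would justify that $\sigma$ can indeed be chosen independently of $N_m$. For fixed target privacy parameters $(\varepsilon,\delta)$, Theorem~\ref{thm:localAvg-privacy-single} picks $\sigma$ via the accounting oracle applied to a single invocation of the Gaussian mechanism $\mathcal{G}_\sigma$ on the (unnormalised) sum $\sum_{k=1}^{N_m}(\lambda_{m_k}^* - \lambda^{(s-1)})$. Because each shard's clipped contribution has $\ell_2$-norm at most $C$, substituting a single data point only changes one summand, so the sensitivity of the sum is at most $C$ regardless of $N_m$. Consequently the oracle's output $\sigma$ depends on $(\varepsilon,\delta)$ and $C$ but not on $N_m$, and dividing by $N_m$ after privatisation is a post-processing step that preserves the DP guarantee while shrinking the noise.

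Finally, for the vanishing claim, I would simply take $N_m \to \infty$ (which is feasible in the regime where the local dataset size grows so that each shard retains enough samples to yield meaningful $\lambda_{m_k}^*$): the effective DP noise standard deviation on the factor update $\Delta \hat\lambda_m^*$ is $\sigma/N_m \to 0$, so the noise contribution to the $m$-th factor vanishes. There is no serious obstacle here; the only subtlety worth flagging is the independence of $\sigma$ from $N_m$, which is a direct consequence of the sensitivity analysis in Theorem~\ref{thm:localAvg-privacy-single} rather than a quantitative property one needs to re-derive.
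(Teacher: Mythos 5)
Your proposal is correct and follows essentially the same route as the paper's proof: rewrite the release as $\Delta\hat\lambda_m^* = \frac{1}{N_m}\sum_{k=1}^{N_m}(\lambda_{m_k}^*-\lambda^{(s-1)}) + \xi/N_m$ and observe that $\sigma$ is fixed by the sensitivity of the unnormalised sum (independent of $N_m$), so the effective noise standard deviation is $\sigma/N_m \to 0$; you usefully make explicit the independence of $\sigma$ from $N_m$, which the paper leaves implicit. One minor correction: under the paper's substitution (bounded) neighbourhood, replacing one sample changes one summand from $\lambda_{m_1}^*-\lambda^{(s-1)}$ to $\lambda_{m_1'}^*-\lambda^{(s-1)}$, giving sensitivity $2C$ by the triangle inequality (as in the proof of Theorem~\ref{thm:localAvg-privacy-single}), not $C$ --- immaterial to the $\mathcal{O}(1/N_m)$ scaling, since the constant is still independent of $N_m$.
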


\begin{proof}
See \ref{thm:local-avg-local-noise-app} in the Appendix.
\end{proof}

Note that Theorem~\ref{thm:local-avg-local-noise} does not say that the DP noise will necessarily vanish on the global approximation level if one client gets more data and does more local partitions, since the total noise level depends on the factors from all the clients. 
Looking only at Theorem~\ref{thm:local-avg-local-noise}, it would seem like  increasing the number of local partitions is always beneficial as it decreases the DP noise effect. 
However, this is not generally the full picture. 
\cite{Zhang2013-er} have shown that under some assumptions, the convergence rate of mean estimators (similar to the one we propose) will deteriorate when the number of partitions increases too much. In effect, having fewer samples from which to estimate each local set of parameters increases the estimator variance, which hurts convergence. 
We have experimentally confirmed this effect with local averaging (see Figure~\ref{fig:mimic-trade-off} in the Appendix).

The optimal number of local partitions therefore usually balances the decreasing DP noise level with the increasing estimator variance. However, as we show in Theorem~\ref{thm:exp-family-avg}, there are important special cases, such as the exponential family, where there is no trade-off since the number of local partitions can be increased without affecting the non-DP posterior.

\begin{thm}
\label{thm:exp-family-avg}
Assume the effective prior $p_{\setminus j}(\eta)$, and the likelihood $p(x_j|\eta), j \in \{1,\dots,M\}$ are in a conjugate exponential family, where $\eta$ are the natural parameters. Then the number of partitions used in local averaging does not affect the non-DP posterior.
\end{thm}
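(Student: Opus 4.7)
The plan is to exploit conjugacy to obtain a closed-form maximiser for each per-shard objective in \Eqref{eq:locAvg-ELBO} and then observe that the factor of $N_m$ in the per-shard optimum cancels exactly against the $\tfrac{1}{N_m}$ in \Eqref{eq:locAvg}, leaving a $\lambda^*$ that depends only on the aggregate local sufficient statistics. Throughout I would work in natural-parameter coordinates, identifying the algorithm's $\lambda$ with the theorem's $\eta$.

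First I would rewrite the objective in \Eqref{eq:locAvg-ELBO}: multiplying through by $N_m$ turns it into the ordinary variational free energy with effective prior $p_{\setminus m}^{(s-1)}(\theta)$ and tempered likelihood $p(x_{m,k}|\theta)^{N_m}$. Under the conjugate-exponential-family assumption, the (normalised) target $p_{\setminus m}^{(s-1)}(\theta)\, p(x_{m,k}|\theta)^{N_m}$ lies in the variational family $\mathcal Q$, so the KL-minimiser attains zero KL and is given in closed form by
\[
\lambda_{m_k}^* \;=\; \lambda_{\setminus m}^{(s-1)} + N_m\, T(x_{m,k}),
\]
where $T(x_{m,k}) = \sum_{i \in x_{m,k}} T(x_i)$ is the aggregate per-datapoint sufficient statistic on shard $k$.

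Substituting into \Eqref{eq:locAvg} gives
\[
\lambda^* \;=\; \frac{1}{N_m}\sum_{k=1}^{N_m}\bigl(\lambda_{\setminus m}^{(s-1)} + N_m\, T(x_{m,k})\bigr) \;=\; \lambda_{\setminus m}^{(s-1)} + T(x_m),
\]
with $T(x_m) = \sum_{i \in x_m} T(x_i)$ because the shards partition $x_m$. The right-hand side is manifestly independent of both $N_m$ and the particular partitioning chosen; it coincides with the natural parameter of the exact conjugate posterior obtained from effective prior $p_{\setminus m}^{(s-1)}$ and full local likelihood $p(x_m|\theta)$, i.e.\ with the standard conjugate PVI update.

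Since $\lambda^*$, and therefore the PVI update $\Delta t_m^{(s)}$ and the updated global approximation, does not depend on $N_m$, every iterate of Algorithm~\ref{alg:local_avg} agrees with the $N_m = 1$ run, so the non-DP posterior it produces is unaffected by the number of local partitions. The only bookkeeping obligation is an induction on $s$ showing that each cavity $p_{\setminus m}^{(s-1)}$ stays in the assumed conjugate family, which is immediate from the form of the update above since every contribution added to the global natural parameter has the required exponential-family structure. I do not expect any real obstacle: conjugacy collapses the variational sub-problem to exact natural-parameter matching, so the result reduces to a one-line algebraic cancellation once the tempering reinterpretation of \Eqref{eq:locAvg-ELBO} is made.
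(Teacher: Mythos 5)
Your proof is correct and takes essentially the same route as the paper's: an exact conjugate computation of each tempered per-shard posterior, with the factor $N_m$ multiplying the sufficient statistics cancelling against the $\tfrac{1}{N_m}$ in the average, so the averaged parameters equal those of the exact local posterior independently of the partition. The only cosmetic difference is that the paper tracks the conjugate-prior parameters $(\tau_0, n_0)$ separately (including the pseudo-count update $n_0 \mapsto n_0 + N_m n_{m,k}$), whereas you absorb both into a single natural-parameter vector via an augmented sufficient statistic; the cancellation is identical in both components, and your explicit zero-KL justification for the closed-form per-shard optimum only makes precise what the paper assumes implicitly.
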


\begin{proof}
See \ref{thm:exp-family-avg-app} in the Appendix.
\end{proof}

Finally, Theorem~\ref{thm:local-avg-secure-aggr-noise} shows that assuming a trusted aggregator, the global approximation noise level can stay constant when adding clients to the protocol, i.e., increasing the number of clients allows every individual client to add less noise while getting the same global DP guarantees.

\begin{thm}
\label{thm:local-avg-secure-aggr-noise}
Using local averaging with $M$ clients and a shared number of local partitions $N_j=N \ \forall j$ assume the clients have access to a trusted aggregator. Then for any given privacy parameters $\veps, \delta$, the noise standard deviation added by a single client can be scaled as $\mathcal O (\frac{1}{\sqrt M} )$ while guaranteeing the same privacy level.
\end{thm}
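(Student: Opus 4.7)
The plan is to exploit the fact that, with a trusted aggregator, the only output visible to any adversary (clients or outside observers) for a given global update is the aggregated sum $\sum_{m=1}^M \Delta \hat\lambda_m^{*}$, not the individual client updates. Consequently, we can distribute the total Gaussian noise required by the Gaussian mechanism across clients and rely on the additivity of independent Gaussians to reconstruct the required total noise in the aggregate.

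First, I would let each client $m$ follow the local-averaging protocol from Theorem~\ref{thm:localAvg-privacy-single} but replace its per-client noise scale $\sigma$ by some smaller $\sigma_{\text{client}}$ to be determined, so that
\begin{equation*}
\Delta \hat\lambda_m^{*} = \frac{1}{N}\Bigl[\sum_{k=1}^{N}\bigl(\lambda_{m_k}^{*}-\lambda^{(s-1)}\bigr) + \xi_m\Bigr],\qquad \xi_m \sim \mathcal N(0,\sigma_{\text{client}}^{2} I),\ \ \text{independently across }m.
\end{equation*}
The trusted aggregator then releases $\Delta \hat\lambda^{*} = \sum_{m=1}^M \Delta \hat\lambda_m^{*}$. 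By independence of the $\xi_m$, the effective noise in this release is $\tfrac{1}{N}\sum_m \xi_m \sim \mathcal N(0, M\sigma_{\text{client}}^{2}/N^{2}\,I)$, i.e.\ the aggregate behaves exactly as a single Gaussian mechanism with noise standard deviation $\sqrt{M}\,\sigma_{\text{client}}/N$.

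Next I would argue that the sensitivity of the aggregated (noiseless) quantity with respect to a single sample-level neighbour change is unchanged: because each individual's sample resides on exactly one client, swapping one sample alters only that single client's local sum while leaving the other $M-1$ clients' contributions untouched. Thus the $\ell_2$ sensitivity of $\sum_m \sum_k (\lambda_{m_k}^{*}-\lambda^{(s-1)})$ is still bounded by $C$, just as in Theorem~\ref{thm:localAvg-privacy-single}. Matching against the single-client case (which required noise scale $\sigma$ for privacy parameters $(\varepsilon,\delta)$ via $\mathbb O(\delta,1,1,\mathcal G_\sigma)$), the same $(\varepsilon,\delta)$ guarantee for the aggregated release is obtained whenever $\sqrt{M}\,\sigma_{\text{client}} = \sigma$, i.e.\ $\sigma_{\text{client}} = \sigma/\sqrt{M} = \mathcal O(1/\sqrt{M})$. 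The privacy of the full $S$-round protocol then follows from Corollary~\ref{cor:localAvg-privacy} together with post-processing, since every subsequent operation (aggregation, server update) acts only on the noised sum.

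The main obstacle is not the algebra but the threat-model bookkeeping: I need to be explicit that only the aggregated sum is observable, so that per-client noises can be combined before the privacy loss is assessed (otherwise, releasing each $\Delta \hat\lambda_m^{*}$ individually would force the noise back up to $\sigma$). This is exactly where the trusted-aggregator assumption enters, and it plugs directly into the accounting oracle $\mathbb O$ applied to the single mechanism $\mathcal G_{\sqrt M\,\sigma_{\text{client}}} = \mathcal G_\sigma$ acting on the aggregated query, with honest-but-curious clients handled by the standard observation that an individual client's own $\xi_m$ is known to it but the other $M-1$ noise terms remain unknown and still sum to a Gaussian of variance $(M-1)\sigma_{\text{client}}^{2}$, which only changes the bound by a constant factor in $M$.
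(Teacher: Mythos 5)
Your proof is correct and takes essentially the same approach as the paper: distribute the per-round Gaussian noise across the $M$ clients, use additivity of independent Gaussians behind the trusted aggregator, observe that the sensitivity of the aggregated sum is unchanged because each sample resides on exactly one client, and match the aggregate variance $M\sigma_{\text{client}}^2$ to the single-client target to obtain $\sigma_{\text{client}} = \sigma/\sqrt{M}$. One cosmetic slip: under the substitution neighbourhood the $\ell_2$ sensitivity of the noiseless sum is $2C$ rather than $C$ (as in the paper's proof of Theorem~\ref{thm:localAvg-privacy-single}), but since the same constant appears in both the single-client and aggregated settings, your variance-matching argument and the $\mathcal O(1/\sqrt{M})$ conclusion are unaffected.
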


\begin{proof}
See \ref{thm:local-avg-secure-aggr-noise-app} in the Appendix.
\end{proof}

\subsubsection{Virtual PVI clients}
\label{sec:virtual-clients}

Running (non-private) PVI with virtual clients is described Algorithm~\ref{alg:virtual_clients}.

\begin{algorithm}
\begin{algorithmic}[1]
\caption{\label{alg:virtual_clients} PVI with virtual clients}
\State Each client $m=1,\dots,M$ partitions it's local data into $N_m$ non-overlapping shards and creates corresponding virtual clients, i.e., separate factors $t_{m,k}$ with parameters $\lambda_{m,k} , k=1,\dots,N_m$.
\For{$s=1$ to $S$ }
\State Server chooses a subset $b^{(s)} \subseteq \{1,\dots,M\}$ of clients according to an update schedule and sends the current global model parameters $\lambda^{(s-1)}$.
\State Each chosen client $m \in b^{(s)}$ updates its virtual clients by locally simulating a single regular PVI update (steps 3-4 in Algorithm~\ref{alg:PVI}) with synchronous update schedule $b_m^{(s)} = \{1,\dots,N_m\}$. The optimised parameters for the $k$th virtual client are given by 
    \begin{equation}
    \label{eq:virtual-ELBO}
    \lambda_{m_k}^* = \argmax_{q \in \mathcal Q} \bigg[ 
    \mathbb E_{q} [ \log p(x_{m,k} | \theta )] - \KL(q (\theta|\lambda) \| p^{(s-1)}_{\setminus m,k} (\theta) )
    \bigg], \quad k=1,\dots,N_m, 
    \end{equation}
    where $p^{(s-1)}_{\setminus m,k}$ is the cavity distribution: 
    \begin{equation}
    \label{eq:virtual-pvi-cavity}
    p^{(s-1)}_{\setminus m,k} (\theta) \propto p(\theta) \prod_{j \not = m}^M t_j (\theta | \lambda_j^{(s-1)}) \prod_{k' \not = k}^{N_m} t_{m,k'} (\theta | \lambda_{m,k'}^{(s-1)}) .
    \end{equation}
\State Each chosen client $m$ updates the local factor and 
    sends the update $\Delta t_m^{(s)}(\theta)$, defined as 
    $$\Delta t_m^{(s)}(\theta) \propto \frac{t_m(\theta|\lambda_m^*)}{t_m(\theta|\lambda_m^{(s-1)})} \propto \prod_{k=1}^{N_m} \frac{q(\theta|\lambda_{m_k}^*)}{q(\theta|\lambda^{(s-1)})} ,$$ 
    to the server.
\State Server updates the global model by incorporating the updated local factors: $$q(\theta|\lambda^{(s)}) \propto q(\theta|\lambda^{(s-1)}) \prod_{m \in b^{(s)}} \Delta t_m^{(s)} (\theta) .$$
\EndFor
\State \Return Final variational approximation $q(\theta|\lambda^{(S)})$.
\end{algorithmic}
\end{algorithm}

The full local factor for client $m$ is now $t_m=\prod_{k=1}^{N_m} t_{m,k}$, which is updated only through the virtual factors, and only the change in the full product is ever communicated to the server. 
This means that when all the virtual factors for client $m$ are in the same exponential family,\footnote{With DP, having all factors from a single exponential family is required to bound the sensitivity.} the parameters for the full local factor $t_m$ are given by 
\begin{equation}
    \lambda_m = \sum_{k=1}^{N_m} \lambda_{m,k},
\end{equation}
where $\lambda_{m,k}$ are the parameters for the $k$th virtual factor, and $\Delta t_m^{(s)} (\theta)$ at step $5$ in Algorithm~\ref{alg:local_avg} can be written as 
$$\Delta \lambda_m^* = \sum_{k=1}^{N_m} (\lambda^*_{m_k} - \lambda^{(s-1)}) .$$

With virtual PVI clients without DP, doing both local and global updates synchronously corresponds to a regular non-DP PVI run with a synchronous server and 
$\sum_{j=1}^{M} N_j$ clients. Therefore, all the regular PVI properties \citep{Ashman_2022} derived with a synchronous server immediately hold for non-DP PVI with added virtual clients. In particular, the local ELBO optimisation in this case is equivalent to a variational KL optimisation, and any optimum of the algorithm is also an optimum for global VI.

\paragraph{DP with virtual PVI clients} 

For ensuring DP with virtual clients, again via noising the change in the model parameters as in Section~\ref{sec:local-avg}, we have:
\begin{thm}
\label{thm:virtual-clients-privacy-single}
Assume the change in the model parameters $\| \lambda_{m_k}^* - \lambda^{(s-1)} \|_2 \leq C, k=1,\dots,N_m$ for some known constant $C$, where $\lambda_{m_k}^*$ is a proposed solution to \Eqref{eq:virtual-ELBO}, and $\lambda^{(s-1)}$ is the vector of common initial values. Then releasing $\Delta \tilde \lambda_m^*$ 
is $(\varepsilon,\delta)$-DP, with $\delta \in (0,1)$ s.t. $\varepsilon = \mathbb O (\delta, q_{sample}=1, 1, \mathcal G_{\sigma})$, 
when 
\begin{equation}
\label{eq:local_pvi_noisify_nat_params_change}
    \Delta \tilde \lambda_m^* = \sum_{k=1}^{N_m} \big( \lambda_{m_k}^* - \lambda^{(s-1)} \big) + \xi, 
\end{equation}
where $\xi \sim \mathcal N (0, \sigma^2 \cdot I)$.
\end{thm}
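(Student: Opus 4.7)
The plan is to mirror the template used for Theorem~\ref{thm:localAvg-privacy-single} on local averaging, since the query being privatised here has exactly the same structural form as there, only without the outer $\tfrac{1}{N_m}$ averaging. Concretely, I would view the release as an application of the Gaussian mechanism to the vector-valued function $f(x_m) = \sum_{k=1}^{N_m} \big(\lambda_{m_k}^* - \lambda^{(s-1)}\big)$, treat the cavity distribution $p^{(s-1)}_{\setminus m,k}$ as data-independent for this single release (its parameters are fixed before the client starts the synchronous local sweep), and invoke the accounting oracle $\mathbb O$ with $q_{sample}=1$ and $T=1$.

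First, I would fix the neighbouring relation. Under the bounded/substitution neighbourhood and the assumption that every individual contributes a single sample to exactly one client, neighbouring $x_m, x_m'$ differ in a single sample. Because the shards $x_{m,1},\ldots,x_{m,N_m}$ are non-overlapping (step~1 of Algorithm~\ref{alg:virtual_clients}), the swap perturbs exactly one shard $x_{m,k^*}$, and so the corresponding optimiser output $\lambda_{m_{k^*}}^*$ is the only summand of $f$ that changes; all other virtual-client sub-problems are solved from identical inputs on the two sides.

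Second, I would bound the $\ell_2$-sensitivity. By the triangle inequality together with the hypothesis $\|\lambda_{m_k}^* - \lambda^{(s-1)}\|_2 \leq C$ applied to both the original and the substituted shard,
\begin{equation*}
\big\| f(x_m) - f(x_m') \big\|_2 \;=\; \big\| (\lambda_{m_{k^*}}^* - \lambda^{(s-1)}) - (\lambda_{m_{k^*}}^{*\prime} - \lambda^{(s-1)}) \big\|_2 \;\leq\; 2C.
\end{equation*}
The Gaussian mechanism of Definition~\ref{def:Gaussian-mehanism} calibrated to this bound then yields precisely the release in \Eqref{eq:local_pvi_noisify_nat_params_change}, and a single invocation of $\mathbb O(\delta, 1, 1, \mathcal G_\sigma)$ delivers the claimed $(\varepsilon,\delta)$ guarantee. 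Post-processing immunity covers the subsequent multiplication into the global factor on the server.

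The only delicate step is the sensitivity argument: one has to be explicit that (i) disjointness of the shards means a single-sample substitution affects exactly one summand, and (ii) the cavity $p^{(s-1)}_{\setminus m,k}$ can be treated as fixed during this release, so that each $\lambda_{m_k}^*$ is a deterministic function of its shard alone. Once those two points are pinned down, the rest is a direct copy of the local-averaging proof with the $\tfrac{1}{N_m}$ normalisation suppressed, which also explains why the noise scale $\sigma$ required to hit a target $(\varepsilon,\delta)$ is the same as for Theorem~\ref{thm:localAvg-privacy-single} but the contribution to $\Delta\tilde\lambda_m^*$ is not down-weighted by $N_m$.
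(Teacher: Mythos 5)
Your proposal is correct and takes essentially the same route as the paper: the paper's own proof of this theorem is a one-line reference to the local-averaging case (Theorem~\ref{thm:localAvg-privacy-single}), which argues exactly as you do that a single-sample substitution alters only one summand, bounds the $\ell_2$-sensitivity of the sum by $2C$ via the triangle inequality, and treats the release as one call to the Gaussian mechanism accounted by $\mathbb O(\delta, q_{sample}=1, 1, \mathcal G_{\sigma})$, with post-processing covering downstream use. Your added explicitness about shard disjointness and the cavity $p^{(s-1)}_{\setminus m,k}$ being fixed during the release is a slightly more careful writeup of the same argument, not a different one.
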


\begin{proof}
See \ref{thm:virtual-clients-privacy-single-app} in the Appendix.
\end{proof}

As an immediate result, Corollary~\ref{cor:virtual-clients-privacy} quantifies the total privacy when doing $S$ global updates using virtual PVI clients:
\begin{cor}
\label{cor:virtual-clients-privacy}
A composition of $S$ global updates with virtual PVI clients using a norm bound $C$ for clipping is $(\varepsilon,\delta)$-DP, with $\delta \in (0,1)$ s.t. $\varepsilon = \mathbb O (\delta, q_{sample}=1, S, \mathcal G_{\sigma})$.
\end{cor}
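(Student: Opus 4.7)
The plan is to derive this corollary essentially as an immediate composition argument built on top of Theorem~\ref{thm:virtual-clients-privacy-single}, using the accounting oracle from Definition~\ref{def:oracle}. The structure mirrors the proof of Corollary~\ref{cor:localAvg-privacy}: a single global update has already been shown to be a Gaussian mechanism release with a bounded-sensitivity query, and $S$ global updates amount to an $S$-fold composition of such releases on the same underlying sensitive data.

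First I would observe that enforcing the norm bound $\|\lambda_{m_k}^* - \lambda^{(s-1)}\|_2 \leq C$ by clipping guarantees the sensitivity hypothesis of Theorem~\ref{thm:virtual-clients-privacy-single} at every global update $s = 1,\dots,S$, regardless of what the local optimisation in \Eqref{eq:virtual-ELBO} actually outputs. Thus at each round the release $\Delta \tilde \lambda_m^*$ defined in \Eqref{eq:local_pvi_noisify_nat_params_change} is a single application of the Gaussian mechanism $\mathcal G_\sigma$ with no subsampling (so $q_{sample}=1$), and any subsequent processing done by the server when forming $q(\theta|\lambda^{(s)})$ is DP-preserving by immunity to post-processing.

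Next I would invoke the accounting oracle directly: a sequence of $S$ such releases is exactly a $T=S$-fold composition of the base mechanism $\mathcal G_\sigma$ at sampling ratio $q_{sample}=1$, so by Definition~\ref{def:oracle} the overall mechanism is $(\varepsilon,\delta)$-DP for $\varepsilon = \mathbb O(\delta, 1, S, \mathcal G_\sigma)$, which is exactly the claim. The argument is identical whether clients act independently or jointly via a trusted aggregator, since in either case the server observes one Gaussian-perturbed query per global update on each client's data.

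I do not expect any genuine obstacle here; the only subtlety worth being explicit about is that the $S$ compositions are on the same client's sensitive dataset (each individual sample may be touched at every global round through the virtual factors), so the accounting must go through the composition oracle rather than a naive parallel-composition argument. Once that is noted, the corollary follows from Theorem~\ref{thm:virtual-clients-privacy-single} and Definition~\ref{def:oracle} in one line.
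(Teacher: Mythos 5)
Your proposal is correct and follows essentially the same route as the paper's own proof: clipping guarantees the sensitivity hypothesis of Theorem~\ref{thm:virtual-clients-privacy-single} at every round, so the $S$ global updates form an $S$-fold composition of the Gaussian mechanism with $q_{sample}=1$, handled directly by the accounting oracle. Your additional remarks on post-processing and on why sequential (rather than parallel) composition is the right accounting are accurate but not needed beyond what the paper states.
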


\begin{proof}
See \ref{cor:virtual-clients-privacy-app} in the Appendix.
\end{proof}

As with local averaging in Corollary~\ref{cor:localAvg-privacy}, and contrasting with DP optimisation in Theorem~\ref{thm:dp-sgd-privacy}, using Corollary~\ref{cor:virtual-clients-privacy} we pay a privacy cost for each global update, but the local optimisation steps are free. And as with local averaging, we usually need to guarantee the assumed norm bound by clipping.

Note that unlike with local averaging in Theorem~\ref{thm:local-avg-local-noise}, the noise variance in \Eqref{eq:local_pvi_noisify_nat_params_change} will stay constant with increasing number of local partitions $N_m$. Increasing the number of partitions will decrease the relative effect of the noise if it increases the non-DP sum.

Assuming access to a trusted aggregator, Theorem~\ref{thm:virtual-clients-secure-aggr-noise} is a counterpart to Theorem~\ref{thm:local-avg-secure-aggr-noise} with local averaging: again, the global approximation noise level can stay constant when adding clients to the protocol, meaning that each individual client needs to add less noise while maintaining the same global DP guarantees. The main difference is that with virtual PVI clients each client can choose the number of local partitions freely.
\begin{thm}
\label{thm:virtual-clients-secure-aggr-noise}
Assume there are $M$ real clients adding virtual clients, and access to a trusted aggregator.  Then for any given privacy parameters $\veps, \delta$, the noise standard deviation added by a single client can be scaled as $\mathcal O (\frac{1}{\sqrt M} )$ while guaranteeing the same privacy level.
\end{thm}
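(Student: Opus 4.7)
The plan is to leverage the trusted aggregator to turn $M$ independent per-client Gaussian mechanisms into a single aggregated Gaussian mechanism, then invoke the additivity of Gaussian variances. Concretely, suppose each client $m$ perturbs its contribution as in \Eqref{eq:local_pvi_noisify_nat_params_change} with its own noise $\xi_m \sim \mathcal{N}(0,\sigma^2 I)$ drawn independently, using a common clipping bound $C$. The trusted aggregator only reveals the sum
\begin{equation*}
    \sum_{m=1}^{M} \Delta \tilde \lambda_m^*
    \;=\;
    \sum_{m=1}^{M}\sum_{k=1}^{N_m}\bigl(\lambda_{m_k}^* - \lambda^{(s-1)}\bigr) \;+\; \sum_{m=1}^{M} \xi_m,
\end{equation*}
and by independence of the $\xi_m$'s, the total additive noise is distributed as $\mathcal{N}(0,\, M\sigma^2 I)$.

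Next I would bound the sensitivity of the deterministic part of the aggregated query. Since by assumption each individual record lives at exactly one real client, swapping a single sample in the combined dataset changes the sum in only one term $\sum_k (\lambda_{m_k}^* - \lambda^{(s-1)})$. The clipping assumption from Theorem~\ref{thm:virtual-clients-privacy-single} (inherited here as in Corollary~\ref{cor:virtual-clients-privacy}) bounds the $\ell_2$ change of that term by $C$, so the sensitivity of the aggregated sum is also $C$, identical to the single-client case. Thus the aggregated release is precisely a Gaussian mechanism $\mathcal{G}_{\sigma\sqrt{M}}$ with sensitivity $C$.

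Now I would match privacy budgets. If $\sigma_0$ is the per-round noise standard deviation required by a single client acting alone to achieve $(\varepsilon,\delta)$ via the oracle $\mathbb{O}$, then requiring $M\sigma^2 = \sigma_0^2$, i.e.\ $\sigma = \sigma_0/\sqrt{M}$, yields the same base-mechanism $\mathcal{G}_{\sigma_0}$ in the aggregated view and therefore the same $\varepsilon = \mathbb{O}(\delta, q_{sample}=1, S, \mathcal{G}_{\sigma_0})$ after $S$ global updates by composition (exactly as in Corollary~\ref{cor:virtual-clients-privacy}). Post-processing immunity then guarantees that the server's downstream update to $q(\theta|\lambda^{(s)})$ inherits the same bound. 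This gives the claimed $\mathcal{O}(1/\sqrt{M})$ scaling of the per-client noise standard deviation.

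The main obstacle, and the only subtle point, is the sensitivity argument under the honest-but-curious threat model: one must argue that the adversary only sees the aggregated sum (not individual $\Delta \tilde\lambda_m^*$ values), so that the sensitivity is determined by a \emph{single} client's clipped contribution rather than by the concatenation of all of them. This is exactly the role of the trusted aggregator assumption stated in Section~\ref{sec:DP-PVI}, and it mirrors the argument used for Theorem~\ref{thm:local-avg-secure-aggr-noise}; the only difference is that with virtual PVI clients the number of local partitions $N_m$ may vary freely across clients without affecting the sensitivity bound, since clipping is applied at the $(\lambda_{m_k}^* - \lambda^{(s-1)})$ level and the summation over $k$ is absorbed into the deterministic part of the query.
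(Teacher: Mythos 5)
Your proposal is correct and follows essentially the same route as the paper, whose proof (given explicitly for Theorem~\ref{thm:local-avg-secure-aggr-noise} and carried over ``with obvious modifications'') likewise sums $M$ independent per-client Gaussians into a single aggregated mechanism of variance $M\sigma^2$, matches $\sqrt{M}\,\sigma = \sigma_0$, and relies on a single record affecting only one client's clipped contribution so the aggregated sensitivity equals the single-client one. One cosmetic slip: under the paper's substitution neighbourhood the sensitivity of the affected term is $2C$ (triangle inequality on two clipped vectors, exactly as in the proof of Theorem~\ref{thm:virtual-clients-privacy-single}), not $C$, but since this constant enters both the single-client and aggregated mechanisms identically it cancels in your matching step and leaves the $\mathcal{O}(1/\sqrt{M})$ conclusion intact.
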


\begin{proof}
See proof \ref{thm:virtual-clients-secure-aggr-noise-app} in the Appendix.
\end{proof}

\subsection{Summary of technical contributions}
\label{sec:technical_contribution_summary}

We have presented three different implementations of DP-PVI: 
\emph{DP optimisation}, that is based on perturbing the local optimisation in Section~\ref{sec:dp-via-optimisation}, as well as  
\emph{local averaging} in Section~\ref{sec:local-avg} and 
adding \emph{virtual PVI clients} in Section~\ref{sec:virtual-clients}, 
which are both based on perturbing the global model updates.

The main idea in DP optimisation is to replace the non-DP optimisation procedure by a DP variant, our main choice being DP-SGD. Hence, DP optimisation inherits all the properties of standard DP-SGD, such as utility guarantees with convex and strongly convex losses. In the more general case of non-convex losses, DP optimisation has no known utility guarantees. Considering the privacy guarantees, with DP optimisation each client enforces DP independently  (see Theorem~\ref{thm:dp-sgd-privacy}), while the global model guarantees result from parallel composition.

In contrast, local averaging and virtual PVI clients are both based on the general idea of adding local data partitioning to mitigate the utility loss from DP noise: each client trains several models on disjoint local data shards, and then combines them for a single global update.

We first showed that local averaging does not fundamentally break the general properties of PVI (see Appendix~\ref{sec:appendix-proofs}): the local ELBO optimisation can be interpreted as a variational KL optimisation, and an optimum of the local averaging algorithm is an optimum for global VI. Privacy for local averaging can be guaranteed by clipping and noising (the change in) the local parameters (see Theorem~\ref{thm:localAvg-privacy-single}). We showed that the local average under DP approaches the non-DP average on the local factor level when the number of local data shards increases (Theorem~\ref{thm:local-avg-local-noise}), and that in the special case of conjugate-exponential family there is no price for increasing the number of local data shards (Theorem~\ref{thm:exp-family-avg}). Finally, we showed that given access to a suitable secure primitive, we can leverage the other clients to guarantee DP jointly, thereby reducing the amount of noise added by each client while keeping the global model guarantees unchanged (Theorem~\ref{thm:local-avg-secure-aggr-noise}).

Adding virtual PVI clients without DP inherits the properties of non-DP PVI with synchronous updates (e.g., the local ELBO optimisation can be interpreted as a variational KL optimisation, and an optimum of the virtual PVI clients algorithm is an optimum for global VI). We can guarantee privacy via clipping and noising (the change in) the local parameters (Theorem~\ref{thm:virtual-clients-privacy-single}). We also showed that, as with local averaging, assuming a suitable secure primitive and guaranteeing DP jointly, the amount of noise added by each client can be reduced while keeping the global model guarantees unchanged (Theorem~\ref{thm:virtual-clients-secure-aggr-noise}).

Next, we test our proposed methods in Section~\ref{sec:experiments} under various settings to see how they perform in practice.

\section{Experiments}
\label{sec:experiments}

In this Section we empirically test our proposed methods 
using logistic regression and Bayesian neural network (BNN) models. 
Our code for running all the experiments is openly available from \url{https://github.com/DPBayes/DPPVI}.

We utilise a mean-field Gaussian variational approximation in all experiments. 
For datasets and prediction tasks, we employ UCI Adult \citep{adult_data,UCI_data} predicting whether an individual has income $>50k$, as well as balanced MIMIC-III health data \citep{mimic3_orig, mimic3_v1.4,physionet} with an in-hospital mortality prediction task \citep{,Harutyunyan2017-di}.

An important and common challenge in federated learning is that the clients can have very different amounts of data, which cannot usually be assumed to be i.i.d. between the clients. 
We test the robustness of our approaches 
to such differences in the data held by each client 
by using two unbalanced data splits besides the balanced split. 
In the balanced case, the data is split evenly between all the clients. In both unbalanced data cases half of the clients have a significantly smaller share of data than the larger clients. In the first alternative the small clients only have a few minority class examples, while in the second one they have a considerably larger fraction than the large clients. 
We defer the details of the data splitting to Appendix~\ref{sec:appendix-experiments}.

In all experiments, we use sequential PVI when not assuming a trusted aggregator, and synchronous PVI otherwise. 
The number of communications is measured as the number of server-client message exchanges performed by all clients. The actual wall-clock times would depend on the method and implementation: with sequential PVI only one client can update at any one time but communications do not need encryption, while with synchronous PVI all clients can update at the same time but the trusted aggregator methods would also need to account for the time taken by the secure primitive in question. 
All privacy bounds are calculated numerically with the Fourier accountant \citep{koskela2020}.\footnote{Available from  \url{https://github.com/DPBayes/PLD-Accountant/}.} The reported privacy budgets include only the privacy cost of the actual learning, while we ignore the privacy leakage due to hyperparameter tuning. 
More details on the experiments can be found in Appendix~\ref{sec:appendix-experiments}.

We use two baselines: i) DP Bayesian committee machines (BCM with same and split prior, \citealt{Tresp_2000, Ashman_2022}), which are trained with DP-SGD and use a single global communication round to aggregate DP results from all clients, and ii) a centralised DP-VI, which can be trained in our setting with DP-SGD when we assume a trusted aggregator without any communication limits (global VI, trusted aggr., \citealt{Jalko2017}).

To measure performance, we report prediction accuracy on held-out data, as well as model log-likelihood as we are interested in uncertainty quantification. 
Model likelihood effectively measures how well the model can fit unseen data and whether it knows where it is likely to be wrong.

\paragraph{Logistic regression}

The logistic regression model likelihood is given by 
$$ p(y | \theta, x) = \sigma(\tilde x^\top \theta )^y (1 - \sigma(\tilde x^\top \theta ))^{1-y} ,$$
where $y \in \{0,1\}, \sigma(z) = \frac{1}{1+e^{-z}}$ is the sigmoid function, and $\tilde x = [1,x^\top]^\top$ is the augmented data vector with a bias term. In the experiments, we use a standard normal prior for the weights $p(\theta) = \mathcal N (\theta | 0, I) $ and Monte Carlo (MC) approximate the posterior predictive distribution.

Figures~\ref{fig:adult-10clients-comms} \&~\ref{fig:mimic-10clients-comms} 
show the results for logistic regression on Adult and balanced MIMIC-III datasets, respectively, for the three different data splits for 10 clients.

Global VI (global VI, trusted aggr.) is a very strong model utility baseline, that is approximately the best we could hope for. However, as is evident from the results, achieving this baseline requires a trusted aggregator and it uses orders of magnitude more communications than the DP-PVI implementations.\footnote{Note that the results for global VI do not depend on the data split and hence this baseline curve is the same for all the splits.}

The minimal single-round communication baselines are given by the two BCM variants (BCM same, BCM split). While they are very communication-efficient, the utility varies markedly between the different settings and they are outperformed by the DP-PVI methods (DP optimisation, local avg, virtual) in several experiments.

For DP-PVI methods, we report results separately without a trusted aggregator (DP optimisation, local avg, virtual) and with a trusted aggregator (local avg, trusted aggr.; virtual, trusted aggr.). 
In terms of communications, all DP-PVI methods are on par with each other: requiring 
around an order of magnitude more communications than the minimal communication BCM baselines, and two orders less than the strong utility global VI baseline that always assumes a trusted aggregator.

Our DP optimisation method performs overall well in terms of model utility, always performing slightly worse than the strong utility global VI baseline (global VI, trusted aggr.), being on par with DP-PVI using virtual clients (virtual), and regularly outperforming the BCM baselines (BCM same, BCM split) as well as DP-PVI with local averaging (local avg).

The performance of our local averaging method without access to a trusted aggregator (local avg) seems unstable: it lags behind our other two methods (DP optimisation, virtual) as well as the minimal communication BCM methods (especially BCM split) in several experiments. 
Given access to a trusted aggregator (local avg, trusted aggr.) the performance improves, as can be expected from the noise scaling in Theorem~\ref{thm:local-avg-secure-aggr-noise}, in several settings significantly so. While it now outperforms our DP optimisation (which does not benefit from the trusted aggregator) and the BCM baselines in many settings, it still regularly lags behind our virtual PVI clients with a trusted aggregator (virtual, trusted aggr.).

Our virtual PVI clients with no trusted aggregator (virtual) performs consistently well in terms of model utility: it lags somewhat behind the strong utility baseline (global VI, trusted aggr.), is on par with our DP optimisation, and outperforms both our local averaging (local avg) and the BCM baselines (BCM same, BCM split) in several experiments. 
With a trusted aggregator (virtual, trusted aggr.) the model utility improves in line with the noise scaling in Theorem~\ref{thm:virtual-clients-secure-aggr-noise}, sometimes by a clear margin and even reaching the strong utility baseline (global VI, trusted aggr.) in several settings.

\begin{figure}[htb]%
	\begin{center}%
	\includegraphics[width=.4\columnwidth,trim={.5cm 5cm .5cm 5cm},clip]{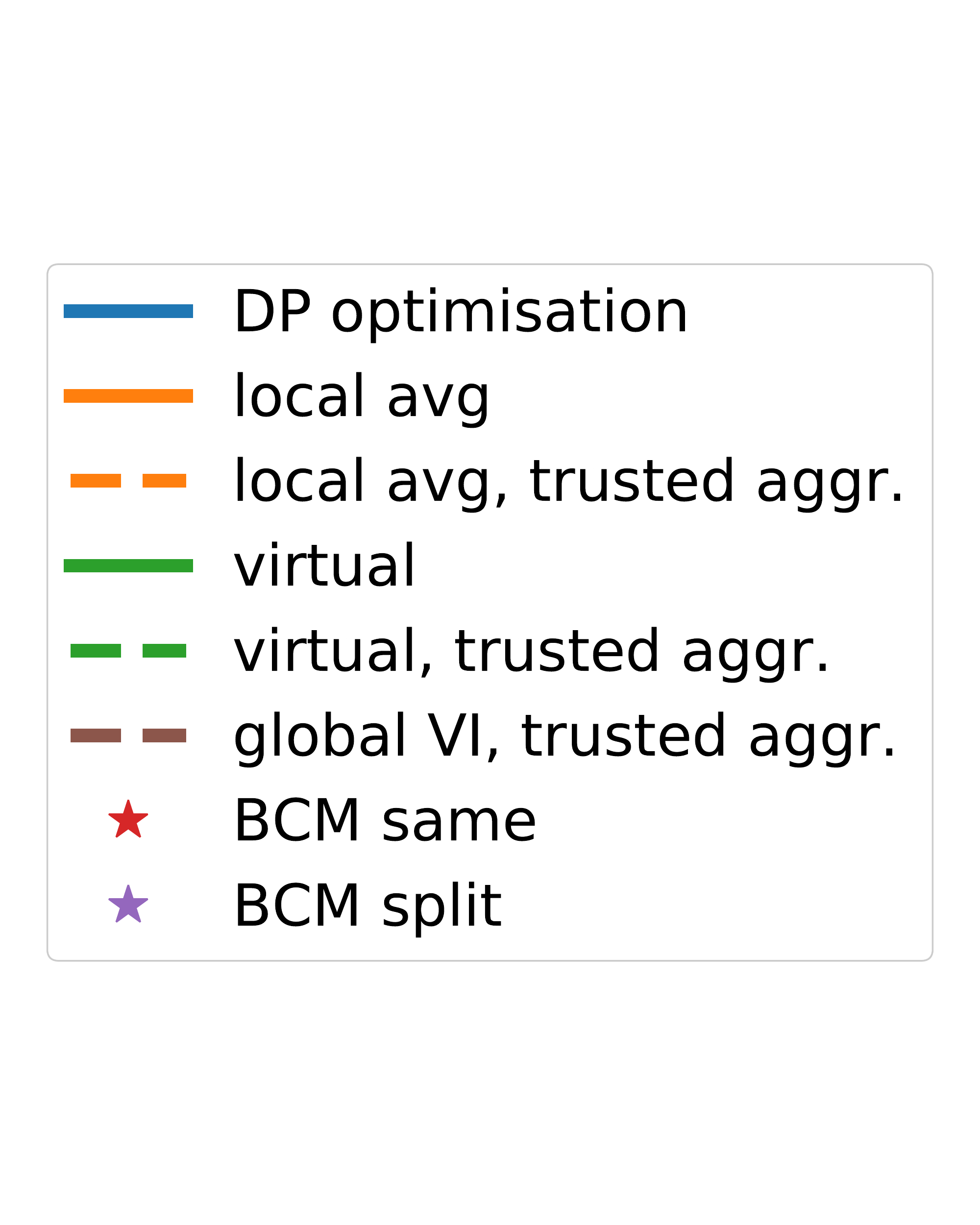}
	\hfill
	\subfigure[]{\includegraphics[width=.45\columnwidth,trim={0cm 0cm 0cm 0cm},clip]{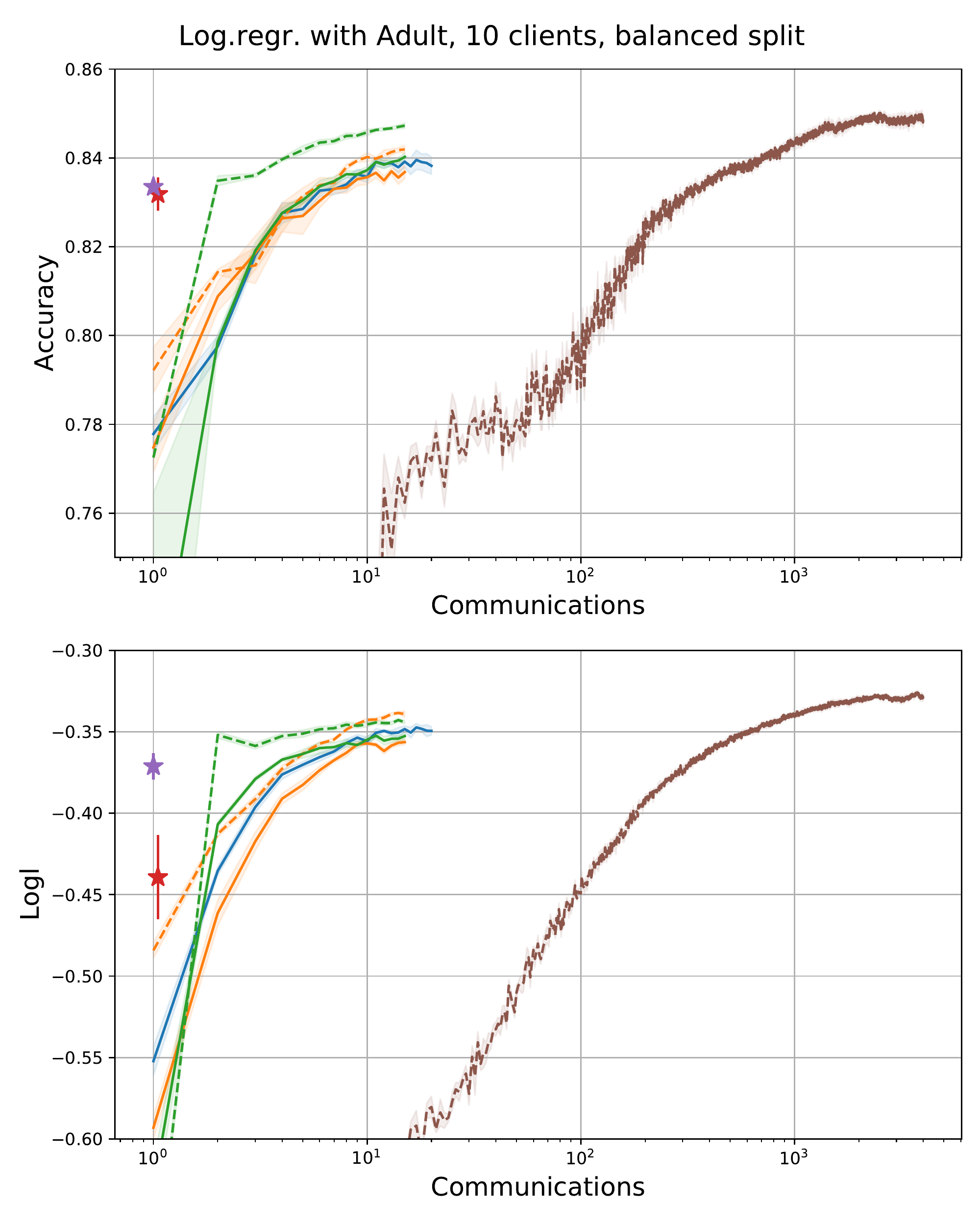}}
	\subfigure[]{\includegraphics[width=.45\columnwidth,trim={0cm 0cm 0cm 0cm},clip]{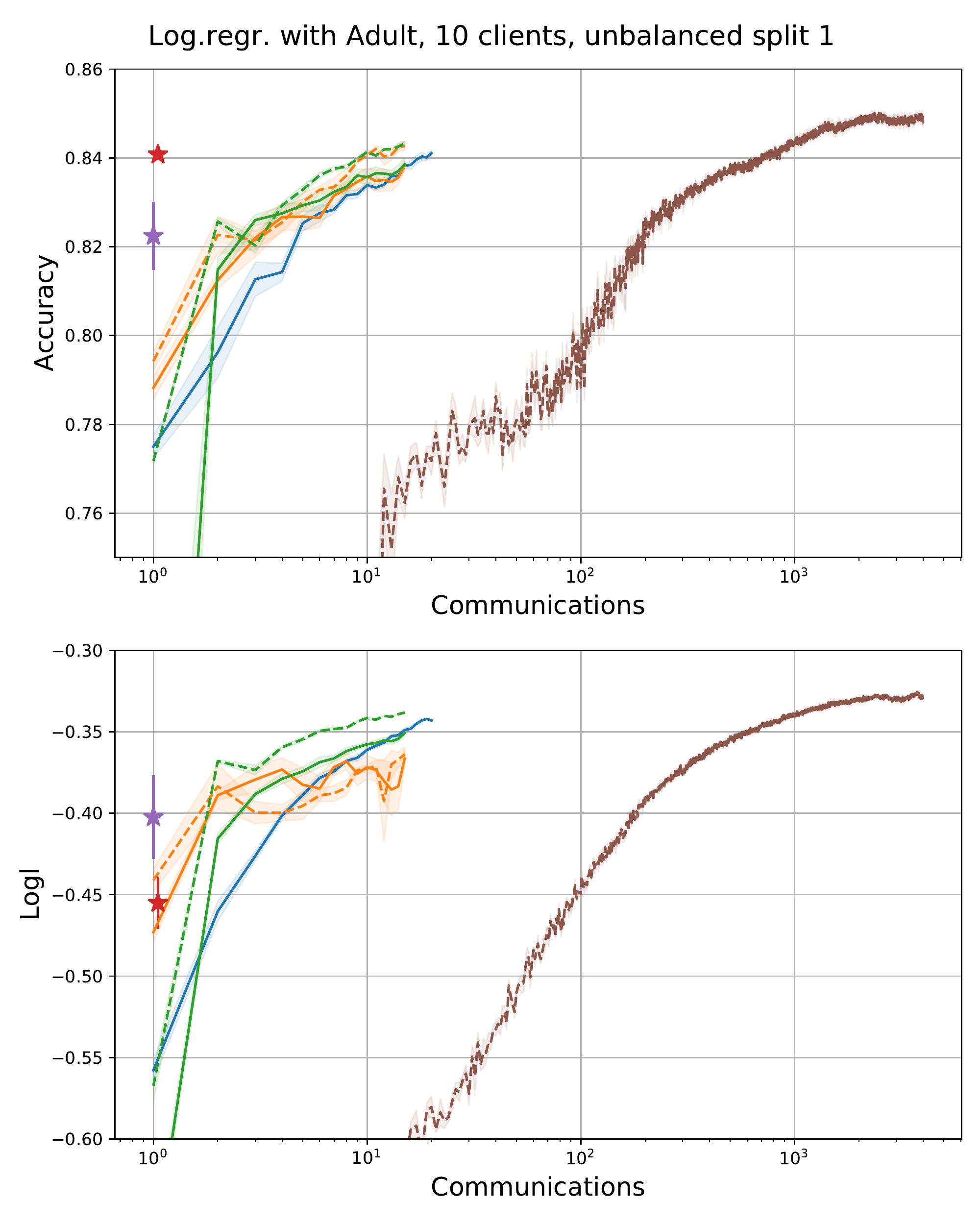}}
	\hfill
	\subfigure[]{\includegraphics[width=.45\columnwidth,trim={0cm 0cm 0cm 0cm},clip]{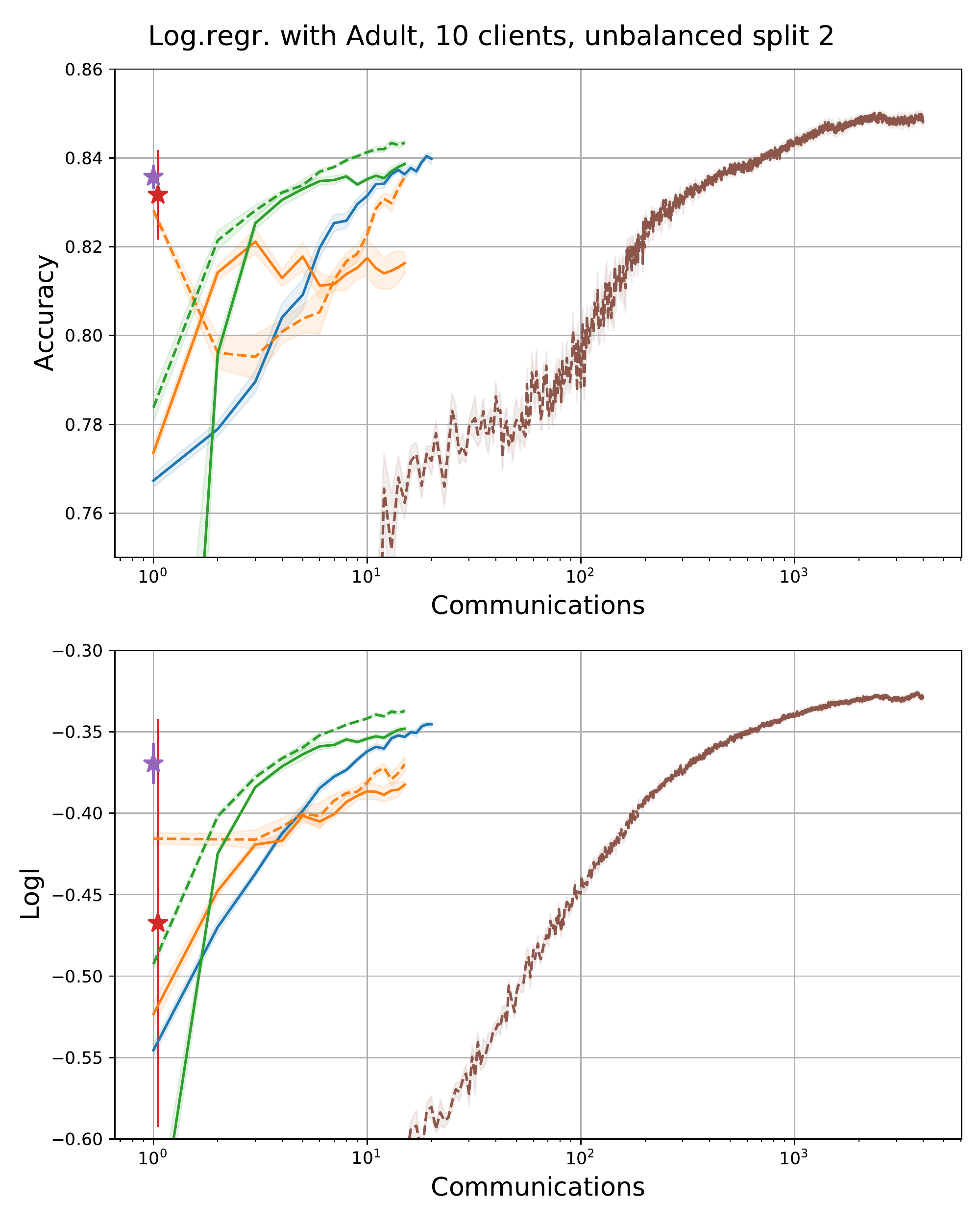}} 
	\caption{\label{fig:adult-10clients-comms}  $(1,10^{-5})$-DP logistic regression, Adult data with 10 clients: mean over 5 seeds with SEM. 
	a) balanced split, b) unbalanced split 1, c) unbalanced split 2.}
	\end{center}%
\end{figure}%

\begin{figure}[htb]%
	\begin{center}%
	\includegraphics[width=.4\columnwidth,trim={.5cm 5cm .5cm 5cm},clip]{legend.pdf}
	\hfill
	\subfigure[]{\includegraphics[width=.45\columnwidth,trim={0cm 0cm 0cm 0cm},clip]{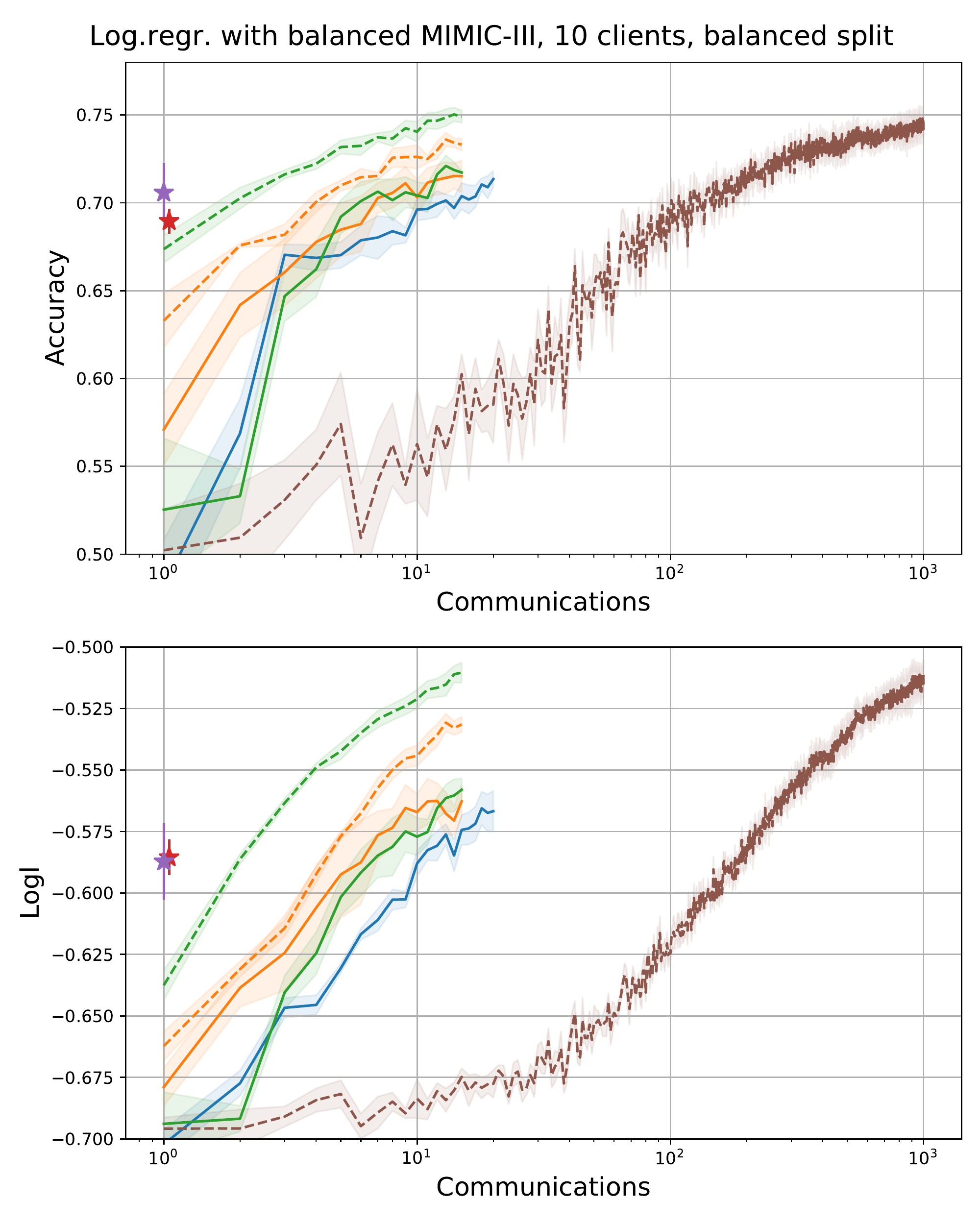}}
	\subfigure[]{\includegraphics[width=.45\columnwidth,trim={0cm 0cm 0cm 0cm},clip]{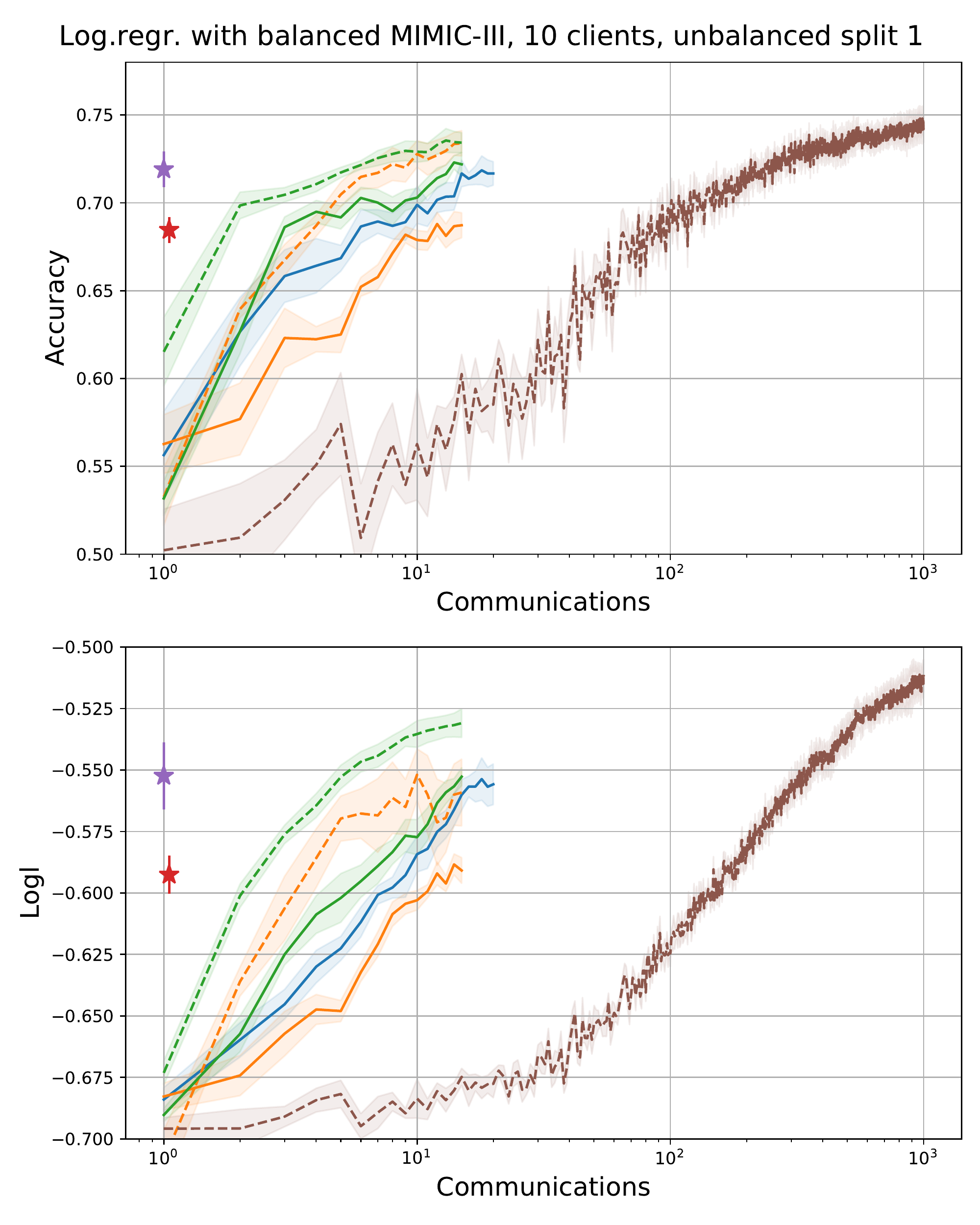}}
	\hfill
	\subfigure[]{\includegraphics[width=.45\columnwidth,trim={0cm 0cm 0cm 0cm},clip]{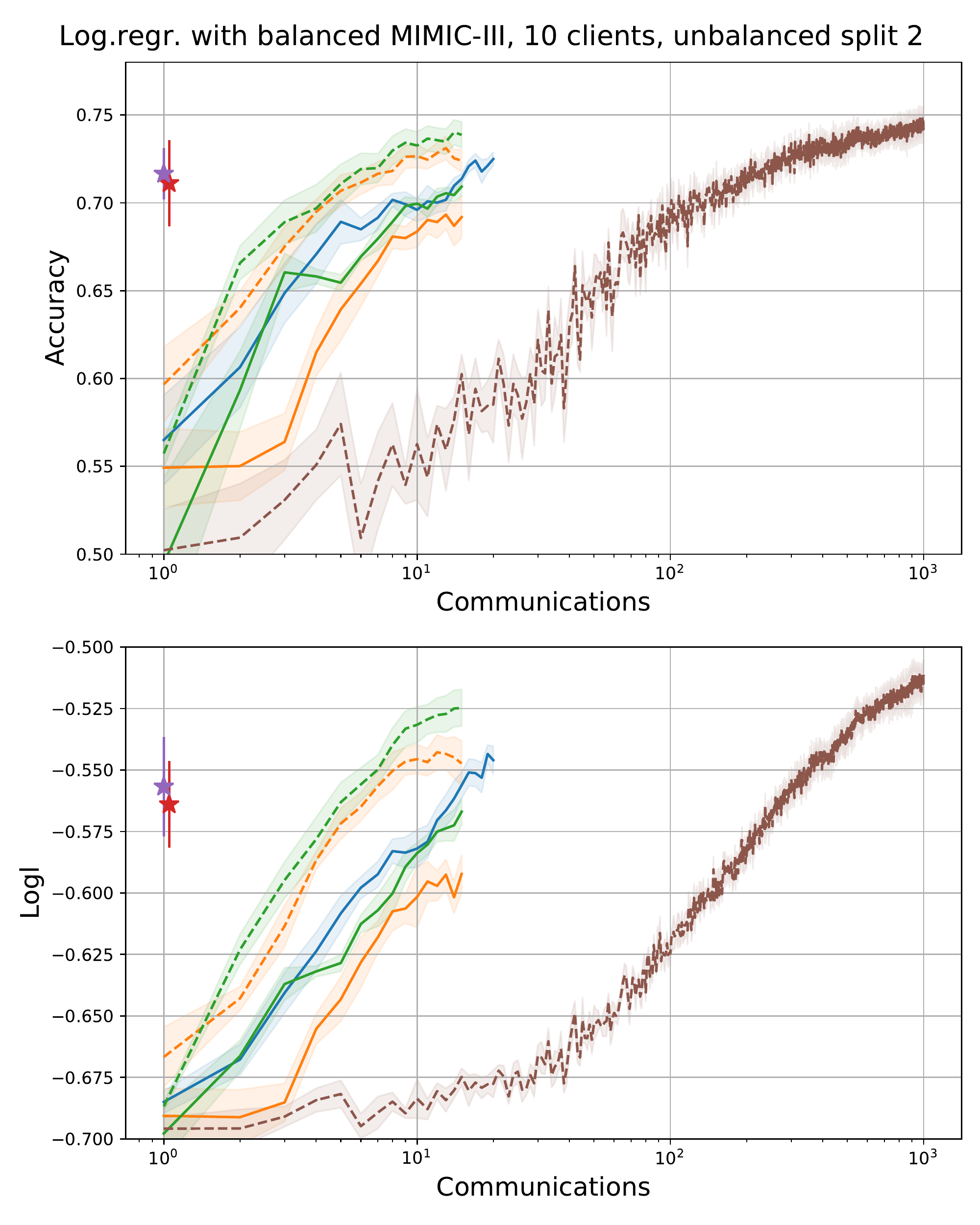}}
	\caption{\label{fig:mimic-10clients-comms}  $(1,10^{-5})$-DP logistic regression, balanced MIMIC-III data with 10 clients: mean over 5 seeds with SEM. a) balanced split, b) unbalanced split 1, c) unbalanced split 2.}
	\end{center}%
\end{figure}%

While the results for our local averaging and virtual PVI clients in 
Figures~\ref{fig:adult-10clients-comms}\&~\ref{fig:mimic-10clients-comms} 
improve given access to a trusted aggregator even with a very limited communication budget (compare local avg vs. local avg, trusted aggr. and virtual vs.~virtual, trusted aggr.), 
the benefits of having the DP noise scale according to Theorems~\ref{thm:local-avg-secure-aggr-noise}\&~\ref{thm:virtual-clients-secure-aggr-noise}
become more marked 
with less local data and more clients. Figure~\ref{fig:adult-200clients-comms} shows the results for Adult data with 200 clients: our DP optimisation, which does not benefit from access to a trusted aggregator, now performs clearly worse than local averaging (local avg, trusted aggr.) or virtual PVI clients (virtual, trusted aggr.) which use a trusted aggregator. In contrast to the results in 
Figures~\ref{fig:adult-10clients-comms}\&~\ref{fig:mimic-10clients-comms}, 
in the more demanding setting in  Figure~\ref{fig:adult-200clients-comms}, all of our DP-PVI methods clearly outperform the minimum communication BCM baselines (BCM same, BCM split) in model utility.

\begin{figure}[htb]%
	\begin{center}%
	\includegraphics[width=.4\columnwidth,trim={.5cm 5cm .5cm 5cm},clip]{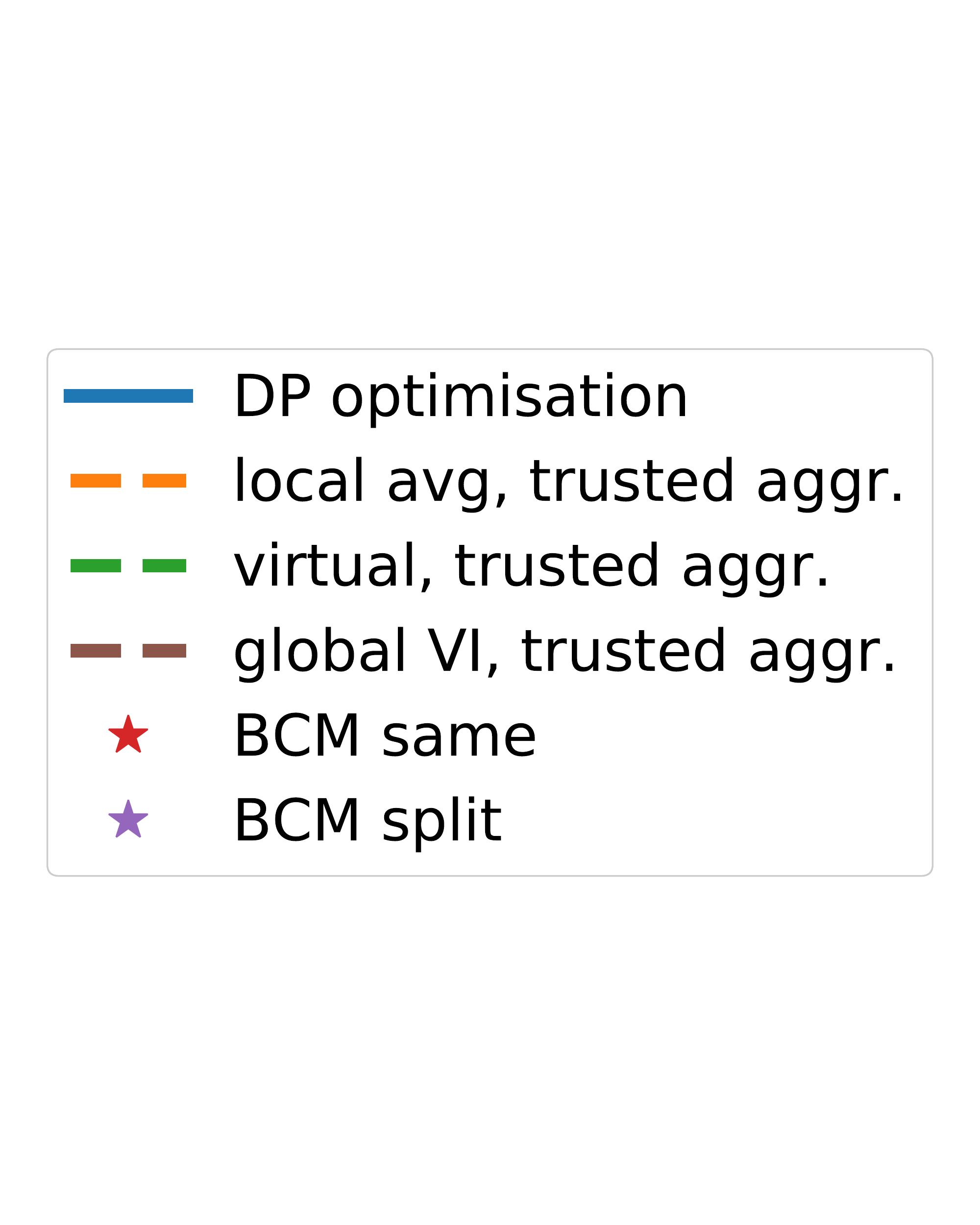}
	\hfill
	\subfigure[]{\includegraphics[width=.45\columnwidth,trim={0cm 0cm 0cm 0cm},clip]{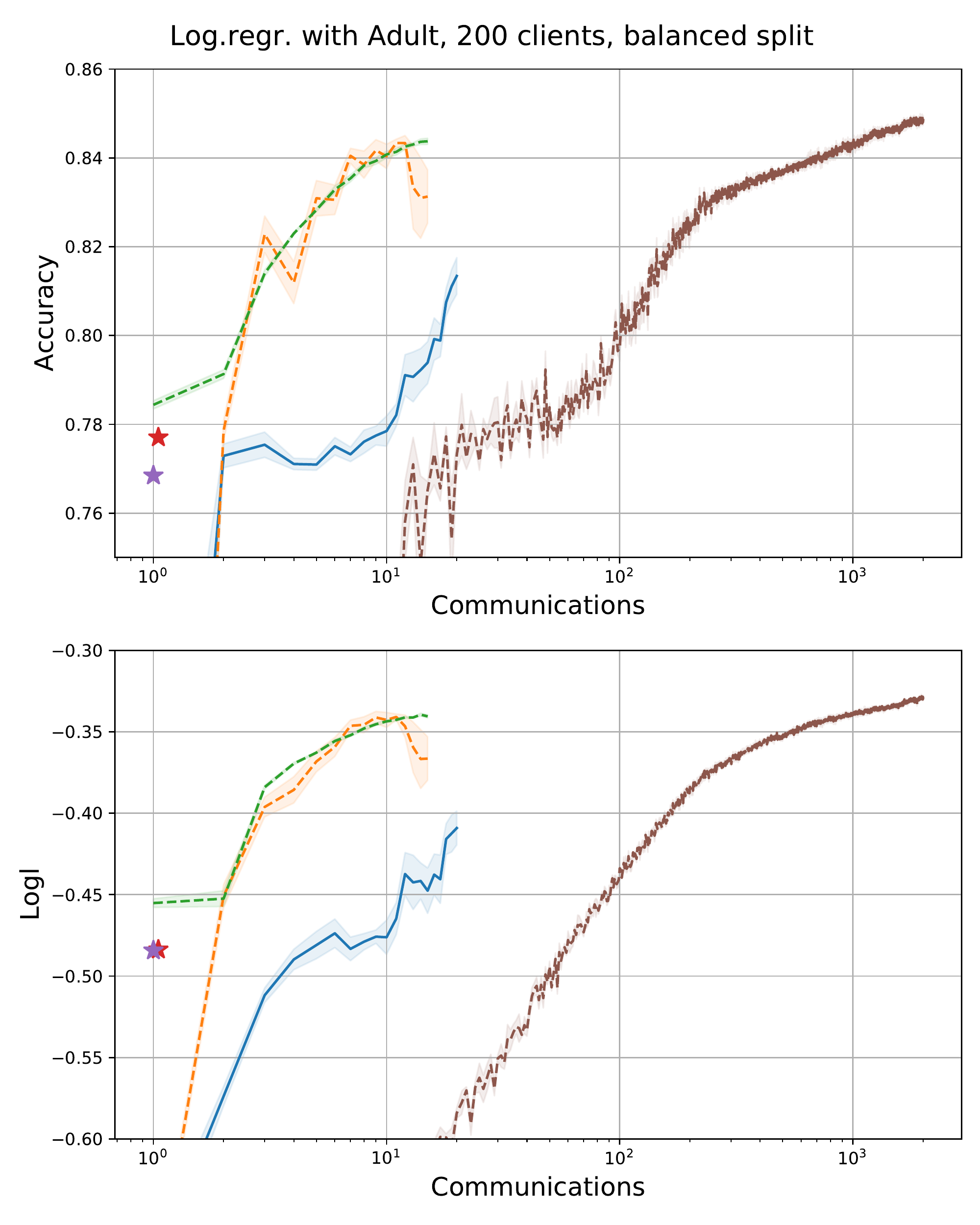}}
	\subfigure[]{\includegraphics[width=.45\columnwidth,trim={0cm 0cm 0cm 0cm},clip]{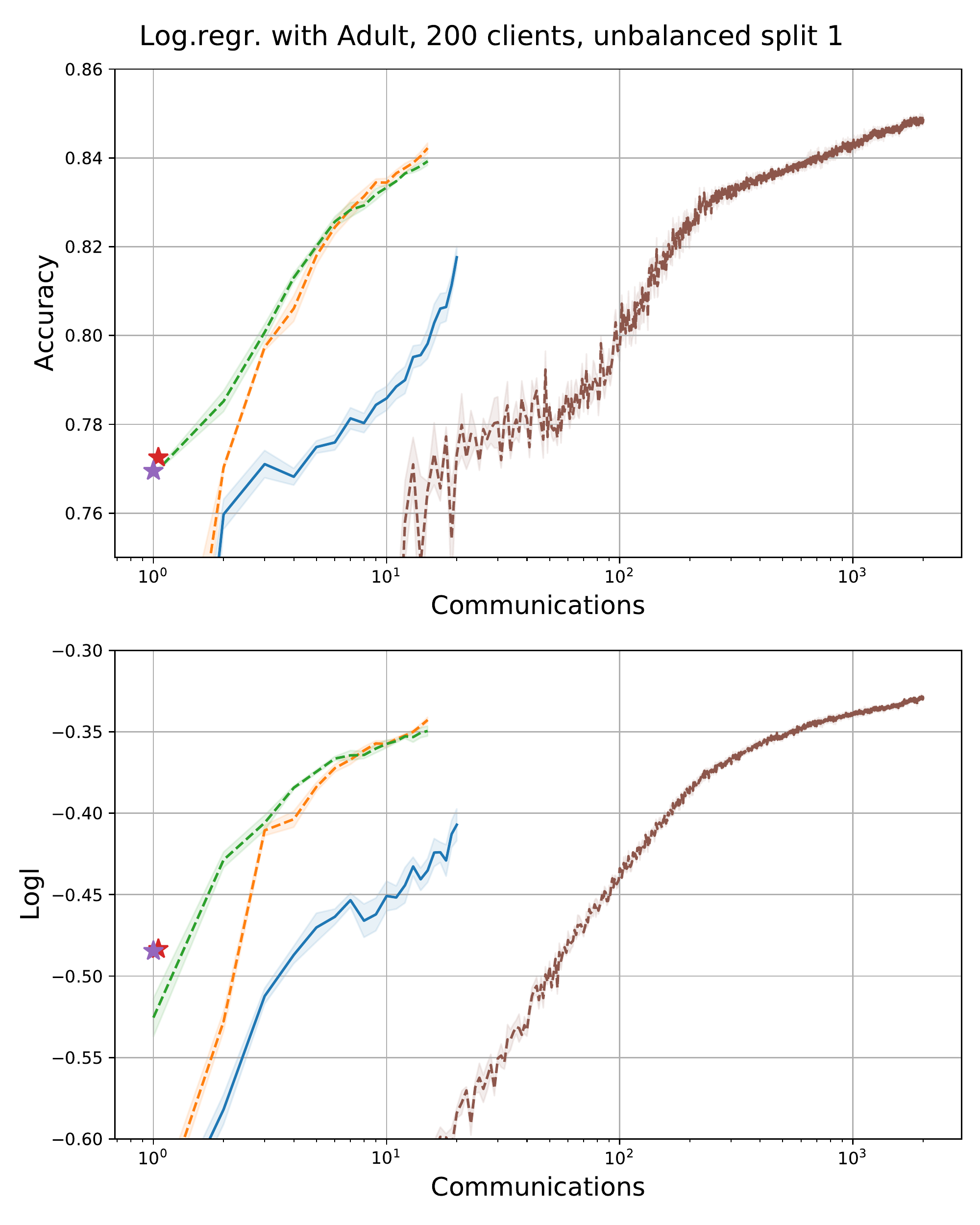}}
	\hfill
	\subfigure[]{\includegraphics[width=.45\columnwidth,trim={0cm 0cm 0cm 0cm},clip]{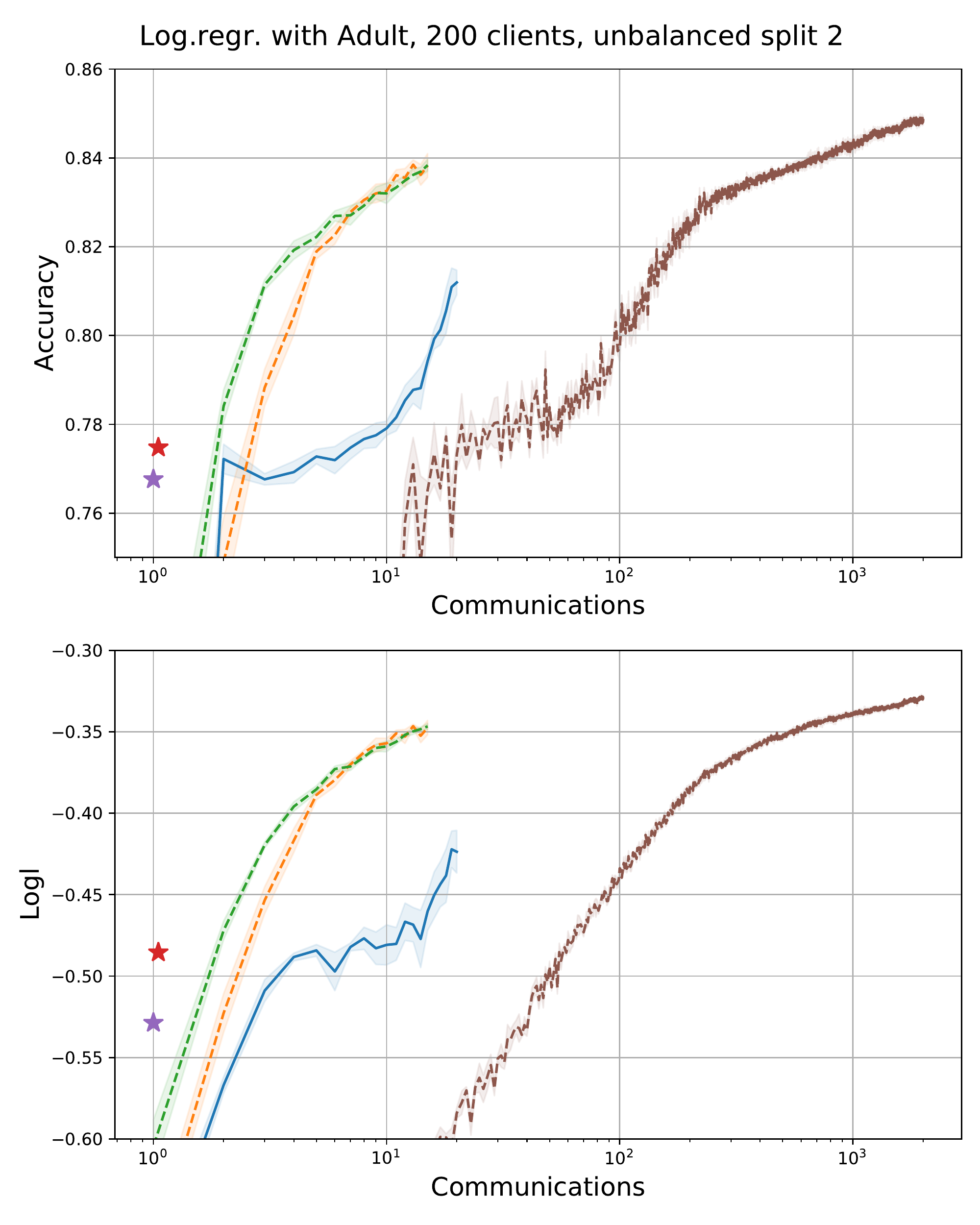}} 
	\caption{\label{fig:adult-200clients-comms}  $(\frac{1}{2},10^{-5})$-DP logistic regression with Adult data and 200 clients: mean over 5 seeds with SEM. a) balanced split, b) unbalanced split 1, c) unbalanced split 2.}
	\end{center}%
\end{figure}%

\paragraph{BNNs}

We expect local averaging and virtual PVI clients to work best when the model has a single, well-defined optimum, since then the local data partitioning should not have a major effect on the resulting model. 
To test the methods with a more complex model, we use a BNN with one fully connected hidden layer of $50$ hidden units, and ReLU non-linearities. Writing $f_{\theta}(x)$ for the output from the network with parameters $\theta$ and input $x \in \mathbb R^d$, 
the likelihood for $y \in \{0,1\}$ is a Bernoulli distribution $p(y| \theta, x) = Bern(y | \pi )$, where the class probability $\pi = \sigma^{-1} (f_{\theta}(x))$, i.e., the logit is predicted by the network. 
We use a standard normal prior on the weights $p(\theta) = \mathcal N (\theta | 0,I)$, and MC approximate the predictive distribution: 
$$ p(y^*=1 | x^*, x,y) \simeq \frac{1}{n_{MC}} \sum_{i=1}^{n_{MC}} p (y^*=1 | \theta^{(i)},x^*), \theta^{(i)} \sim q(\theta) \ \forall i.$$

The results are shown in Figure~\ref{fig:adult-1BNN-comms} for Adult data and 10 clients. The minimum communication baselines (BCM same, BCM split) are very poor in terms of model utility: to achieve good utility with the BNN model in this setting requires some communication between the clients (compare to BCM same and BCM split results in Figure~\ref{fig:adult-10clients-comms} using a tighter privacy budget but simpler logistic regression model). Our DP optimisation works very well in terms of model utility, being almost on par with the high-utility baseline that uses a trusted aggregator (global VI, trusted aggr.). In contrast, both our local averaging (local avg) and virtual PVI clients (virtual) perform clearly worse. They are also much more likely to diverge and seem to require generally more careful tuning of the hyperparameters to reach any reasonable performance.

\begin{figure}[htb]%
	\begin{center}%
	\includegraphics[width=.4\columnwidth,trim={.5cm 5cm .5cm 5cm},clip]{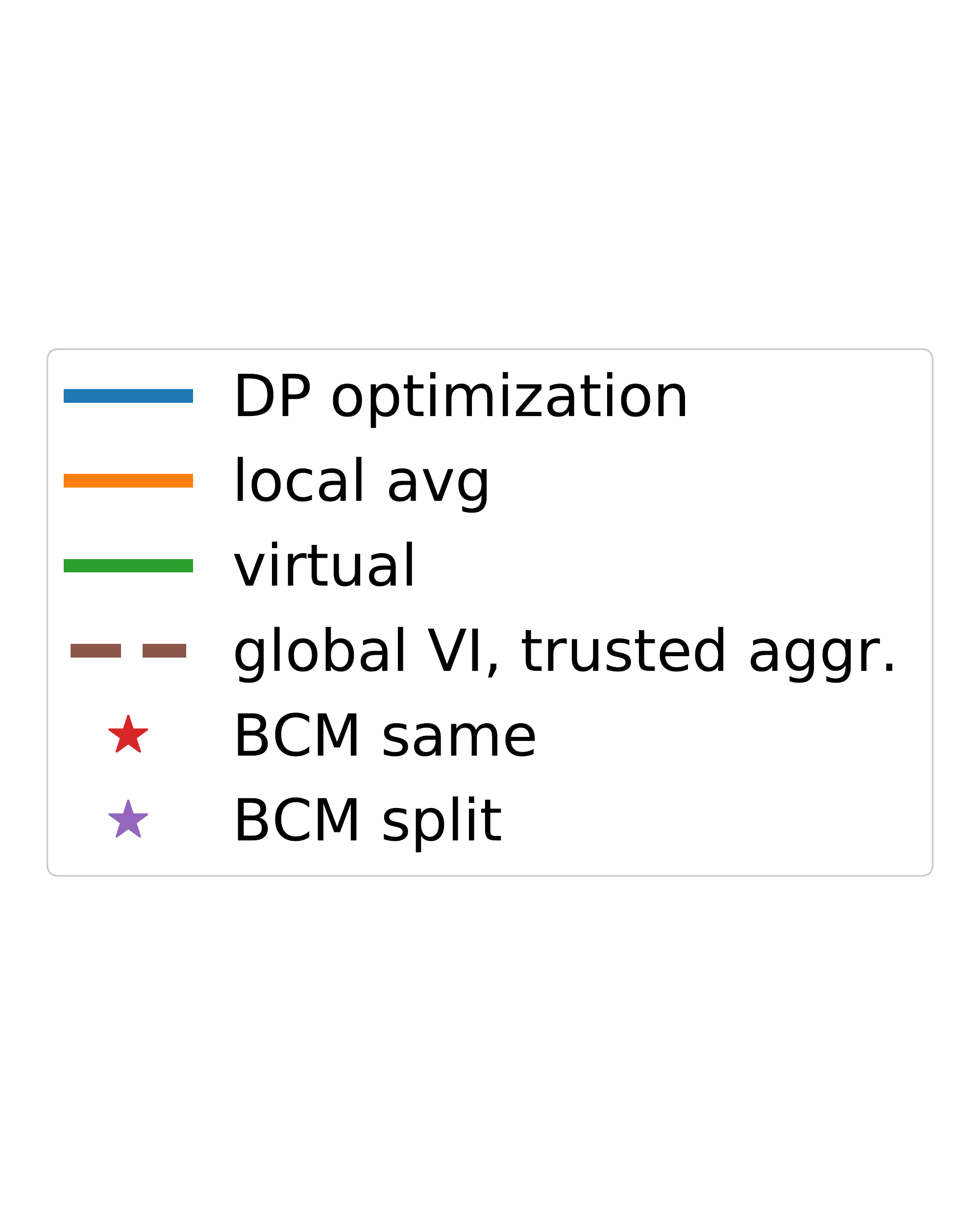}
	\hfill
	\subfigure[]{\includegraphics[width=.45\columnwidth,trim={0cm 0cm 0cm 0cm},clip]{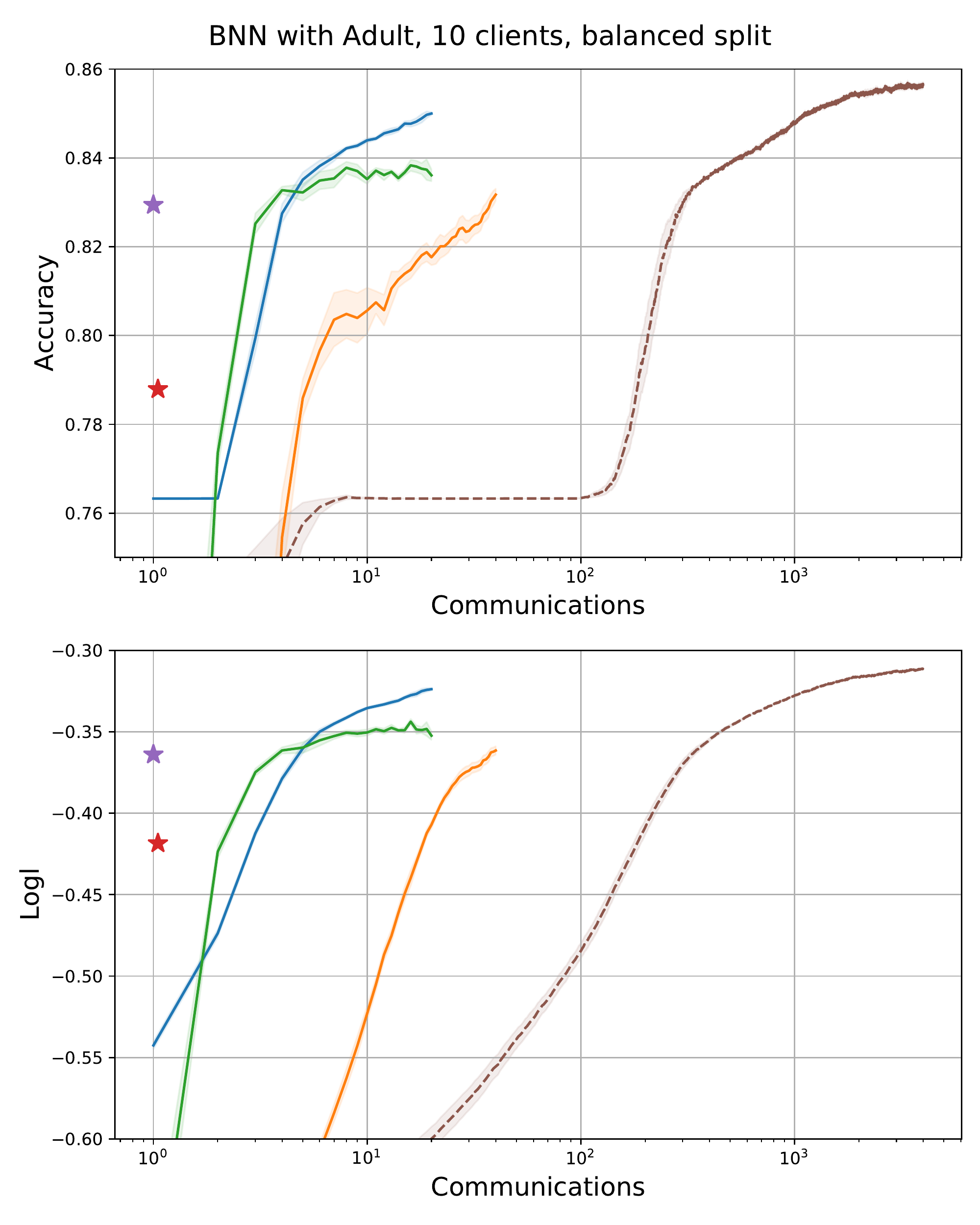}}
	\subfigure[]{\includegraphics[width=.45\columnwidth,trim={0cm 0cm 0cm 0cm},clip]{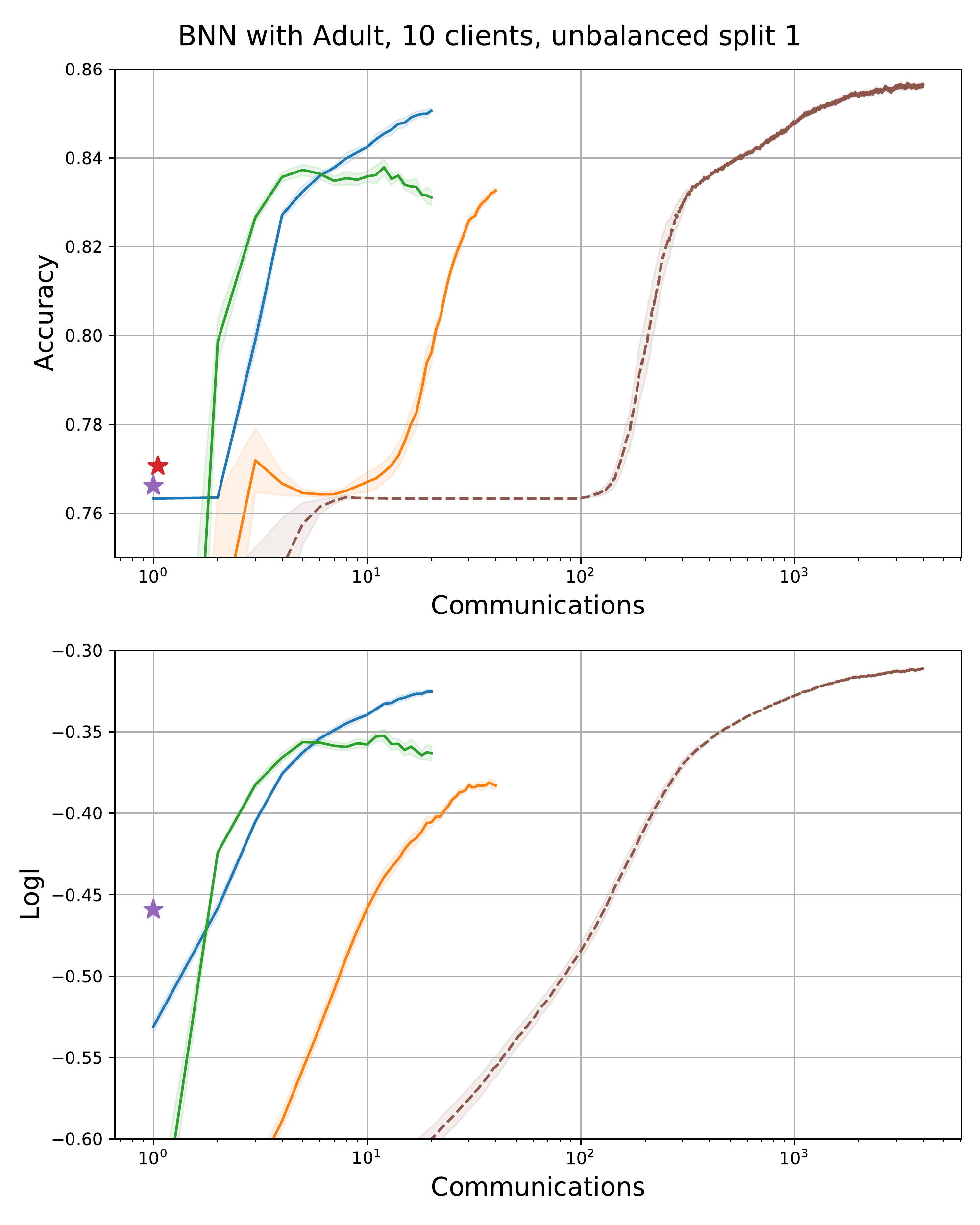}}
	\hfill
	\subfigure[]{\includegraphics[width=.45\columnwidth,trim={0cm 0cm 0cm 0cm},clip]{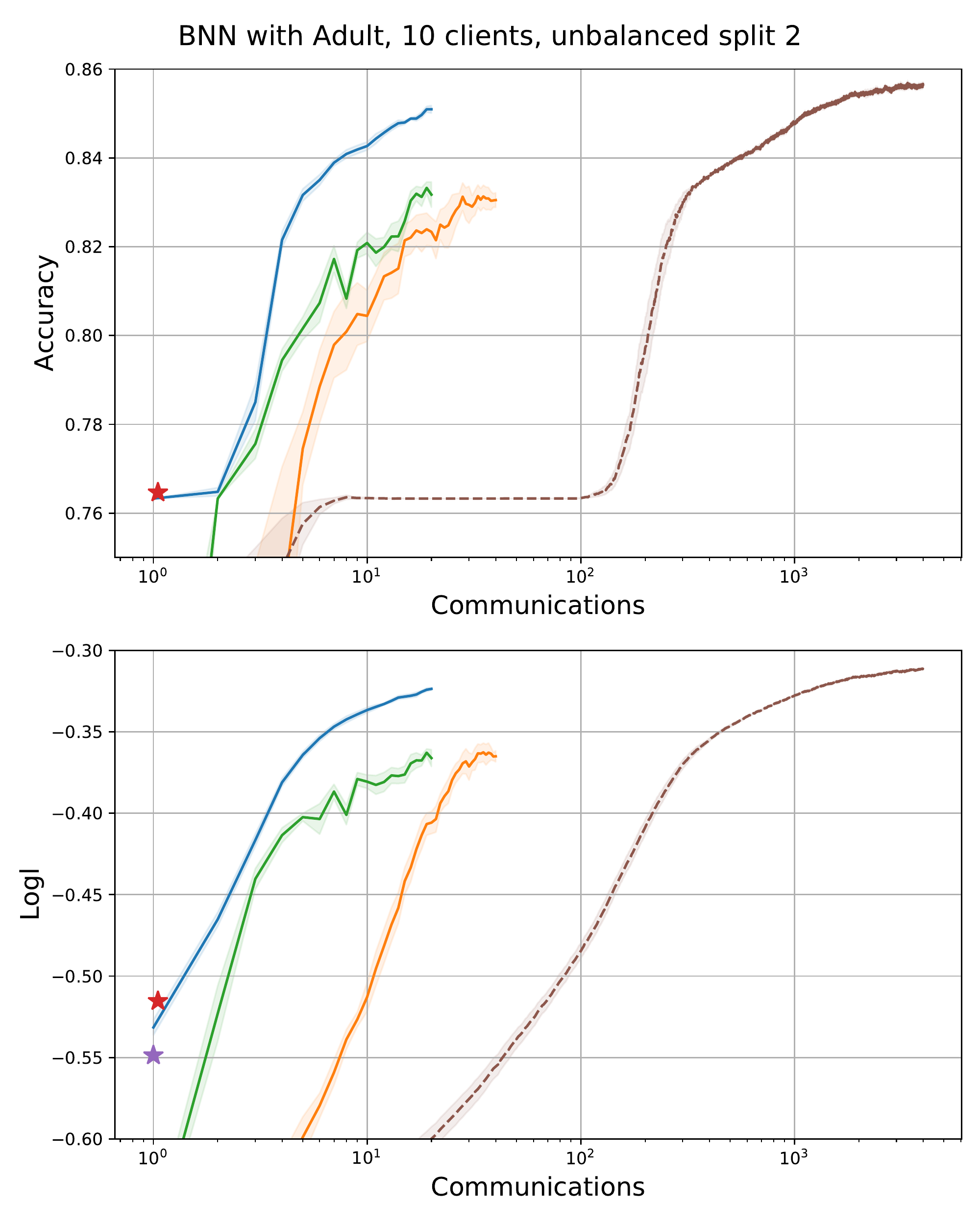}} 
	\caption{\label{fig:adult-1BNN-comms}  $(2,10^{-5})$-DP 1-layer BNN with Adult data and 10 clients: mean over 5 seeds with SEM. a) balanced split, b) unbalanced split 1, c) unbalanced split 2. }
	\end{center}%
\end{figure}%
\section{Discussion}
\label{sec:discussion}

We have proposed three different implementations of DP-PVI, one based on perturbing the local optimisation (DP optimisation), and two based on perturbing the model updates (local averaging and virtual PVI clients). 
As is clear from the empirical results in Section~\ref{sec:experiments}, no single method dominates the others in all settings. Instead, the method needs to be chosen according to the problem at hand. However, we can derive guidelines for choosing the most suitable method based on the theoretical results as well as on the empirical experiments. 

DP optimisation is a good candidate method with simple or complex models regardless of the number of clients as long as the clients have enough local data. However, with little local data and more clients, since it cannot easily benefit from secure primitives, the performance can be sub-optimal.

In contrast, local averaging and virtual PVI clients work best with relatively simple models, while the performance with more complex models can easily lag behind DP optimisation results. However, given access to a trusted aggregator both methods can leverage other clients to reduce the amount of DP noise required. Based on our experiments, from the two methods local averaging is harder to tune properly and can be unstable whereas using virtual PVI clients is a more robust alternative.




\subsubsection*{Acknowledgments}

The authors would like to thank Mrinank Sharma, Michael Hutchinson, Stratis Markou and Antti Koskela for useful discussions. The authors acknowledge CSC – IT Center for Science, Finland, and the Finnish Computing Competence Infrastructure (FCCI) for computational and data storage resources.

Mikko Heikkil\"a was at the University of Helsinki during this work.

Matthew Ashman is supported by the George and Lilian Schiff Foundation.

Siddharth Swaroop was at the University of Cambridge during this work, where he was supported by an EPSRC DTP studentship and a Microsoft Research EMEA PhD Award.

Richard E. Turner is supported by Google, Amazon, ARM, Improbable and EPSRC grant EP/T005386/1.

This work was supported by the Academy of Finland (Flagship programme: Finnish Center for Artificial Intelligence, FCAI; and grant 325573), the Strategic Research Council at the Academy of Finland (Grant 336032) as well as the European Union (Project 101070617). Views and opinions expressed are however those of the author(s) only and do not necessarily reflect those of the European Union or the European Commission. Neither the European Union nor the granting authority can be held responsible for them.

\bibliography{DPPVI}
\bibliographystyle{tmlr}

\appendix

\section{Appendix: theorems and proofs}
\label{sec:appendix-proofs}

This Appendix contains all proofs and some additional theorems omitted from the main text. For easy of reading, we state all the theorems before the proofs.

\paragraph{Privacy via local optimisation: DP optimisation}

\begin{thm}
\label{thm:dp-sgd-privacy-app}
Running DP-SGD for client-level optimisation in Algorithm~\ref{alg:PVI}, using subsampling fraction $q_{sample} \in (0,1]$ on the local data level for $T$ local optimisation steps in total, with 
$S$ global updates interleaved with the local steps, the resulting model 
is $(\varepsilon,\delta)$-DP, with $\delta \in (0,1)$ s.t. $\varepsilon = \mathbb O (\delta, q_{sample}, T, \mathcal G_{\sigma})$.
\end{thm}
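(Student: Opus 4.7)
The plan is to combine three standard DP tools: per-sample gradient clipping (for bounded sensitivity of each DP-SGD step), parallel composition across clients under the sample-level neighbourhood, and post-processing invariance (for the server's aggregation and broadcast).

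First, I would fix a pair of neighbouring datasets $x, x' \in \mathcal X^n$ differing in a single record. By the bounded, sample-level neighbourhood adopted in Section~\ref{sec:DP-background}, exactly one client $m^\star$ holds the differing sample; every other client $j \neq m^\star$ has identical local data under $x$ and $x'$, so the distributions of all of their local DP-SGD trajectories and all messages they send to the server coincide. Hence it suffices to bound the privacy loss induced by the view that an adversary can extract about $m^\star$'s data.

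Next, I would analyse a single local DP-SGD step at $m^\star$. The step: (i) subsamples a minibatch without replacement with ratio $q_{sample}$, which is $WOR_b$ from Definition~\ref{def:sampling-wor-def}; (ii) computes per-example gradients of the local ELBO in \Eqref{eq:pvi_ELBO} and clips each to a fixed norm $C$, which bounds the $\ell_2$-sensitivity of their sum by $C$; (iii) adds spherical Gaussian noise with variance $\sigma^2$, which is the Gaussian mechanism $\mathcal G_\sigma$ of Definition~\ref{def:Gaussian-mehanism}. The local ELBO depends only on $m^\star$'s own data, so only one per-example gradient differs between $x$ and $x'$, and clipping makes this difference $\ell_2$-bounded by $C$. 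The step is therefore an instance of the subsampled Gaussian mechanism $\mathcal G_\sigma \circ WOR_b$.

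I would then account for the full trajectory by adaptive composition. Over the whole run, $m^\star$ performs $T$ such subsampled Gaussian steps in total. Between local rounds the server broadcasts global parameters $\lambda^{(s-1)}$ to $m^\star$; these parameters are a (possibly randomised) function of previously released DP-SGD outputs from $m^\star$ together with data from the other clients, which is independent of the bit that distinguishes $x$ from $x'$. Hence every broadcast received by $m^\star$ is post-processing of previously released DP outputs (already charged to the privacy budget) plus $m^\star$-independent randomness, and introduces no additional privacy loss. The $T$-fold adaptive composition therefore satisfies $(\varepsilon,\delta)$-DP with $\varepsilon = \mathbb O(\delta, q_{sample}, T, \mathcal G_\sigma)$ by Definition~\ref{def:oracle}. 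The final global approximation released by the server is a function of the outputs of all clients; for client $m^\star$ this is post-processing of the DP-SGD outputs, and for the other clients their outputs are identically distributed under $x$ and $x'$, so post-processing invariance closes the argument.

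The subtle step I expect to need the most care is the interleaving of global broadcasts with local DP-SGD updates: one must check that the broadcasts to $m^\star$ are truly post-processing of previously released quantities rather than of raw data, so that adaptive composition applies cleanly and no extra cost is incurred per global round. This is standard but worth stating explicitly, since it is the reason the privacy bound depends on $T$ but not on $S$.
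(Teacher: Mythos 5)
Your proof follows essentially the same route as the paper's: each local step at the affected client is a subsampled Gaussian mechanism $\mathcal G_{\sigma} \circ WOR_b$ composed adaptively over the $T$ local steps, the interleaved global updates are free by post-processing since they never touch raw data, and the guarantee extends to the full model via parallel composition (which your single-differing-client argument makes explicit, and which the paper states in the remark following its proof). One minor imprecision: under the paper's bounded (substitution) neighbourhood, the $\ell_2$-sensitivity of the clipped per-example gradient sum is $2C$ rather than $C$, but this constant is absorbed into the calibration of $\sigma$ in $\mathcal G_{\sigma}$ and does not affect the structure of the argument.
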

\begin{proof}
Standard DP-SGD theory \citep{Song2013,Bassily2014,Abadi2016} 
ensures that the local optimised approximation is DP 
after a given number of local optimisation steps by a given client $m$, 
when on every local step we clip each per-example gradient to enforce a known $\ell_2$-norm bound $C$, add iid Gaussian noise with standard deviation $\sigma$ scaled by $C$ to each dimension, and 
the privacy amplification by subsampling 
factor $q_{sample}$ for the sampling without replacement function (see Definition~\ref{def:sampling-wor-def}) is calculated from the fraction of local data utilised on each step. 
Since the global update does not access the sensitive data, the global model is DP w.r.t. data held by client $m$ after the global update 
due to post-processing guarantees. Hence, when accounting for DP for client $m$, the total number of compositions is $T$, regardless of the number of global updates. The total privacy is therefore $(\varepsilon, \delta)$, when $\delta$ is such that $\varepsilon = \mathbb O (\delta, q_{sample}, T, \mathcal G_{\sigma})$.
\end{proof}

With Theorem~\ref{thm:dp-sgd-privacy-app}, DP is guaranteed independently by each client w.r.t. their own data, and hence the global model will have DP guarantees w.r.t. any clients' data via parallel composition, i.e., the global model is $(\epsilon_{max},\delta_{max})$-DP w.r.t. any single training data sample with $\epsilon_{max} = \max\{\epsilon_1,\dots,\epsilon_M\}, \delta_{max} = \max\{\delta_1,\dots,\delta_M\}$, where $\epsilon_m, \delta_m$ are the parameters used by client $m$. In all the experiments in this paper, we use a common $(\epsilon,\delta)$ budget shared by all the clients, and sampling without replacement on the local data level as the subsampling method.

\paragraph{Properties of non-DP local averaging}

The following properties are the local averaging counterparts to the regular PVI properties shown by \cite{Ashman_2022}. 
We write $n_{m,k}$ for the number of samples in shard $k$ at client $m$ after the initial partitioning, so $\sum_{k=1}^{N_m} n_{m,k}=n_m$. 

\begin{prop}[cf. Property 2.1 of \citealt{Ashman_2022}]
Maximizing the local ELBO 
$$\mathcal L_{m,k}^{(s)} (q(\theta)) 
    := \int d \theta q(\theta) \log \frac{  [p(x_{m,k}|\theta)]^{N_m} q^{(s-1)}(\theta) }{ q(\theta) t_m^{(s-1)}(\theta) }$$ 
    is equivalent to the KL optimization 
    $$ q^{(s)}(\theta) = 
        \argmin_q \KL( q(\theta) \| \hat p_{m,k}^{(s)} (\theta) ), $$
    where $\hat p_{m,k}^{(s)} (\theta) = \frac{1}{\hat Z^{(s)}_{m,k}} p(\theta) \prod_{j\not=m}t_j^{(s-1)} (\theta) \cdot  [p(x_{m,k}|\theta)]^{N_m}$ is the tempered tilted distribution before global update $s$ for local shard $k$ at client $m$.
\end{prop}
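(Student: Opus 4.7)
The plan is to reduce the local ELBO to a KL divergence against the tempered tilted distribution, up to a $q$-independent constant, by using the PVI bookkeeping identity $q^{(s-1)}(\theta)/t_m^{(s-1)}(\theta) \propto p_{\setminus m}^{(s-1)}(\theta)$, and then absorbing the likelihood tempering into the definition of $\hat p_{m,k}^{(s)}$.

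First I would expand the log in $\mathcal L_{m,k}^{(s)}(q)$ and use the definition of the global approximation, $q^{(s-1)}(\theta) = \frac{1}{Z_q^{(s-1)}} p(\theta)\prod_{j=1}^M t_j^{(s-1)}(\theta)$, to observe that
\[
\frac{q^{(s-1)}(\theta)}{t_m^{(s-1)}(\theta)} \;=\; \frac{1}{Z_q^{(s-1)}}\, p(\theta)\prod_{j\neq m} t_j^{(s-1)}(\theta) \;=\; \frac{Z_{\setminus m}^{(s-1)}}{Z_q^{(s-1)}}\, p_{\setminus m}^{(s-1)}(\theta),
\]
where $Z_{\setminus m}^{(s-1)}$ is the normalizing constant of the cavity. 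Substituting back, the integrand becomes
\[
q(\theta)\,\log \frac{[p(x_{m,k}\mid \theta)]^{N_m}\, p_{\setminus m}^{(s-1)}(\theta)}{q(\theta)} \;+\; q(\theta)\,\log \tfrac{Z_{\setminus m}^{(s-1)}}{Z_q^{(s-1)}},
\]
and the second term is a constant in $q$ that can be dropped for the argmax.

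Next I would multiply and divide inside the log by $\hat Z_{m,k}^{(s)}$ to recognize the tempered tilted distribution $\hat p_{m,k}^{(s)}(\theta) = \frac{1}{\hat Z_{m,k}^{(s)}}\, p(\theta)\prod_{j\neq m}t_j^{(s-1)}(\theta)\,[p(x_{m,k}\mid\theta)]^{N_m}$. Since $p_{\setminus m}^{(s-1)}$ contains $p(\theta)\prod_{j\neq m} t_j^{(s-1)}$ up to its own normalizer, the numerator inside the log is exactly proportional to $\hat p_{m,k}^{(s)}(\theta)$, so
\[
\mathcal L_{m,k}^{(s)}(q) \;=\; -\KL\!\big(q(\theta)\,\|\,\hat p_{m,k}^{(s)}(\theta)\big) \;+\; \text{const},
\]
where the constant collects $\log \hat Z_{m,k}^{(s)}$, $\log Z_{\setminus m}^{(s-1)}$, $\log Z_q^{(s-1)}$ and does not depend on $q$. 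Therefore $\argmax_q \mathcal L_{m,k}^{(s)}(q) = \argmin_q \KL(q \,\|\, \hat p_{m,k}^{(s)})$, which is the claimed equivalence.

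The step I expect to require most care is the bookkeeping of normalizing constants: because the ELBO is written in terms of $q^{(s-1)}$ and $t_m^{(s-1)}$ (unnormalized factors) rather than the cavity $p_{\setminus m}^{(s-1)}$ (normalized), I need to track which normalizers are implicit in each object and verify that every constant pulled out of the $\log$ is indeed independent of $q$. Once that accounting is done, the tempering by $N_m$ on the likelihood slots in cleanly into the definition of $\hat p_{m,k}^{(s)}$, and the rest is the standard ELBO-as-negative-KL manipulation.
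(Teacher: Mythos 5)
Your proof is correct and is essentially the paper's argument: the paper simply cites the proof of Property 2.1 in \cite[A.1]{Ashman_2022} with the full local likelihood $p(x_m|\theta)$ replaced by the tempered shard likelihood $[p(x_{m,k}|\theta)]^{N_m}$, and what you have written out --- substituting $q^{(s-1)}/t_m^{(s-1)} \propto p_{\setminus m}^{(s-1)}$, recognizing the numerator as proportional to $\hat p_{m,k}^{(s)}$, and collecting the $q$-independent normalizers $\log \hat Z_{m,k}^{(s)}$, $\log Z_{\setminus m}^{(s-1)}$, $\log Z_q^{(s-1)}$ so that $\mathcal L_{m,k}^{(s)}(q) = -\KL\big(q \,\|\, \hat p_{m,k}^{(s)}\big) + \mathrm{const}$ --- is exactly that argument spelled out in full. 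Your careful bookkeeping of which objects are normalized (only $q^{(s-1)}$ and the cavity, not the $t$-factors) is precisely the point that makes the substitution go through cleanly.
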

\begin{proof}
The proof is identical to the one in \cite[A.1]{Ashman_2022} when we replace the full local likelihood $p(x_m |\theta)$ by the tempered likelihood  $[p(x_{m,k}|\theta)]^{N_m}$ for shard $k$.
\end{proof}

\begin{prop}[cf. Property 2.2 of \citealt{Ashman_2022}]
Let $q^*(\theta)=p(\theta)\prod_{j=1}^M t_j^*(\theta)$ be a fixed point for local averaging, 
$\mathcal L^*_{m,k} (q(\theta)) = \int d\theta q(\theta) \log \frac{q^*(\theta) [p(x_{m,k}|\theta)]^{N_m} }{q(\theta) t^*_m (\theta) }$ local ELBO at the fixed point w.r.t. shard $k$ at client $m$, and 
$\mathcal L (q(\theta)) = \int d\theta q(\theta) \log \frac{p(\theta) p(x |\theta) }{q(\theta) }$ global ELBO. Then 
\begin{enumerate}
    \item $\sum_{j=1}^M \frac{1}{N_j} \sum_{k=1}^{N_j} 
        \mathcal L^*_{j,k} (q^*(\theta)) = \mathcal L (q^*(\theta)) - \log Z_{q^*} $.
    \item If $q^*(\theta) = 
    \argmax_q \mathcal L_{j,k} (q(\theta))$ for all $j,k$, then $q^*(\theta) = \argmax_q \mathcal L (q(\theta)) $.
\end{enumerate}

\end{prop}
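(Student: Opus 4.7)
My plan is to adapt the proof of Property~2.2 of \cite{Ashman_2022} to the partitioned setting, accounting for the shard index $k=1,\dots,N_j$ and for the $\frac{1}{N_j}$-weighted KL in the local ELBO \Eqref{eq:locAvg-ELBO} (equivalently, the tempered likelihoods $[p(x_{j,k}|\theta)]^{N_j}$ in the tilted distribution).

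For part~(1), I would substitute $q=q^*$ into each $\mathcal L^*_{j,k}(q^*(\theta))$. The factor $q^*(\theta)$ in the numerator and $q(\theta)=q^*(\theta)$ in the denominator of the logarithm cancel, leaving $\int d\theta\, q^*(\theta)\log\frac{[p(x_{j,k}|\theta)]^{N_j}}{t_j^*(\theta)}$. Summing over $k$, dividing by $N_j$, and using the strict shard partition $\sum_{k=1}^{N_j}\log p(x_{j,k}|\theta)=\log p(x_j|\theta)$ gives $\int d\theta\,q^*(\theta)\log\frac{p(x_j|\theta)}{t_j^*(\theta)}$. Summing over $j$ and applying the fixed-point identity $\prod_{j=1}^M t_j^*(\theta)=Z_{q^*} q^*(\theta)/p(\theta)$ then collapses the expression to $\mathcal L(q^*(\theta))-\log Z_{q^*}$.

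For part~(2), I would first repeat the computation for an arbitrary $q\in\mathcal Q$. Each client contributes one $\int q\log q^* - \int q\log q = -\KL(q\|q^*)$ piece (giving $-M\KL$ in total), and one such piece is absorbed back when $\prod_j t_j^*$ is re-expressed via the fixed-point identity, yielding the identity
\begin{equation*}
\sum_{j=1}^M \frac{1}{N_j}\sum_{k=1}^{N_j} \mathcal L^*_{j,k}(q(\theta)) = \mathcal L(q(\theta)) - (M-1)\,\KL(q(\theta)\|q^*(\theta)) - \log Z_{q^*},
\end{equation*}
which specialises to part~(1) at $q=q^*$. If $q^*$ is an argmax over $\mathcal Q$ of every $\mathcal L^*_{j,k}$, it is also an argmax of the weighted sum on the left. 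Parameterising $q=q(\cdot|\lambda)$ with $q^*=q(\cdot|\lambda^*)$ and taking first-order conditions at $\lambda^*$ then forces the $\lambda$-gradient of the right-hand side to vanish; since $\KL(q(\cdot|\lambda)\|q^*)$ is minimised at $\lambda=\lambda^*$, its gradient vanishes there, and hence $\nabla_\lambda \mathcal L(q(\cdot|\lambda))\big|_{\lambda=\lambda^*}=0$, giving the global-VI fixed-point conclusion in the same sense as Property~2.2(ii) of \cite{Ashman_2022}.

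The main obstacle is the extra $-(M-1)\KL(q\|q^*)$ correction that did not appear in the unpartitioned case and does not vanish away from $q^*$. This blocks a purely pointwise comparison $\mathcal L(q)\leq\mathcal L(q^*)$, which is why the conclusion has to be phrased (and proved) as first-order stationarity at $q^*$, exploiting that $\KL(\cdot\|q^*)$ has a vanishing gradient at its minimiser.
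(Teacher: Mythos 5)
Your part (1) is exactly the paper's computation run in the opposite direction (the paper expands $\mathcal L(q^*)-\log Z_{q^*}$ into the weighted sum of local ELBOs; you collapse the sum using $\sum_k \log p(x_{j,k}|\theta)=\log p(x_j|\theta)$ and $\prod_j t_j^*(\theta)=Z_{q^*}\,q^*(\theta)/p(\theta)$), so there is nothing to flag there. For part (2), however, you take a genuinely different and in fact more careful route than the paper. The paper treats the part-(1) identity as if it held for all $q$ and differentiates it twice, asserting the gradient identity \Eqref{eq:lfa_grad} and the Hessian identity \Eqref{eq:lfa_hessian}, then concludes maximality from negative definiteness of the summed local Hessians. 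Your identity, valid for \emph{every} $q\in\mathcal Q$,
\begin{equation*}
\sum_{j=1}^M \frac{1}{N_j}\sum_{k=1}^{N_j}\mathcal L^*_{j,k}(q(\theta)) \;=\; \mathcal L(q(\theta)) \;-\; (M-1)\,\KL(q(\theta)\,\|\,q^*(\theta)) \;-\; \log Z_{q^*},
\end{equation*}
is correct (each client contributes one $-\KL(q\|q^*)$ piece, giving $-M\KL$, and one piece is returned when $\prod_j t_j^*$ is rewritten via the fixed point), and it is precisely what \emph{licenses} the paper's gradient step: $\nabla_\lambda \KL(q(\cdot|\lambda)\|q^*)$ vanishes at $\lambda^*$, so \Eqref{eq:lfa_grad} holds even though the part-(1) identity is only pointwise. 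But your identity also exposes that the paper's Hessian step \Eqref{eq:lfa_hessian} is missing the term $(M-1)\,\nabla^2_\lambda \KL(q(\cdot|\lambda)\|q^*)\big|_{\lambda^*}$, which is a positive-semidefinite Fisher-information matrix; the global-ELBO Hessian at $q^*$ is therefore a sum of negative definite matrices \emph{plus} a PSD correction, and the paper's negative-definiteness conclusion — hence the literal $\argmax$ claim in item 2 — does not follow as written. So the trade is: the paper's route claims more (a maximum) but via a differentiation step that is not justified for $M>1$; your route claims less (first-order stationarity of the global ELBO at $q^*$, i.e., the "fixed point of local averaging is a fixed point of global VI" property actually used in the main text) but every step is sound, and your closing remark about the $-(M-1)\KL$ term blocking a pointwise comparison identifies exactly where the stronger claim breaks down.
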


\begin{proof}
1. Directly from the definition we have 
\begin{align}
\mathcal L(q^*(\theta)) - \log Z_{q^*}
    &= \int d\theta q^*(\theta) \log \frac{p(\theta) p(x |\theta)      }{q^*(\theta) Z_{q^*} }  \\
    &= \int d\theta q^*(\theta) [ \log 
    p(x |\theta) - \log \frac{ p(\theta) \prod_{j=1}^M t^*_j(\theta) }{ p(\theta) } ]     \\
    &= \sum_{j=1}^M \int d\theta q^*(\theta) [ \log \prod_{k=1}^{N_j}
    p(x_{j,k} |\theta) - \log t^*_j (\theta) ] \\
    &= \sum_{j=1}^M \int d\theta q^*(\theta) [ \frac{1}{N_j} N_j \sum_{k=1}^{N_j} \log 
    p(x_{j,k} |\theta) - \log \frac{ q^*(\theta) t^*_j (\theta) }{ q^*(\theta) } ] \\
    &= \sum_{j=1}^M \frac{1}{N_j} \sum_{k=1}^{N_j} \int d\theta q^*(\theta) [ \log 
    \frac{ q^* (\theta) [p(x_{j,k} |\theta)]^{N_j} }{ q^* (\theta) t^*_j (\theta) } ] \\
    &= \sum_{j=1}^M \frac{1}{N_j} \sum_{k=1}^{N_j} 
        \mathcal L^*_{j,k} (q^*(\theta)) .
\end{align}

Assume $q^*(\theta) = 
\argmax_q \mathcal L_{j,k}(q(\theta))$ for all $j,k$. Then $q^*(\theta) = \argmax_q \mathcal L^*_{j,k} (q(\theta))$ for all $j,k$ implying that $\frac{d}{d \lambda_q} \mathcal L^*_{j,k} (q^*(\theta)) = 0$ for all $j,k$. Furthermore, since $q^*(\theta)$ is a maximizer, the Hessian $\frac{d^2}{d \lambda_q d \lambda_q^T} \mathcal L^*_{j,k} (q^*(\theta))$ is negative definite for all $j,k$.
Looking at the first part of the proof we can write 
\begin{align}
\label{eq:lfa_grad}
\frac{d}{d \lambda_q} \mathcal L (q^*(\theta) ) 
    &= \sum_{j=1}^M \frac{1}{N_j} \sum_{k=1}^{N_j}  \frac{d}{d\lambda_q} \mathcal L^*_{j,k} (q^*(\theta)) = 0, \text{ and } \\
\label{eq:lfa_hessian}
\frac{d^2}{d \lambda_q d \lambda_q^T} \mathcal L (q^*(\theta) ) 
    &= \sum_{j=1}^M \frac{1}{N_j} \sum_{k=1}^{N_j}  \frac{d^2}{d \lambda_q d\lambda_q^T} \mathcal L^*_{j,k} (q^*(\theta)).
\end{align}
From \Eqref{eq:lfa_grad} we see that $q^*(\theta)$ is a fixed point of $\mathcal L(q(\theta))$, and since the Hessian in \Eqref{eq:lfa_hessian} can be expressed by 
summing negative definite matrices and multiplying them by positive numbers, the resulting Hessian is also negative definite, and hence $q^*(\theta)$ maximizes the global ELBO $\mathcal L(q(\theta))$.

\end{proof}


\begin{prop}[cf. Property 3.2 of \citealt{Ashman_2022}]
\label{thm:lfa_prop4-app}
Assume the prior and approximate likelihood factors are in the unnormalized exponential family 
$t_{m}(\theta)= t_{m} (\theta; \lambda_{m}) = \exp( \lambda_{m}^T T(\theta) )$, so the variational distribution is in the normalized exponential family 
$q(\theta) = \exp(\lambda_q^T T(\theta) - A(\lambda_q) )$. Then a  stationary point of the local ELBO at global update $s$ for the $k$th local model at client $m$, 
$\frac{ \text{d} \mathcal L_{m,k}^{(s)} (q_k(\theta)) }{\text{d} \lambda_q } = 0$, implies $$ \lambda_{m,k}^{(s)} = N_m \frac{d}{d \mu_{q^{(s-1)}} } \mathbb E_{q^{(s-1)}} [\log p(x_{m,k} | \theta) ] .$$
In addition, a stationary point for all $N_m$ local models' ELBO implies
$$ \lambda_{m}^{(s)} = \frac{1}{N_m} \sum_{k=1}^{N_m} \lambda_{m,k}^{(s)} =  \frac{d}{d \mu_{q^{(s-1)}} } \mathbb E_{q^{(s-1)}} [ \log p(x_{m} | \theta) ] ,$$
which matches the regular PVI fixed point equation.
\end{prop}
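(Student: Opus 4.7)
The plan is to mirror the proof of Property~3.2 in \cite{Ashman_2022}, carrying the tempered likelihood $[p(x_{m,k}|\theta)]^{N_m}$ through the exponential-family calculus in place of the usual local likelihood, and then to sum the resulting per-shard stationarity conditions.

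First I would write out the local ELBO for shard $k$ at client $m$ using the exponential-family forms $q(\theta)=\exp(\lambda_q^T T(\theta)-A(\lambda_q))$, $t_j(\theta)=\exp(\lambda_j^T T(\theta))$, and $q^{(s-1)}(\theta)=\exp((\lambda_q^{(s-1)})^T T(\theta) - A(\lambda_q^{(s-1)}))$. After expanding logarithms and collecting terms, the non-likelihood part of $\mathcal{L}_{m,k}^{(s)}(q(\theta))$ reduces to
$$ (\lambda_q^{(s-1)}-\lambda_m^{(s-1)})^T \mu_q \;-\; \lambda_q^T \mu_q \;+\; A(\lambda_q) \;-\; A(\lambda_q^{(s-1)}),$$
where $\mu_q=\nabla A(\lambda_q)=\mathbb{E}_q[T(\theta)]$, and the likelihood part is $N_m\,\mathbb{E}_q[\log p(x_{m,k}|\theta)]$.

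Next I would differentiate with respect to $\lambda_q$. Using the exponential-family identities $\nabla A(\lambda_q)=\mu_q$, $\nabla^2 A(\lambda_q)=\mathrm{d}\mu_q/\mathrm{d}\lambda_q$, and the chain rule $\mathrm{d}/\mathrm{d}\lambda_q = \nabla^2 A(\lambda_q)\,\mathrm{d}/\mathrm{d}\mu_q$ applied to $\mathbb{E}_q[\log p(x_{m,k}|\theta)]$, the terms $-\lambda_q^T \mu_q + A(\lambda_q)$ contribute $-\nabla^2 A(\lambda_q)\,\lambda_q$ and the cavity-linear term contributes $\nabla^2 A(\lambda_q)(\lambda_q^{(s-1)}-\lambda_m^{(s-1)})$. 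Factoring out $\nabla^2 A(\lambda_q)$, which is nonsingular for a minimal exponential family, the stationarity condition becomes
$$ \lambda_q \;=\; \lambda_q^{(s-1)} - \lambda_m^{(s-1)} \;+\; N_m\,\frac{\mathrm{d}}{\mathrm{d}\mu_q}\mathbb{E}_q[\log p(x_{m,k}|\theta)].$$
Identifying the new $k$-th local natural parameter via $\lambda_{m,k}^{(s)} := \lambda_q - (\lambda_q^{(s-1)}-\lambda_m^{(s-1)})$, which is exactly the increment contributed by shard $k$ to the $m$-th factor, yields the first claim (with the derivative evaluated at the optimized $q$; as in the regular PVI fixed-point analysis of \cite{Ashman_2022}, one views this as a fixed-point equation and writes $q^{(s-1)}$ for the point at which the derivative is taken).

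For the second claim I would then sum the per-shard stationarity equations. Since $\mathbb{E}_{q^{(s-1)}}[\log p(x_m|\theta)]=\sum_{k=1}^{N_m}\mathbb{E}_{q^{(s-1)}}[\log p(x_{m,k}|\theta)]$ by the assumption that the shards partition the local data, linearity of the derivative gives
$$ \lambda_m^{(s)} \;=\; \frac{1}{N_m}\sum_{k=1}^{N_m}\lambda_{m,k}^{(s)} \;=\; \frac{1}{N_m}\sum_{k=1}^{N_m} N_m\,\frac{\mathrm{d}}{\mathrm{d}\mu_{q^{(s-1)}}}\mathbb{E}_{q^{(s-1)}}[\log p(x_{m,k}|\theta)] \;=\; \frac{\mathrm{d}}{\mathrm{d}\mu_{q^{(s-1)}}}\mathbb{E}_{q^{(s-1)}}[\log p(x_m|\theta)],$$
which is precisely the standard PVI fixed-point equation. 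The main obstacle I expect is the $q^{(s-1)}$ vs.\ optimized-$q$ subtlety already mentioned in \cite{Ashman_2022}: strictly the derivative appears at the optimized $q$, and one must either invoke the conjugate-linear case where the derivative is independent of the evaluation point, or interpret the equation as a fixed-point condition so that the $N_m$ factor cancels exactly in the average.
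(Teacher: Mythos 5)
Your proof is correct and takes essentially the same route as the paper's: a per-shard stationarity computation in which the tempered likelihood $[p(x_{m,k}|\theta)]^{N_m}$ contributes the factor $N_m$, followed by averaging so that the factor cancels and conditional independence recombines the shard log-likelihoods into $\log p(x_m|\theta)$ --- the only difference being that you write out explicitly the exponential-family differentiation (including the $q^{(s-1)}$ vs.\ optimised-$q$ fixed-point convention) that the paper delegates to \cite[Supplement A.4]{Ashman_2022} ``with minor changes''. One small wording point: the identity $\sum_{k}\mathbb{E}_{q^{(s-1)}}[\log p(x_{m,k}|\theta)]=\mathbb{E}_{q^{(s-1)}}[\log p(x_m|\theta)]$ requires conditional independence of the shards given $\theta$ (which the paper states explicitly), not merely that the shards partition the local data.
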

\begin{proof}
Writing 
$\mathcal L_{m,k}^{(s)} (q_k(\theta)) = 
    \int d\theta q_k(\theta) \log \frac{ [p(x_{m,k} | \theta)]^{N_m} q^{(s-1)}(\theta) }{q_k(\theta) t_m^{(s-1)}(\theta) }  $ 
and noting that all $N_m$ local models are started in parallel from the same point (so $\mu_{q^{(s-1)}_k} = \mu_{q^{(s-1)}}, q_k^{(s-1)} = q^{(s-1)} \ \forall k$), 
then following the proof in \cite[Supplement A.4]{Ashman_2022}  
with minor changes establishes the first claim: 
$$\lambda_{m,k}^{(s)} = N_m \frac{d}{d \mu_{q^{(s-1)}} } \mathbb E_{q^{(s-1)}} [ \log  p(x_{m,k} | \theta) ] .$$
 
Looking now at the average of local parameters we have 
\begin{align}
\lambda_m^{(s)} &= \frac{1}{N_m} \sum_{k=1}^{N_m} \lambda_{m,k}^{(s)} \\
    &= \frac{d}{d \mu_{q^{(s-1)}} } 
        \mathbb E_{q^{(s-1)}} [ \sum_{k=1}^{N_m} \log p(x_{m,k} | \theta) ] \\
    &= \frac{d}{d \mu_{q^{(s-1)}} } 
        \mathbb E_{q^{(s-1)}} [ \log p(x_{m} | \theta) ],
\end{align}
where the last equality assumes that the data are conditionally independent given the model parameters.
\end{proof}


\begin{prop}[cf. Property 5 of \citealt{Bui2018}]
Under the assumptions of Prop.~\ref{thm:lfa_prop4-app}, using local averaging with parallel global updates result in identical dynamics for $q(\theta)$, given by the
following equation, regardless of the partition of the data employed:
$$ \lambda_q^{(s)} = \lambda_0 + \frac{d}{d \mu_{q^{(s-1)}}} \mathbb E_{q^{(s-1)}} [ \log p(x | \theta) ] 
    = \lambda_0 + \sum_{i=1}^{\sum_j n_j} \frac{d}{d \mu_{q^{(s-1)}}} \mathbb E_{q^{(s-1)}} [ \log p(x_i | \theta) ],$$
where $x_i$ is the $i$th data point, and $n_j$ is the number of local samples on client $j$.
\end{prop}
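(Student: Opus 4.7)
The plan is to combine the fixed-point equation from Property~\ref{thm:lfa_prop4-app} with the synchronous global update rule for PVI, and then invoke conditional independence of the data given $\theta$. The key observation is that Property~\ref{thm:lfa_prop4-app} already tells us that, at a stationary point of the local ELBOs, the updated client factor $\lambda_m^{(s)}$ depends only on the full local data $x_m$ via $\frac{d}{d\mu_{q^{(s-1)}}}\mathbb{E}_{q^{(s-1)}}[\log p(x_m\mid\theta)]$, and \emph{not} on how client $m$ subdivides its data into the $N_m$ shards. So intra-client partitioning independence is already done; we just need to lift this to the global dynamics across clients.

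First, I would write down the synchronous global update in the exponential family setting. Because the prior and approximate factors are in the unnormalised exponential family, the global natural parameter after update $s$ with all clients updating in parallel is
\begin{equation*}
    \lambda_q^{(s)} = \lambda_0 + \sum_{m=1}^M \lambda_m^{(s)},
\end{equation*}
where $\lambda_0$ is the prior natural parameter and $\lambda_m^{(s)}$ are the updated client factors (equivalently, applying the $\Delta t_m^{(s)}$ updates simultaneously to the previous global state). Under parallel updates, every client uses the same $q^{(s-1)}$ as its starting point, so the prerequisite of Property~\ref{thm:lfa_prop4-app} (common initial values across shards) holds uniformly.

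Second, I would plug in the fixed-point expression from Property~\ref{thm:lfa_prop4-app}:
\begin{equation*}
    \lambda_m^{(s)} = \frac{d}{d\mu_{q^{(s-1)}}}\mathbb{E}_{q^{(s-1)}}\!\left[\log p(x_m\mid\theta)\right],
\end{equation*}
which already collapses the $N_m$ shards within client $m$. Summing across clients and pulling the derivative and expectation outside the sum gives
\begin{equation*}
    \lambda_q^{(s)} = \lambda_0 + \frac{d}{d\mu_{q^{(s-1)}}}\mathbb{E}_{q^{(s-1)}}\!\left[\sum_{m=1}^M \log p(x_m\mid\theta)\right].
\end{equation*}
Third, invoking conditional independence of the data given $\theta$, we have $\sum_m \log p(x_m\mid\theta) = \log p(x\mid\theta) = \sum_{i=1}^{\sum_j n_j} \log p(x_i\mid\theta)$, which yields the claimed identity. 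The right-hand side manifestly depends only on the total dataset $x$, so it is invariant to the partition among clients and shards.

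The main obstacle, such as it is, will be bookkeeping rather than substance: carefully justifying that the synchronous global update in this exponential family just adds the client natural-parameter \emph{updates} on top of $\lambda_0$ (this is really where $\lambda^{(s)}_m$ versus $\Delta\lambda_m^{(s)}$ needs to be handled correctly, analogously to \citealt{Bui2018}), and being explicit that every client starts each global round from the same $q^{(s-1)}$, which is what allows Property~\ref{thm:lfa_prop4-app} to apply simultaneously at all $m$. Once these are stated, the derivation is essentially the two algebraic steps above.
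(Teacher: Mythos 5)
Your proposal is correct and follows essentially the same route as the paper's proof: both invoke the fixed-point identity $\lambda_m^{(s)} = \frac{d}{d\mu_{q^{(s-1)}}}\mathbb{E}_{q^{(s-1)}}[\log p(x_m\mid\theta)]$ from Prop.~\ref{thm:lfa_prop4-app} (which already handles intra-client shards), substitute it into the synchronous update $\lambda_q^{(s)} = \lambda_0 + \sum_m \lambda_m^{(s)}$, and use conditional independence of the data given $\theta$ to collapse everything into a per-datapoint sum independent of the partition. The only cosmetic difference is that the paper verifies partition-independence by separately computing the $M$-client and $M=1$ cases and matching them, whereas you collapse directly via additivity of the log-likelihood; the content is identical.
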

\begin{proof}
With $M$ clients doing a parallel update, from  Property~\ref{thm:lfa_prop4-app} we have 
\begin{align}
\lambda_q^{(s)} 
    &= \lambda_0 + \sum_{j=1}^M \lambda_j^{(s)} \\
    &= \lambda_0 + \sum_{j=1}^M \frac{d}{d \mu_{q^{(s-1)}} } 
        \mathbb E_{q^{(s-1)}} [ \log p(x_{j} | \theta) ] \\
    \label{eq:prop5_line}
    &= \lambda_0 + \sum_{j=1}^M \sum_{k=1}^{n_j} \frac{d}{d \mu_{q^{(s-1)}} } \mathbb E_{q^{(s-1)}} [ \log p(x_{m,k} | \theta) ],
\end{align}
where \Eqref{eq:prop5_line} follows due to data being conditionally independent given the model parameters.

On the other hand, with $M=1$ Property~\ref{thm:lfa_prop4-app} reads
\begin{align}
\lambda_q^{(s)} 
    &= \lambda_0 + \frac{d}{d \mu_{q^{(s-1)}} } 
        \mathbb E_{q^{(s-1)}} [ \log p(x | \theta) ] \\
    &= \lambda_0 + \sum_{i=1}^{n} \frac{d}{d \mu_{q^{(s-1)}} } 
        \mathbb E_{q^{(s-1)}} [ \log p(x_i | \theta) ],
\end{align}
which matches $\Eqref{eq:prop5_line}$, since $n=\sum_j^M n_j$ for any $M$.

\end{proof}

\paragraph{DP with local averaging}

\begin{thm}
\label{thm:localAvg-privacy-single-app}
Assume the change in the model parameters $\| \lambda_{m_k}^* - \lambda^{(s-1)} \|_2 \leq C, k=1,\dots,N_m$ for some known constant $C$, where $\lambda_{m_k}^*$ is a proposed solution to \Eqref{eq:locAvg-ELBO}, and $\lambda^{(s-1)}$ is the vector of common initial values. Then releasing $\Delta \hat \lambda_m^*$ 
is $(\varepsilon,\delta)$-DP, with $\delta \in (0,1)$ s.t. $\varepsilon = \mathbb O (\delta, q_{sample}=1, 1, \mathcal G_{\sigma})$, 
when 
\begin{align}
\label{eq:lfa_noisify_nat_params_change-app}
\Delta \hat \lambda_m^* &=
\frac{1}{N_m} \big[ \sum_{k=1}^{N_m} \big( \lambda_{m_k}^* - \lambda^{(s-1)} \big) + \xi \big]  ,
\end{align}
where $\xi \sim \mathcal N (0, \sigma^2 \cdot I)$.

\end{thm}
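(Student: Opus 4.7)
The plan is to treat this as a standard Gaussian mechanism argument together with post-processing. The quantity being released is a deterministic post-processing (multiplication by $1/N_m$) of $\sum_{k=1}^{N_m}(\lambda_{m_k}^* - \lambda^{(s-1)}) + \xi$, so by the post-processing immunity of DP it suffices to establish the privacy of the noised sum. For that, the key computation is an $\ell_2$-sensitivity bound for $f(x) := \sum_{k=1}^{N_m}(\lambda_{m_k}^* - \lambda^{(s-1)})$ under the bounded (substitution) neighbourhood used in the paper.

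First I would fix two neighbouring datasets $x,x'$ with $|x|=|x'|$ that differ in a single sample. By the setup of the paper, each individual's sample lives at exactly one client on exactly one local shard. So there is a unique index $k^*$ such that $\lambda^*_{m_{k^*}}$ is (potentially) different under $x$ and $x'$; every other $\lambda^*_{m_k}$ and the common initial value $\lambda^{(s-1)}$ are unchanged. Hence
\begin{equation*}
\|f(x)-f(x')\|_2 = \bigl\|(\lambda^*_{m_{k^*}}(x) - \lambda^{(s-1)}) - (\lambda^*_{m_{k^*}}(x') - \lambda^{(s-1)})\bigr\|_2 \leq 2C
\end{equation*}
by the triangle inequality, using the hypothesis $\|\lambda^*_{m_k} - \lambda^{(s-1)}\|_2 \leq C$ for every $k$.

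Next I would invoke the Gaussian mechanism (Definition~\ref{def:Gaussian-mehanism}) with sensitivity $2C$: adding $\xi \sim \mathcal N(0, \sigma^2 I)$ to $f$ yields a mechanism $\mathcal G_\sigma$. Because this is a single release on the full local shard (no subsampling beyond the trivial $q_{sample}=1$ and no composition across iterations, $T=1$), the accounting oracle gives the claimed bound $\varepsilon = \mathbb O(\delta, q_{sample}=1, 1, \mathcal G_\sigma)$. Finally, since $\Delta \hat\lambda_m^* = \tfrac{1}{N_m}(f(x)+\xi)$ is just a fixed linear rescaling of the Gaussian-mechanism output, post-processing immunity preserves the same $(\varepsilon,\delta)$ guarantee, completing the proof.

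The only potentially subtle step is the sensitivity analysis: one has to be careful that the substitution neighbourhood changes exactly one summand (so the $-\lambda^{(s-1)}$ terms cancel) rather than perturbing all $N_m$ optima simultaneously, and that the triangle-inequality factor of $2$ is the correct constant to feed into the accountant. Everything else (Gaussian mechanism, post-processing, accountant call) is mechanical.
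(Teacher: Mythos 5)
Your proof is correct and follows essentially the same route as the paper's own: both identify that a substitution-neighbouring dataset changes exactly one shard's optimum so the $\lambda^{(s-1)}$ terms cancel, bound the $\ell_2$-sensitivity of the sum by $2C$ via the triangle inequality, invoke the Gaussian mechanism with a single composition ($q_{sample}=1$, $T=1$), and finish by post-processing immunity for the $\tfrac{1}{N_m}$ rescaling. Your write-up is, if anything, slightly more explicit than the paper's about why only one summand changes and about the accountant call, but there is no substantive difference.
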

\begin{proof}
Considering neighbouring datasets as in the DP definition \ref{def:dp}, denoted by $m,m'$, only one of the local models is affected by the differing element, w.l.o.g. assume it is $\lambda_{m_1}^*$. 
For the difference in the sum query between neighbouring datasets we therefore immediately have 
\begin{align}
      \| \sum_{k=1}^{N_m} \big( \lambda_{m_k}^* - \lambda^{(s-1)} \big) - \sum_{k=1}^{N_m} \big( \lambda_{m_k'}^* - \lambda^{(s-1)} \big)  \|_2 &= 
      \| \lambda_{m_1}^* - \lambda^{(s-1)} -  \lambda_{m_1'}^* + \lambda^{(s-1)}  \|_2 \\
      & \leq \| \lambda_{m_1}^* - \lambda^{(s-1)} \|_2 +  \| \lambda_{m_1'}^* - \lambda^{(s-1)}  \|_2 \\
      & \leq 2C .
\end{align}
The sum query therefore corresponds to a single call to the Gaussian mechanism with sensitivity $2C$ and noise standard deviation $\sigma$, and since DP guarantees are not affected by post-processing such as taking average, the claim follows.
\end{proof}

\begin{cor}
\label{cor:localAvg-privacy-app}
A composition of $S$ global updates with local averaging using a norm bound $C$ for clipping  
is $(\varepsilon,\delta)$-DP, with $\delta \in (0,1)$ s.t. $\varepsilon = \mathbb O (\delta, q_{sample}=1, S, \mathcal G_{\sigma})$.
\end{cor}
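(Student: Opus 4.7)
The plan is to obtain this corollary as an immediate consequence of Theorem~\ref{thm:localAvg-privacy-single-app} together with the composition behaviour encoded in the accounting oracle of Definition~\ref{def:oracle}. First I would observe that, by Theorem~\ref{thm:localAvg-privacy-single-app}, each individual global update released by a client $m$ is produced by a single application of the Gaussian mechanism $\mathcal G_{\sigma}$ to the clipped sum $\sum_{k=1}^{N_m}(\lambda_{m_k}^* - \lambda^{(s-1)})$, since the post-processing by $1/N_m$ does not affect the DP guarantee. The local optimisation producing each $\lambda_{m_k}^*$ uses only the sensitive data through this single clipped, noised release per global update, so no additional privacy cost beyond that one mechanism call is incurred within a single round.

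Next I would argue that across the $S$ global updates, the client accesses its local dataset in $S$ such Gaussian releases, each with the same sensitivity bound $C$ and noise scale $\sigma$. Since local averaging does not subsample, we have $q_{sample}=1$ throughout; this is the relevant regime for the oracle. The composition is therefore an $S$-fold composition of the base mechanism $\mathcal G_{\sigma}$ with $q_{sample}=1$, and by the definition of $\mathbb O$ (Definition~\ref{def:oracle}), the total privacy parameters are precisely $(\varepsilon, \delta)$ with $\varepsilon = \mathbb O(\delta, q_{sample}=1, S, \mathcal G_{\sigma})$.

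There is no real obstacle here: the result is essentially a bookkeeping statement. The only subtlety worth mentioning explicitly is that, unlike DP-SGD where privacy must be accounted per gradient step (hence the factor $T$ in Theorem~\ref{thm:dp-sgd-privacy-app}), in local averaging the privacy cost is attached to global updates rather than local optimisation steps, because the local optimisation touches the sensitive data only through quantities that are summarised into a single clipped release per round. This observation, together with post-processing immunity to recover the global approximation from the $S$ released updates, completes the argument.
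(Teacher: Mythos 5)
Your proposal is correct and follows essentially the same route as the paper: the paper's proof likewise invokes Theorem~\ref{thm:localAvg-privacy-single-app} for each global update (with clipping enforcing the norm bound) and then composes the $S$ releases via the accounting oracle with $q_{sample}=1$. Your added remarks on post-processing and on why local optimisation steps incur no extra cost are sound elaborations of the same argument, not a different proof.
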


\begin{proof}
Since each global update is DP by Theorem~\ref{thm:localAvg-privacy-single-app} when we enforce the norm bound by clipping, the result follows immediately by composing over the global updates.
\end{proof}


\begin{thm}
\label{thm:local-avg-local-noise-app}
With local averaging, The DP noise standard deviation can be scaled as $\mathcal O (\frac{1}{N_m})$, where $N_m$ is the number of local partitions. Therefore, 
the effect of DP noise will vanish on the local factor level when the local dataset size and the number of local partitions grow.
\end{thm}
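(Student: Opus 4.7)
The plan is to read the noise scaling directly off the mechanism in Theorem~\ref{thm:localAvg-privacy-single-app}. Recall that the released local update is
\begin{equation*}
\Delta \hat \lambda_m^* = \frac{1}{N_m} \Big[ \sum_{k=1}^{N_m} \big( \lambda_{m_k}^* - \lambda^{(s-1)} \big) + \xi \Big],
\qquad \xi \sim \mathcal N(0, \sigma^2 I).
\end{equation*}
The noise contribution to the local factor update is therefore $\xi / N_m$, whose per-coordinate standard deviation is $\sigma / N_m$. This gives the $\mathcal O(1/N_m)$ scaling claimed.

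The key point to argue is that $\sigma$ itself does not need to grow with $N_m$ to preserve the same $(\varepsilon,\delta)$ guarantee. By the clipping assumption $\|\lambda_{m_k}^* - \lambda^{(s-1)}\|_2 \le C$, the $\ell_2$-sensitivity of the unnormalised sum $\sum_k (\lambda_{m_k}^* - \lambda^{(s-1)})$ under substitution of one local data point is at most $2C$, independent of $N_m$ (as computed in the proof of Theorem~\ref{thm:localAvg-privacy-single-app}). Therefore, for a fixed privacy budget, the oracle $\mathbb O(\delta, q_{\text{sample}}=1, 1, \mathcal G_\sigma)$ returns a $\sigma$ that depends on $C$ and $(\varepsilon,\delta)$ but not on $N_m$. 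Post-processing by the $1/N_m$ rescaling preserves the DP guarantee while shrinking the effective noise on the released quantity.

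I would then close by combining these observations: since $\sigma$ can be held fixed in $N_m$ for any prescribed $(\varepsilon,\delta)$, the standard deviation of the noise component of $\Delta \hat \lambda_m^*$ is $\sigma/N_m$, which tends to zero as $N_m \to \infty$. Growing the local dataset size is what makes it feasible to increase $N_m$ without each shard becoming vacuous (each shard still needs enough samples to yield meaningful $\lambda_{m_k}^*$), hence the statement that the effect of DP noise vanishes on the local factor level as both the local dataset size and the number of local partitions grow.

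There is no substantive obstacle here: the proof is essentially a one-line identification of the rescaled Gaussian mechanism, combined with the observation that sensitivity, and hence the required $\sigma$, is independent of $N_m$. The only thing to be careful about is making explicit that the claim is purely about the noise injected on the local factor update and not about the quality of the non-private estimator $\frac{1}{N_m}\sum_k(\lambda_{m_k}^* - \lambda^{(s-1)})$ itself, whose variance behaviour is discussed separately in the main text via the reference to \cite{Zhang2013-er}.
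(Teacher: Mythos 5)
Your proposal is correct and follows essentially the same route as the paper's proof: rewriting the released update so the noise term appears as $\xi/N_m$ with per-coordinate standard deviation $\sigma/N_m$, and concluding that the DP average converges to the non-DP average as $N_m \to \infty$. Your explicit observation that the sensitivity bound $2C$ (and hence the calibrated $\sigma$) is independent of $N_m$ is left implicit in the paper --- it is already built into Theorem~\ref{thm:localAvg-privacy-single-app} --- so making it explicit is a welcome but not substantively different addition.
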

\begin{proof}
Rewriting \Eqref{eq:lfa_noisify_nat_params_change-app} as 
\begin{equation}
\label{eq:local-avg-noise-with-partitions}
\Delta \hat \lambda_m^* 
= \frac{1}{N_m} \big( \sum_{k=1}^{N_m} \lambda_{m_k}^* - \lambda^{(s-1)} \big) + \frac{\xi}{N_m} ,
\end{equation}
where $\xi \sim \mathcal N (0, \sigma^2 \cdot I)$.

Letting the number of local partitions grow, 
we immediately have
\begin{align}
    \lim_{N_m \rightarrow \infty} \Delta \hat \lambda_m^* &=
    \lim_{N_m \rightarrow \infty} \Delta \lambda_m^* ,
\end{align}
where $\Delta \lambda_m^*$ is the corresponding non-DP average.
\end{proof}


\begin{thm}
\label{thm:exp-family-avg-app}
Assume the effective prior $p_{\setminus j}(\eta)$, and the likelihood $p(x_j|\eta), j \in \{1,\dots,M\}$ are in a conjugate exponential family, where $\eta$ are the natural parameters. Then the number of partitions used in local averaging does not affect the non-DP posterior.
\end{thm}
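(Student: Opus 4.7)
The plan is to show that in the conjugate exponential family setting the natural parameter update produced by a client running local averaging is identical to the update produced by a standard (un-partitioned) PVI step on the same local data, so that the entire sequence of global iterates $q(\theta\mid\lambda^{(s)})$ is independent of the choice of $N_m$.

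First I would invoke Property~\ref{thm:lfa_prop4-app}, already proved in this appendix, which establishes that whenever the prior and approximate factors lie in the exponential family, a stationary point of the $k$th local ELBO in \Eqref{eq:locAvg-ELBO} satisfies
\[
\lambda_{m,k}^{(s)} \;=\; N_m \, \frac{d}{d\mu_{q^{(s-1)}}}\, \mathbb E_{q^{(s-1)}}\!\bigl[\log p(x_{m,k}\mid\theta)\bigr],
\]
and hence the local average obeys
\[
\lambda_m^{(s)} \;=\; \frac{1}{N_m}\sum_{k=1}^{N_m}\lambda_{m,k}^{(s)} \;=\; \frac{d}{d\mu_{q^{(s-1)}}}\,\mathbb E_{q^{(s-1)}}\!\bigl[\log p(x_m\mid\theta)\bigr],
\]
which is exactly the standard PVI fixed point for client $m$. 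The factor $N_m$ introduced by tempering in \Eqref{eq:locAvg-ELBO} cancels the $1/N_m$ from the averaging step.

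Next I would use the conjugacy hypothesis to promote this fixed point identity into a true equality of the updates produced by the algorithm. Since $p_{\setminus m}(\eta)$ and $p(x_m\mid\eta)$ lie in matching conjugate exponential families, the tempered tilted distribution on shard $k$, proportional to $p_{\setminus m}(\eta)\,[p(x_{m,k}\mid\eta)]^{N_m}$, sits in the same exponential family as $q$. Consequently $\mathcal Q$ contains the exact minimiser of the KL to this tempered tilted distribution, so the stationary point $\lambda_{m,k}^*$ is the unique global optimum of the $k$th local ELBO. The average $\frac{1}{N_m}\sum_k(\lambda_{m,k}^*-\lambda^{(s-1)})$ therefore coincides exactly with the natural parameter increment that a single-partition run ($N_m=1$ on the full local dataset $x_m$) would produce, via additivity of the sufficient statistics $T(x_m)=\sum_k T(x_{m,k})$ in the conjugate update rule.

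Finally, since each client's contribution to $q(\theta\mid\lambda^{(s)})$ in Algorithm~\ref{alg:local_avg} enters only through $\Delta t_m^{(s)}(\theta)$, which is fully determined by $\Delta\lambda_m^{(s)}$, and this increment is independent of $N_m$, an induction on $s$ shows that every global iterate is identical to the one produced by local averaging with any other choice of partition sizes, in particular $N_j=1$ (standard PVI). The main obstacle is purely bookkeeping: one must be careful that the $1/N_m$ KL re-weighting in \Eqref{eq:locAvg-ELBO} combines with the $N_m$-scaling of the natural parameter increment in the way described, and that the sufficient statistics of the shards additively recover those of the full local dataset. Conjugacy is essential---without it the tempered tilted distribution would fall outside $\mathcal Q$, each local optimisation would only approximately match the tilted distribution, and the averaging would in general introduce an $N_m$-dependent bias.
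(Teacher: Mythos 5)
Your proposal is correct and follows essentially the same route as the paper's proof: the decisive step in both is the conjugate-exponential-family bookkeeping in which the tempering exponent $N_m$ in \Eqref{eq:locAvg-ELBO} cancels the $1/N_m$ of the averaging step, so that the summed sufficient statistics $\sum_k \sum_{k_i} T(x_{k_i}) = \sum_i T(x_i)$ (and the pseudo-count updates) reproduce the full-local-data conjugate posterior regardless of the partition. The only difference is presentational: the paper computes the shard posteriors directly and implicitly relies on conjugacy making the local ELBO optimum exact, whereas you make that exactness explicit (the tempered tilted distribution lies in $\mathcal Q$, so the KL minimiser is the tilted distribution itself) and additionally route through Property~\ref{thm:lfa_prop4-app} and an induction over global updates, which is harmless but not needed for the statement as given.
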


\begin{proof}
To avoid notational clutter, we drop the client index $j$ in the rest of this proof, and simply write, e.g., $x$ for the local data $x_j$ and $p(\eta)$ for the effective prior $p_{\setminus j}(\eta)$.

Due to the conjugacy we can write the effective prior and the likelihood as 
\begin{align}
    p(x|\eta) &= h(x) \exp(\eta^T T(x) - A(\eta) ), \\
    p(\eta | \tau_0,n_0) &= H(\tau_0,n_0) \exp(\tau_0^T \eta - n_0 A(\eta)) ,
\end{align}
where $T$ are the sufficient statistics, $A$ is the log-partition function, $\tau_0,n_0$ are the effective prior parameters, and $h,H$ are some suitable functions determined by the exponential family.

The local posterior given a vector $x$ of $n$ iid observations is in the same exponential family:
$$ p(\eta | x,\tau_0,n_0) \propto \exp \big( (\tau_0 + \sum_{i=1}^{n} T(x_{i}) )^T \eta - (n_0 +n)A(\eta) \big), $$
so the changes in parameters when updating from prior to posterior are 
\begin{align}
    \label{eq:exp-family_posterior1}
    \tau_0 & \rightarrow \tau_0 + \sum_{i=1}^{n} T(x_{i}) \\
    \label{eq:exp-family_posterior2}
    n_0 & \rightarrow n_0 + n .
\end{align}

Now partitioning the data into $N$ shards, each with $n_k=\frac{n}{N}$ samples, together with the tempered (cold) likelihood $p(x|\eta)^N$ results in a posterior
$$p_{shard} (\eta | x_{k},\tau_0,n_0) \propto \exp( (\tau_0 + \sum_{k_i} N T(x_{k_i}) )^T \eta - (n_0 + N n_k)A(\eta)  )  ,$$
so the updates for shard $k$ are 
\begin{align}
    \tau_0 & \rightarrow \tau_0 + \sum_{k_i} N T(x_{k_i}) \\
    n_0 & \rightarrow n_0 + N n_k .
\end{align}

Averaging over the posterior parameters corresponding to the $N$ local data shards we have updates 
\begin{align}
    \tau_0 \rightarrow & \frac{1}{N} \sum_{k=1}^N \big( \tau_0 + \sum_{k_i} N T(x_{k_i}) \big) \\
    = & \tau_0 + \sum_{k=1}^N \sum_{k_i} T(x_{k_i}) \\
    = & \tau_0 + \sum_{i=1}^n T(x_{i}), \text{ and } \\
    n_0 \rightarrow & \frac{1}{N} \sum_{k=1}^N \big( n_0 + N n_k\big) \\
    = & n_0 + n,
\end{align}
which match the expressions for the regular local posterior using full local data given in \Eqref{eq:exp-family_posterior1} and \Eqref{eq:exp-family_posterior2}.
\end{proof}

Figure~\ref{fig:mimic-trade-off} shows the effects of changing the number of local data shards using a logistic regression model with and without DP. As discussed in Section~\ref{sec:local-avg}, 
when the assumptions of Theorem~\ref{thm:exp-family-avg-app} are not satisfied, we would expect that increasing the number of local data partitions can lead to slower convergence due to increased estimator variance. This can be seen in Figure~\ref{fig:mimic-trade-off} a). In contrast, under privacy constraints increasing the number of local partitions can lead to improved performance, since adding local partitioning can mitigate the DP noise effect, as seem in Figure~\ref{fig:mimic-trade-off} b). Note that increasing the number of local partitions can also increase the bias due to clipping, especially with tight clipping bound. 
In this experiment, we use same fixed hyperparameters in all runs: number of global updates or communication rounds $= 5$, number of local steps $= 50$, learning rate = $10^{-2}$, damping $= .4$.

\begin{figure}[htb]%
	\begin{center}%
    \subfigure[]{
	\includegraphics[width=.45\columnwidth,trim={.0cm 0cm .0cm 0cm},clip]
    {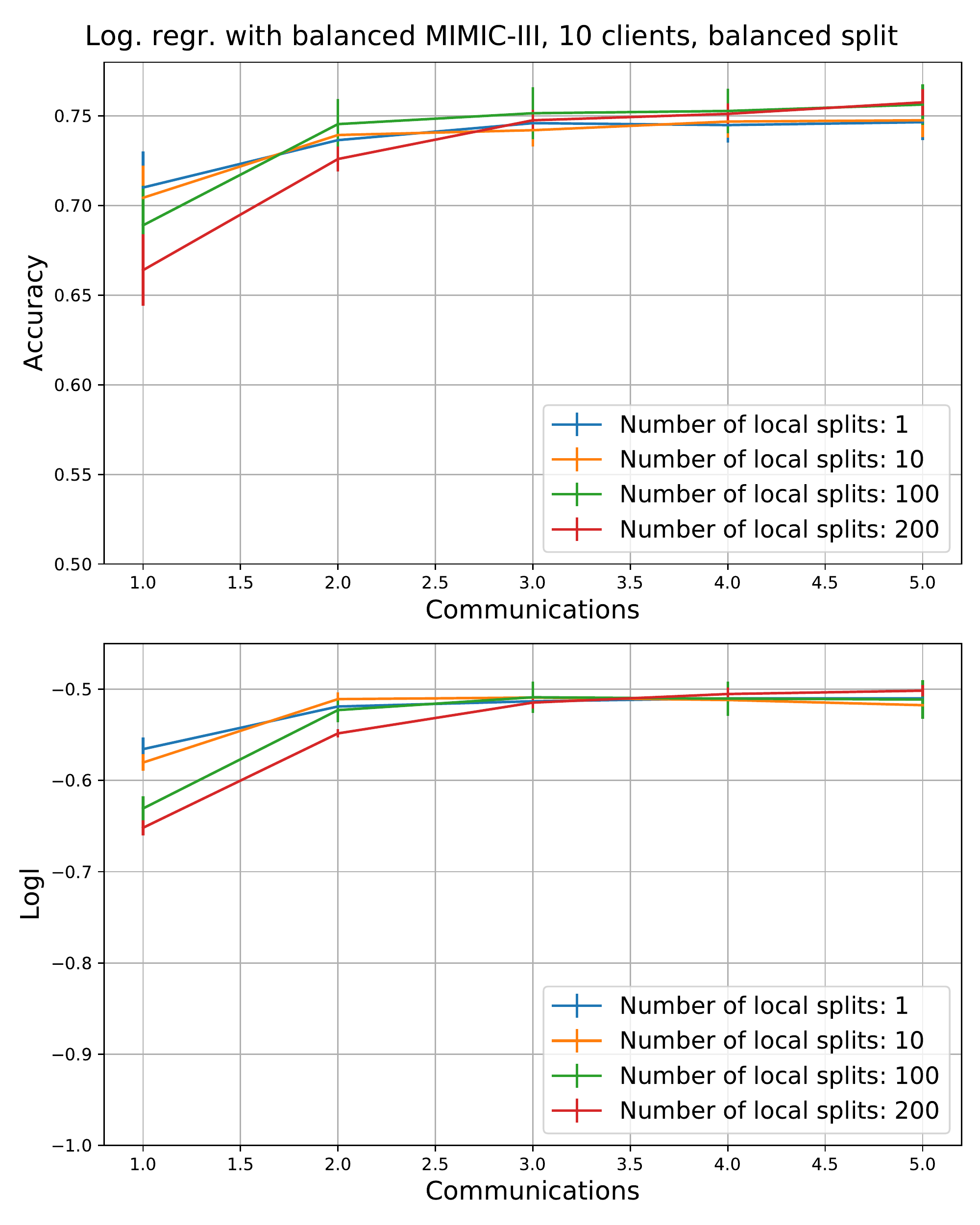}}
	\subfigure[]{
	\includegraphics[width=.45\columnwidth,trim={.0cm 0cm .0cm 0cm},clip]
    {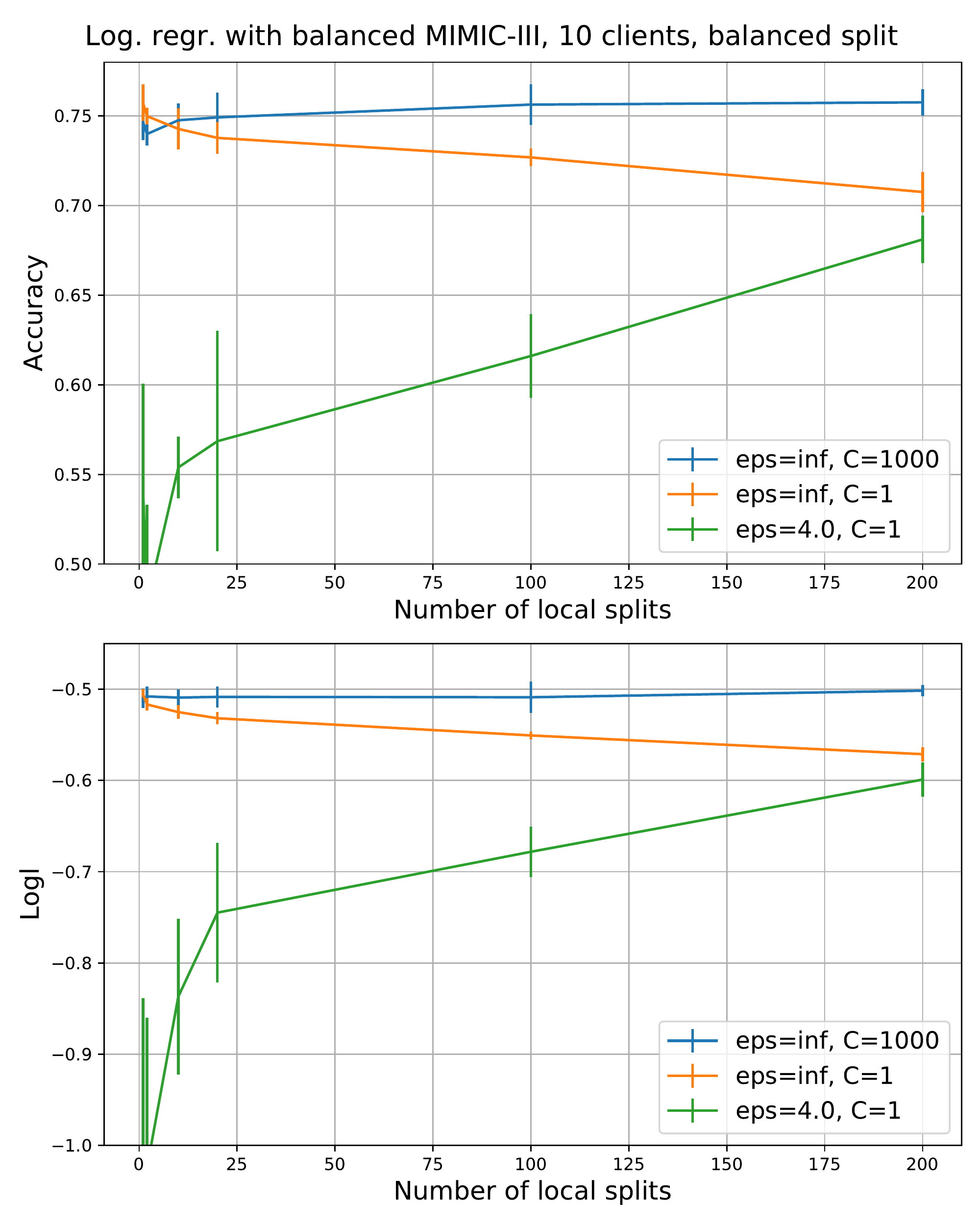}}
	\caption{\label{fig:mimic-trade-off} Logistic regression, balanced MIMIC-III data with 10 clients: mean over 5 seeds with SEM, balanced split. a) Without DP, increasing the number of local partitions can lead to slower convergence, b) non-DP with clipping norm $C$, and $(4,10^{-5})$-DP: wihout privacy increasing the number of local partitions does not help (non-DP with clipping $C=1000$), or even hurts performance (non-DP with clipping $C=1$), while with DP, increasing the number of local partitions mitigates the effect of DP noise.}
	\end{center}%
\end{figure}%

\begin{thm}
\label{thm:local-avg-secure-aggr-noise-app}
Using local averaging with $M$ clients and a shared number of local partitions $N_j=N \ \forall j$ assume the clients have access to a trusted aggregator. Then for any given privacy parameters $\veps, \delta$, the noise standard deviation added by a single client can be scaled as $\mathcal O (\frac{1}{\sqrt M} )$ while guaranteeing the same privacy level.
\end{thm}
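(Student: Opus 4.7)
The plan is to mirror the argument of Theorem~\ref{thm:localAvg-privacy-single-app}, but applied to the aggregated query across all $M$ clients. First, I would observe that with a trusted aggregator the only quantity released at global update $s$ is the sum (equivalently, the average, since $M$ is public) of the $M$ client contributions. Writing each client's non-DP contribution as $u_m = \sum_{k=1}^{N} (\lambda^*_{m_k} - \lambda^{(s-1)})$, the released quantity is
\begin{equation*}
U = \sum_{m=1}^{M} u_m + \sum_{m=1}^{M} \xi_m,
\end{equation*}
where each client draws independent noise $\xi_m \sim \mathcal{N}(0, \sigma_{loc}^2 I)$.

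Next, I would carry out the sensitivity analysis for the aggregated sum. By the sample-level neighbouring assumption, two neighbouring global datasets differ in a single sample, which lives on exactly one client, say $m_0$, and inside one of its shards. Hence $u_m$ for $m \neq m_0$ is unchanged, and as in the proof of Theorem~\ref{thm:localAvg-privacy-single-app}, only one summand of $u_{m_0}$ changes. The clipping bound $\|\lambda^*_{m_k} - \lambda^{(s-1)}\|_2 \leq C$ then yields
\begin{equation*}
\|U(x) - U(x')\|_2 = \|u_{m_0}(x) - u_{m_0}(x')\|_2 \leq 2C,
\end{equation*}
i.e.\ the sensitivity of the aggregated query is the same $2C$ as in the single-client case.

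The key observation is then that $\sum_{m=1}^{M} \xi_m \sim \mathcal{N}(0, M \sigma_{loc}^2 I)$, so the aggregator's view is equivalent to running the Gaussian mechanism of sensitivity $2C$ with noise standard deviation $\sqrt{M}\, \sigma_{loc}$. Setting $\sqrt{M}\, \sigma_{loc} = \sigma$, i.e.\ $\sigma_{loc} = \sigma / \sqrt{M}$, produces the same distribution as the single-client local averaging release analysed in Theorem~\ref{thm:localAvg-privacy-single-app}, so the same $(\varepsilon, \delta)$ bound from the accounting oracle $\mathbb{O}$ applies. Composition over the $S$ global updates (as in Corollary~\ref{cor:localAvg-privacy-app}) and post-processing (dividing by $MN$ locally to recover the average, plus any downstream use of the aggregate) preserve the guarantee, giving the advertised $\mathcal{O}(1/\sqrt{M})$ scaling.

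The only step that requires care is the honesty assumption on the aggregator and clients: the proof implicitly relies on the fact that individual $\xi_m$'s and $u_m$'s are never revealed, only their sum, so that the reduction to the single-shot Gaussian mechanism of noise scale $\sqrt{M}\, \sigma_{loc}$ is valid. This matches the trusted-aggregator assumption stated at the start of Section~\ref{sec:DP-PVI}, so no additional machinery is needed.
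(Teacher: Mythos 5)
Your proof is correct and follows essentially the same route as the paper's: both arguments reduce the trusted-aggregator release to a single Gaussian mechanism by noting that the summed noises $\sum_{m=1}^{M}\xi_m$ have standard deviation $\sqrt{M}\,\sigma_{loc}$, and then match this to the single-client target noise level to obtain $\sigma_{loc}=\sigma/\sqrt{M}$. You are in fact slightly more explicit than the paper in spelling out that the aggregated query retains sensitivity $2C$ (since the differing sample lives on exactly one client) and in flagging the honesty assumption, but these are refinements of the same argument rather than a different approach.
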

\begin{proof}
Let $\eta \sim \mathcal N (0, \sigma^2 \cdot I)$, 
and denote by $\sigma_0$ the noise standard deviation that locally guarantees the required DP level for every client with some known  norm bound $C$ (possibly due to clipping), and assign equal noise shares over clients. 
The message for a synchronous global update $s$ is 
\begin{align}
    \prod_{j=1}^M \Delta t_m^{(s)} 
    &= \sum_{j=1}^M \big( \frac{1}{N} [ \sum_{k=1}^{N} ( \lambda_{j_k}^* - \lambda^{(s-1)} ) + \eta ] \big) \\
\label{eq:lfa_sec_aggr_noise_var}
    &= \frac{1}{N} \big( \sum_{j=1}^M \sum_{k=1}^{N} (\lambda_{j_k}^* - \lambda^{(s-1)} )
        + \sum_{j=1}^M \eta \big)  .
\end{align}
To match the target local noise standard deviation with the aggregated noise standard deviation we need
\begin{align}
\sum_{j=1}^M \sigma^2 &\geq \sigma_0^2 \\
\Leftrightarrow \sigma &\geq \frac{\sigma_0}{\sqrt M} .
\end{align}
Setting $\sigma$ to match the lower bound, we see that the total noise magnitude on the global approximation level in \Eqref{eq:lfa_sec_aggr_noise_var} does not change with $M$.
\end{proof}

Looking at Theorem~\ref{thm:local-avg-secure-aggr-noise-app}, 
when the global noise level is constant, adding a client to the protocol will reduce the relative effect of the noise in \Eqref{eq:lfa_sec_aggr_noise_var} if it increases the non-noise part in the sum. On the other hand, on global convergence we would have 
$$ \sum_{j=1}^M \big( \frac{1}{N} \sum_{k=1}^{N} \lambda_{j_k}^* - \lambda^{(s-1)} \big) = 0 ,$$ so an update near a global optimum will be mostly noise.

When applying Theorem~\ref{thm:local-avg-secure-aggr-noise-app}, DP is guaranteed jointly on the global model level, while the local approximations have less noise than required for the stated privacy level (although they might still have some valid DP guarantees). In contrast, when each client guarantees DP independently via Theorem~\ref{thm:localAvg-privacy-single-app}, the global model will be 
$(\epsilon_{max},\delta_{max})$-DP w.r.t. any single training data sample by parallel composition with $\epsilon_{max} = \max\{\epsilon_1,\dots,\epsilon_M\}, \delta_{max} = \max\{\delta_1,\dots,\delta_M\}$, where $\epsilon_m, \delta_m$ are the parameters used by client $m$. In all the experiments in this paper, we use a common $(\epsilon,\delta)$ budget shared by all the clients.


\paragraph{DP with virtual PVI clients}

\begin{thm}
\label{thm:virtual-clients-privacy-single-app}
Assume the change in the model parameters $\| \lambda_{m_k}^* - \lambda^{(s-1)} \|_2 \leq C, k=1,\dots,N_m$ for some known constant $C$, where $\lambda_{m_k}^*$ is a proposed solution to \Eqref{eq:virtual-ELBO}, and $\lambda^{(s-1)}$ is the vector of common initial values. Then releasing $\Delta \tilde \lambda_m^*$ 
is $(\varepsilon,\delta)$-DP, with $\delta \in (0,1)$ s.t. $\varepsilon = \mathbb O (\delta, q_{sample}=1, 1, \mathcal G_{\sigma})$, 
when 
\begin{equation}
\label{eq:local_pvi_noisify_nat_params_change-app}
    \Delta \tilde \lambda_m^* = \sum_{k=1}^{N_m} \big( \lambda_{m_k}^* - \lambda^{(s-1)} \big) + \eta ,
\end{equation}
where $\eta \sim \mathcal N (0, \sigma^2 \cdot I)$.
\end{thm}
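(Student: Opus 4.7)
The plan is to mirror the structure of the proof of Theorem~\ref{thm:localAvg-privacy-single-app} for local averaging, exploiting the fact that virtual PVI clients are built by partitioning the local data of client $m$ into $N_m$ non-overlapping shards. The key observation is that a pair of neighbouring datasets in the sense of Definition~\ref{def:dp} differs in at most one sample, and because the shards are disjoint this single differing sample can affect at most one of the virtual optimisers, say $\lambda_{m_1}^*$, leaving all other $\lambda_{m_k}^*, k \ne 1$, unchanged.

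Given that localisation, the first step is to compute the $\ell_2$ sensitivity of the underlying query $f(x) = \sum_{k=1}^{N_m} (\lambda_{m_k}^* - \lambda^{(s-1)})$. For neighbouring inputs $x, x'$ with corresponding optimisers $\lambda_{m_k}^*$ and $\lambda_{m_k'}^*$ that agree for $k \ne 1$, the triangle inequality together with the assumed bound $\|\lambda_{m_k}^* - \lambda^{(s-1)}\|_2 \le C$ yields
\begin{equation*}
\bigl\| f(x) - f(x') \bigr\|_2 = \bigl\| (\lambda_{m_1}^* - \lambda^{(s-1)}) - (\lambda_{m_1'}^* - \lambda^{(s-1)}) \bigr\|_2 \le \|\lambda_{m_1}^* - \lambda^{(s-1)}\|_2 + \|\lambda_{m_1'}^* - \lambda^{(s-1)}\|_2 \le 2C.
\end{equation*}
Thus the sum query has sensitivity bounded by $2C$, exactly as in the local-averaging case, the only difference being that here we do not divide by $N_m$.

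The second step is to apply the Gaussian mechanism (Definition~\ref{def:Gaussian-mehanism}) with sensitivity $2C$ and noise standard deviation $\sigma$ (absorbing the constant $2C$ into $\sigma$ in the usual way), which yields a single $(\varepsilon, \delta)$-DP release with the privacy parameters reported by the accounting oracle $\mathbb O$ at sampling ratio $q_{sample}=1$ and composition depth $1$, i.e.~$\varepsilon = \mathbb O(\delta, 1, 1, \mathcal G_\sigma)$. Since the published quantity $\Delta \tilde \lambda_m^*$ is exactly the noised query, the claim follows.

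I do not expect a genuine obstacle here: the proof is essentially identical to that of Theorem~\ref{thm:localAvg-privacy-single-app}, and the only minor subtlety is making clear why the single-sample change really does propagate into at most one of the $N_m$ virtual optimisations — this relies on the non-overlapping shard partition (step 1 of Algorithm~\ref{alg:virtual_clients}) and on the observation that the cavity distribution $p^{(s-1)}_{\setminus m,k}$ in \Eqref{eq:virtual-pvi-cavity}, which also enters the $k$th local objective in \Eqref{eq:virtual-ELBO}, depends only on the previous-round parameters and hence is not a function of the current sensitive data used in this release.
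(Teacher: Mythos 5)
Your proof is correct and takes essentially the same approach as the paper: the paper's own proof is simply ``almost the same as the proof of Theorem~\ref{thm:localAvg-privacy-single-app}'', i.e.\ exactly your argument --- the single differing sample lands in one disjoint shard so only one summand changes, the triangle inequality bounds the $\ell_2$ sensitivity of the sum query by $2C$, and a single call to the Gaussian mechanism with the accounting oracle at $q_{sample}=1$, $T=1$ gives the claim (here without the $1/N_m$ post-processing step of the local-averaging case). Your closing observation about the disjoint partition and the cavity distribution merely makes explicit the ``w.l.o.g.\ only $\lambda_{m_1}^*$ is affected'' step that the paper leaves implicit.
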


\begin{proof}
Almost the same as the proof of Theorem~\ref{thm:localAvg-privacy-single-app}.
\end{proof}

\begin{cor}
\label{cor:virtual-clients-privacy-app}
A composition of $S$ global updates with virtual PVI clients using a norm bound $C$ for clipping 
is $(\varepsilon,\delta)$-DP, with $\delta \in (0,1)$ s.t. $\varepsilon = \mathbb O (\delta, q_{sample}=1, S, \mathcal G_{\sigma})$.
\end{cor}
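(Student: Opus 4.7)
The plan is to mirror the proof of Corollary~\ref{cor:localAvg-privacy-app} essentially verbatim, since the structural setup for virtual PVI clients parallels that of local averaging: each global update consists of a single Gaussian-mechanism release built from clipped per-partition parameter changes, and composition across $S$ global updates is handled directly by the accounting oracle $\mathbb O$.

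First, I would note that the norm bound assumption $\| \lambda_{m_k}^* - \lambda^{(s-1)} \|_2 \leq C$ required by Theorem~\ref{thm:virtual-clients-privacy-single-app} is enforced in practice by clipping each per-partition parameter change to an $\ell_2$-ball of radius $C$ before summing. Given this clipping, Theorem~\ref{thm:virtual-clients-privacy-single-app} applies directly to each single global update $s$, showing that the release of $\Delta \tilde{\lambda}_m^{*,(s)}$ corresponds to a single invocation of the Gaussian mechanism $\mathcal G_{\sigma}$ with the stated sensitivity. Since the server's update of the global model from the noised client releases is a purely deterministic post-processing step, it does not affect the privacy guarantee (by immunity of DP to post-processing, as recalled in Section~\ref{sec:DP-background}).

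Next, I would compose across the $S$ global updates. Each round consumes sensitive data exactly once (no subsampling on the client level, so $q_{sample}=1$), and the rounds constitute an adaptive sequence of mechanisms because the global approximation used as cavity in round $s$ depends on the outputs of previous rounds. The accounting oracle $\mathbb O$ is, by Definition~\ref{def:oracle}, designed precisely to bound the cumulative $(\varepsilon,\delta)$ guarantee of a $T$-fold (adaptive) composition of a subsampled base mechanism; invoking it with $T = S$, $q_{sample}=1$, base mechanism $\mathcal G_{\sigma}$, and target $\delta$ yields the claimed $\varepsilon = \mathbb O(\delta, 1, S, \mathcal G_{\sigma})$.

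I do not anticipate any real obstacle, as the argument is a one-line application of Theorem~\ref{thm:virtual-clients-privacy-single-app} together with composition via the accounting oracle. The only subtlety worth flagging explicitly is that sensitivities across different clients do not interact adversarially within a single round (neighbouring datasets differ in a single record held by exactly one client, as stipulated by the sample-level neighbourhood in Section~\ref{sec:DP-PVI}), so per-round privacy is governed by a single Gaussian mechanism call rather than by parallel composition of multiple Gaussian mechanisms. Thus, the corollary follows immediately.
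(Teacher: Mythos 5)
Your proposal is correct and follows essentially the same route as the paper: apply Theorem~\ref{thm:virtual-clients-privacy-single-app} to each global update (with clipping enforcing the norm bound), then compose the $S$ resulting Gaussian-mechanism releases via the accounting oracle with $q_{sample}=1$ and $T=S$. The additional remarks you make on post-processing, adaptivity, and the single-client sensitivity analysis are consistent with the paper's treatment and merely make explicit what its one-line proof leaves implicit.
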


\begin{proof}
Since each global update is DP by Theorem~\ref{thm:virtual-clients-privacy-single-app} when we enforce the norm bound by clipping, the result follows immediately by composing over the global updates.
\end{proof}


\begin{thm}
\label{thm:virtual-clients-secure-aggr-noise-app}
Assume there are $M$ real clients adding virtual clients, and access to a trusted aggregator. Then for any given privacy parameters $\veps, \delta$, the noise standard deviation added by a single client can be scaled as $\mathcal O (\frac{1}{\sqrt M} )$ while guaranteeing the same privacy level.
\end{thm}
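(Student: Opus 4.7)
The plan is to mimic the proof of Theorem~\ref{thm:local-avg-secure-aggr-noise-app}, adapting it to the virtual PVI client update rule in \Eqref{eq:local_pvi_noisify_nat_params_change-app}. First, I would fix the reference noise level: let $\sigma_0$ be the standard deviation of Gaussian noise that, together with a clipping norm bound $C$ on each per-virtual-client change $\lambda_{m_k}^* - \lambda^{(s-1)}$, is sufficient for a single client to release $\Delta\tilde\lambda_m^*$ with the target $(\varepsilon,\delta)$-DP guarantee via Theorem~\ref{thm:virtual-clients-privacy-single-app}. This $\sigma_0$ is the benchmark we want to match at the level of the aggregated global update.

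Next, I would assume that each of the $M$ real clients instead draws its own noise $\eta_m \sim \mathcal N(0,\sigma^2 I)$ with a smaller $\sigma$ and forms a local noisy update as in \Eqref{eq:local_pvi_noisify_nat_params_change-app}. Under access to a trusted aggregator, the only quantity ever observed by the server is the sum
\begin{equation}
\sum_{m=1}^M \Delta\tilde\lambda_m^* \;=\; \sum_{m=1}^M\sum_{k=1}^{N_m}\bigl(\lambda_{m_k}^* - \lambda^{(s-1)}\bigr) \;+\; \sum_{m=1}^M \eta_m.
\end{equation}
By independence of the $\eta_m$, the aggregated noise is distributed as $\mathcal N(0, M\sigma^2 I)$. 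Since only a single individual's data is changed between neighbouring datasets, the sensitivity of $\sum_{m,k}(\lambda_{m_k}^* - \lambda^{(s-1)})$ is still bounded by $2C$ (exactly as in the single-client argument of Theorem~\ref{thm:virtual-clients-privacy-single-app}). Therefore the aggregated release corresponds to a single Gaussian mechanism with sensitivity $2C$ and effective standard deviation $\sigma\sqrt{M}$.

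To guarantee the same DP level on the aggregated quantity as the single-client release with $\sigma_0$, it suffices to require $\sigma\sqrt{M} \geq \sigma_0$, i.e.\ $\sigma \geq \sigma_0/\sqrt{M}$, giving the claimed $\mathcal O(1/\sqrt M)$ scaling for the per-client noise standard deviation. The privacy of the entire protocol over $S$ global updates then follows by the same composition argument used in Corollary~\ref{cor:virtual-clients-privacy-app}, and post-processing (the server's subsequent global update step) preserves the guarantee.

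The only subtle point, and the main thing to be careful about, is the threat model: the bound above guarantees DP at the level of the aggregated message released by the trusted aggregator, not at the level of each client's individual noisy contribution (which on its own would not meet the target $\varepsilon$ anymore). This is exactly the trusted-aggregator assumption, analogous to the remark following Theorem~\ref{thm:local-avg-secure-aggr-noise-app}, and no additional work beyond the variance matching calculation is needed.
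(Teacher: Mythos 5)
Your proposal is correct and matches the paper's approach: the paper proves this theorem by reference to the proof of Theorem~\ref{thm:local-avg-secure-aggr-noise-app} ``with obvious modifications,'' and your argument --- matching the aggregated noise variance $M\sigma^2$ against the single-client reference $\sigma_0^2$ while noting the sensitivity stays $2C$ under the sum across clients --- is exactly that adaptation, with the modifications spelled out explicitly. Your closing remark about the guarantee holding only at the level of the aggregated release also mirrors the paper's own discussion following the theorem.
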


\begin{proof}
Similar to the proof for Theorem~\ref{thm:local-avg-secure-aggr-noise-app} with obvious modifications.
\end{proof}

As with local averaging,   Theorem~\ref{thm:virtual-clients-secure-aggr-noise-app} gives joint DP guarantees on the global model level. In contrast, when each client guarantees DP independently with  Theorem~\ref{thm:virtual-clients-privacy-single-app}, the global model will be 
$(\epsilon_{max},\delta_{max})$-DP w.r.t. any single training data sample by parallel composition with $\epsilon_{max} = \max\{\epsilon_1,\dots,\epsilon_M\}, \delta_{max} = \max\{\delta_1,\dots,\delta_M\}$, where $\epsilon_m, \delta_m$ are the parameters used by client $m$. And again, in all the experiments we use a common $(\epsilon,\delta)$ budget shared by all the clients.


\section{Appendix: experimental details}
\label{sec:appendix-experiments}

This appendix contains details of the experimental settings omitted from Section~\ref{sec:experiments}.

With Adult data, we first combine the training and test sets, and then randomly split the whole data with $80\%$ for training and $20\%$ for validation. 
With MIMIC-III data, we first preprocessing the data for the in-hospital mortality prediction task as detailed by \cite{Harutyunyan2017-di}.\footnote{The code for preprocessing is available from \url{https://github.com/YerevaNN/mimic3-benchmarks}}. Since the preprocessed data is very unbalanced and leaves little room for showing the differences between the methods (a constant prediction can reach close to $90$\% accuracy while a non-DP prediction can do some percentage points better), we first re-balance the data by keeping only as many majority label samples as there are in the minority class. This leaves $5594$ samples, which are then randomly split into training and validation sets, giving a total of $4475$ samples of training data to be divided over all the clients.

We divide the data between $M$ clients using the following scheme%
\footnote{The data splitting scheme was originally introduced by \cite{sharma_2019} in a workshop paper that combines DP with PVI. The current paper is otherwise completely novel and not based on the earlier workshop version.}%
:
half of the clients are small and the other half large, with data sizes given by 
$$n_{small} = \big\lfloor \frac{n}{M} (1-\rho) \big\rfloor, \quad
n_{large} = \big\lfloor \frac{n}{M} (1+\rho) \big\rfloor ,$$
with $\rho \in [0,1]$. $\rho=0$ gives equal data sizes for everyone while $\rho=1$ means that the small clients have no data. For creating unbalanced data distributions, denote the fraction of majority class samples by $\lambda$. Then the target fraction of majority class samples for the small clients is parameterized by $\kappa$: 
$$ \lambda_{small}^{target} = \lambda + (1-\lambda)\cdot \kappa ,$$
where having $\kappa = 1$ means small clients only have majority class labels, and $\kappa=-\frac{\lambda}{1-\lambda}$ implies small clients have only minority class labels. For large clients the labels are divided randomly.

We use the following splits in the experiments:
\begin{table}[H]
\centering
\begin{tabular}{|| >{\centering\arraybackslash}p{2.5cm} | >{\centering\arraybackslash}p{1cm} | >{\centering\arraybackslash}p{1cm} || >{\centering\arraybackslash}p{1.5cm} | >{\centering\arraybackslash}p{1.5cm} | >{\centering\arraybackslash}p{1.5cm}||}
\hline
 & $\rho$ & $\kappa$ & $n_{small}$ & $\lambda_{small}$ & $\lambda_{large}$ \\
 \hline\hline
balanced & 0 & 0 & $2442$ & $.76$ & $\simeq.76$ \\
\hline
unbalanced 1 & .75 & .95 & $610$ & $.99$ & $\simeq.73$ \\
\hline
unbalanced 2 & .7 & -3 & $732$ & $.03$ & $\simeq.89$ \\
\hline
\end{tabular}
\caption{\label{table:data-splits-adult10} Adult data, 10 clients data split.}
\end{table}

\begin{table}[H]
\centering
\begin{tabular}{|| >{\centering\arraybackslash}p{2.5cm} | >{\centering\arraybackslash}p{1cm} | >{\centering\arraybackslash}p{1cm} || >{\centering\arraybackslash}p{1.5cm} | >{\centering\arraybackslash}p{1.5cm} | >{\centering\arraybackslash}p{1.5cm}||}
\hline
 & $\rho$ & $\kappa$ & $n_{small}$ & $\lambda_{small}$ & $\lambda_{large}$ \\
 \hline\hline
balanced & 0 & 0 & 122 & .75 & .7-.8 \\
\hline
unbalanced 1 & .75 & .95 & 30 & .97 & .67-.78\\
\hline
unbalanced 2 & .7 & -3 & 36 & .03 & .85-.93 \\
\hline
\end{tabular}
\caption{\label{table:data-splits-adult200} Adult data, 200 clients data split.}
\end{table}

\begin{table}[H]
\centering
\begin{tabular}{|| >{\centering\arraybackslash}p{2.5cm} | >{\centering\arraybackslash}p{1cm} | >{\centering\arraybackslash}p{1cm} || >{\centering\arraybackslash}p{1.5cm} | >{\centering\arraybackslash}p{1.5cm} | >{\centering\arraybackslash}p{1.5cm}||}
\hline
 & $\rho$ & $\kappa$ & $n_{small}$ & $\lambda_{small}$ & $\lambda_{large}$ \\
 \hline\hline
balanced & 0 & 0 & 447 & .5 & $\simeq.5$ \\
\hline
unbalanced 1 & .75 & .95 & 111 & .97 & .41-.44\\
\hline
unbalanced 2 & .7 & -.5 & 134 & .25 & .51-.58 \\
\hline
\end{tabular}
\caption{\label{table:data-splits-mimic10} Balanced MIMIC-III data, 10 clients data split.}
\end{table}

We use Adam \citep{adam_2014} to optimise all objective functions. 
In general, depending e.g. on the update schedule, even the non-DP PVI can diverge (see \citealt{Ashman_2022}). We found that DP-PVI using any of our approaches is more prone to diverge than non-DP PVI, while DP optimisation is more stable than local averaging or virtual PVI clients. To improve model stability, we use some damping in all the experiments. When damping with a factor $\rho \in (0,1]$, at global update $s$ the model parameters $\lambda^{(s)}$ are set to 
$$(1-\rho)\cdot \lambda^{(s-1)} + \rho \cdot \lambda^{(s)} .$$ 

We use grid search to optimise all hyperparameters in terms of predictive accuracy and model log-likelihood using 1 random seed, and then run 5 independent random seeds using the best hyperparameters from the 1 seed runs. 
The reported results with 5 random seeds are the best results in terms of log-likelihood for each model. 
With BNNs using local averaging or virtual PVI clients some seeds diverged when using the hyperparameters optimised using a single seed. These seeds were rerun with the same hyperparameter settings to produce 5 full runs. This might give the methods some extra advantage in the comparison, but since they still do not work too well, we can surmise that the methods are not well suited for the task.

To approximate the posterior predictive distributions we use a Monte Carlo estimate with 100 samples: $$p(y_*|x_*, y, x) \simeq \frac{1}{100} \sum_{i_{MC}=1}^{100} p(y_*| x_*, \theta_{i_{MC}}), \quad \theta_{i_{MC}} \sim q(\theta) .$$

\end{document}